\providecommand*{\input@path}{}\edef\input@path{{./scoop-latex/}\input@path}
\title[TGV of the Normal and Applications to Mesh Denoising]{Total Generalized Variation of the Normal Vector Field and Applications to Mesh Denoising}
\author[L. Baumgärtner]{Lukas Baumgärtner\orcidlink{0000-0003-1007-4815}}
\address[L. Baumgärtner]{Institut für Mathematik, Humboldt University of Berlin, 10099 Berlin, Germany}
\email{\detokenize{lukas.baumgaertner@hu-berlin.de}}
\urladdr{https://www.mathematik.hu-berlin.de/en/people/mem-vz/1693318}
\author[R. Bergmann]{Ronny Bergmann\orcidlink{0000-0001-8342-7218}}
\address[R. Bergmann]{Norwegian University of Science and Technology, Department of Mathematical Sciences, NO-7041 Trondheim, Norway}
\email{\detokenize{ronny.bergmannn@ntnu.no}}
\urladdr{https://www.ntnu.edu/employees/ronny.bergmann}
\author[R. Herzog]{Roland Herzog\orcidlink{0000-0003-2164-6575}}
\address[R. Herzog]{Interdisciplinary Center for Scientific Computing, Heidelberg University, 69120 Heidelberg, Germany and Institute for Mathematics, Heidelberg University, 69120 Heidelberg, Germany}
\email{\detokenize{roland.herzog@iwr.uni-heidelberg.de}}
\urladdr{https://scoop.iwr.uni-heidelberg.de}
\author[S. Schmidt]{Stephan Schmidt\orcidlink{0000-0002-4888-0794}}
\address[S. Schmidt]{University of Trier, Universitätsring 15, 54296 Trier, Germany}
\email{\detokenize{stephan.schmidt@uni-trier.de}}
\urladdr{https://www.math.uni-trier.de/\string~schmidt}
\author[M. Weiß]{Manuel Weiß\orcidlink{0000-0003-2164-6575}}
\address[M. Weiß]{Interdisciplinary Center for Scientific Computing, Heidelberg University, 69120 Heidelberg, Germany}
\email{\detokenize{roland.herzog@iwr.uni-heidelberg.de}}
\urladdr{https://scoop.iwr.uni-heidelberg.de}
\thanks{This work was supported by DFG grants HE 6077/10--2 and SCHM~3248/2--2 within the Priority Program SPP~1962 (Non-smooth and Complementarity-based Distributed Parameter Systems: Simulation and Hierarchical Optimization), which is gratefully acknowledged.}
\date{\today}
\dedicatory{}
\begin{document}

\begin{abstract}
We propose a novel formulation for the second-order total generalized variation (TGV) of the normal vector on an oriented, triangular mesh embedded in $\R^3$.
The normal vector is considered as a manifold-valued function, taking values on the unit sphere.
Our formulation extends previous discrete TGV models for piecewise constant scalar data that utilize a Raviart-Thomas function space.
To extend this formulation to the manifold setting, a tailor-made tangential Raviart-Thomas type finite element space is constructed in this work.
The new regularizer is compared to existing methods in mesh denoising experiments.

\end{abstract}

\keywords{total generalized variation, manifold-valued data, mesh denoising, split Bregman iteration}

\makeatletter
\ltx@ifpackageloaded{hyperref}{%
\subjclass[2010]{\href{https://mathscinet.ams.org/msc/msc2020.html?t=65D18}{65D18}, \href{https://mathscinet.ams.org/msc/msc2020.html?t=49Q10}{49Q10}, \href{https://mathscinet.ams.org/msc/msc2020.html?t=49M15}{49M15}, \href{https://mathscinet.ams.org/msc/msc2020.html?t=90C30}{90C30}, \href{https://mathscinet.ams.org/msc/msc2020.html?t=65K05}{65K05}}
}{%
\subjclass[2010]{65D18, 49Q10, 49M15, 90C30, 65K05}
}
\makeatother

\maketitle

\section{Introduction}
\label{section:introduction}

The total variation (TV) seminorm is a commonly used regularizer for various kinds of inverse problems.
It was first proposed as a regularizer for image denoising problems in \cite{RudinOsherFatemi:1992:1} and is ever since omnipresent in the field of mathematical image processing.
On a bounded domain $\Omega \subseteq \R^2$, the TV-seminorm of a function $u \in L^1(\Omega)$ can be defined as
\begin{equation}
	\label{eq:tv:primal}
	\TV(u)
	\coloneqq
	\sup \setDef[auto]{\int_\Omega u \div \bv \dx}{\bv \in \cC_c^1(\Omega, \R^2) \text{ \st\ } \norm{\bv}_{L^\infty(\Omega, \R^2)} \le 1}
	,
\end{equation}
where $\cC_c^1(\Omega, \R^2)$ is the set of continuously differentiable vector fields with compact support in $\Omega$.
Unlike smooth regularizers, the TV-seminorm is capable of removing noise while preserving discontinuities in the data.
However, it suffers from the so-called staircasing effect, meaning that discontinuous reconstructions with several small jumps occur even where smoother ones are desired.

The imaging community has proposed numerous modifications to the total variation regularizer in order to overcome the staircasing effect for imaging problems; see \eg \cite{ChambolleLions:1997:1,ChanTai:2004:1,ChanEsedogluPark:2010:1}.
One of the most popular extensions to this day is the total generalized variation ($\TGV$), introduced in \cite{BrediesKunischPock:2010:1}.
Given weights $\alpha_0, \alpha_1 \in \R_{>0}$ its second-order non-symmetric version reads
\begin{multline}
	\label{eq:tgv:primal}
	\TGV_{(\alpha_0, \alpha_1)}^2(u)
	\\
	=
	\sup \setDef[auto]{\int_\Omega u \div \Div V \dx}{V \in \cC_c^2(\Omega, \R^{2 \times 2})
		\text{ \st}
		\paren[auto]\{\}{%
			\begin{aligned}
				\norm{V}_{L^\infty(\Omega, \R^{2 \times 2})}
				&
				\le
				\alpha_0
				\\
				\norm{\Div V}_{L^\infty(\Omega, \R^2)}
				&
				\le
				\alpha_1
			\end{aligned}
		}
	}
	,
\end{multline}
where $\Div V$ denotes the row-wise divergence operator of the twice continuously differentiable matrix-valued field $V \in \cC_c^2(\Omega, \R^{2 \times 2})$.
Often the above formulation is reformulated using Fenchel duality to obtain
\begin{equation}
	\label{eq:tgv:dual}
	\TGV_{(\alpha_0, \alpha_1)}^2(u)
	=
	\min_{\bw \in \operatorname{BV}(\Omega,\R^2)}
	\alpha_1 \, \norm{\nabla u - \bw}_{\cM(\Omega, \R^2)}
	+
	\alpha_0 \, \norm{\nabla \bw}_{\cM(\Omega, \R^{2 \times 2})}
	,
\end{equation}
where $\operatorname{BV}$ is the space of bounded variation, $\nabla$ is the distributional gradient and $\norm{\cdot}_{\cM}$ is the Radon norm; see \cite{HollerKunisch:2014:1} for more details.
Many authors also consider the symmetric variant, which utilizes the symmetrized gradient operator in the $\alpha_0$-term above.
Both variants of the second-order $\TGV$ regularizer favor piecewise linear instead of piecewise constant reconstructions and thereby overcome the staircasing effect.

Notice that in the case of piecewise constant functions~$u$, both variants of $\TGV$ reduce to $\alpha_1 \, \TV$ when taken literally.
This has led to a number of application specific discrete formulations of $\TGV$, which are not equivalent to the continuous formulation.

A $\TGV$ formulation for graph signals was proposed in~\cite{OnoYamadaKumazawa:2015:1}.
This concept was subsequently applied to the dual graph of a triangular mesh in \cite{GongSchullckeKruegerZiolekZhangMuellerLisseMoeller:2018:1} to postulate the earliest version of $\TGV$ for piecewise constant data on triangular meshes.
It was observed in \cite[Section~2.3.3]{BaumgaertnerBergmannHerzogSchmidtVidalNunez:2023:1} that this formulation can be interpreted as using a divergence-like operator in the $\alpha_0$-term in \eqref{eq:tgv:dual} instead of a gradient.
We refer the reader to \cite{BrinkmannBurgerGrah:2018:1}, where various differential operators in the $\alpha_0$-term were originally investigated.
The numerical results presented there strongly suggest that the divergence operator generally leads to oscillations, which are also present in the numerical results of \cite{GongSchullckeKruegerZiolekZhangMuellerLisseMoeller:2018:1}.

To avoid these oscillations, we have proposed an improved formulation of $\TGV$ suitable for piecewise constant functions on triangular meshes in \cite[Section~3]{BaumgaertnerBergmannHerzogSchmidtVidalNunez:2023:1}.
Our formulation utilizes a gradient-like operator for the $\alpha_0$-term and a lowest-order Raviart--Thomas function for the auxiliary variable~$\bw$.

\subsection*{Total Generalized Variation for Mesh Denoising}

Two alternative formulations of $\TGV$ for piecewise constant functions were proposed in \cite{LiuLiWangLiuChen:2022:1,ZhangHeWang:2022:1} for the purpose of mesh denoising.
On the one hand, the authors of \cite{LiuLiWangLiuChen:2022:1} proposed a formulation based on the $\TGV$ on graphs from \cite{GongSchullckeKruegerZiolekZhangMuellerLisseMoeller:2018:1} but added an additional weight function into the divergence operator.
The authors of \cite{ZhangHeWang:2022:1}, on the other hand, proposed a novel way to compute discrete (second-order) derivatives of piecewise constant functions.
They replaced the differential operators in \eqref{eq:tgv:dual} by their discrete analogs to obtain a formulation of $\TGV$.
Both \cite{LiuLiWangLiuChen:2022:1,ZhangHeWang:2022:1} successfully utilize their respective formulations for the purpose of mesh denoising based on the total generalized variation of the unit normal vector.
A slightly different approach to a discrete formulation of $\TGV$ was taken by \cite{ZhangPeng:2022:1} for continuous, piecewise linear data on triangular meshes.
This approach requires the definition of normal vectors at mesh vertices to be utilized for mesh denoising.

It is worth mentioning that \cite{LiuLiWangLiuChen:2022:1,ZhangHeWang:2022:1,ZhangPeng:2022:1} treat the normal vector as an element of $\R^3$ and not as an element of the unit sphere~$\sphere \coloneqq \setDef[normal]{\bn \in \R^3}{\abs{\bn}_2 = 1} \subseteq \R^3$.
A variant of $\TV$ of the normal vector of a mesh is developed in \cite{ZhangWuZhangDeng:2015:1} as well as in \cite{WuZhengCaiFu:2015:1,BergmannHerrmannHerzogSchmidtVidalNunez:2020:2}, where the latter two take the manifold nature of~$\sphere$ into account.
As these are based on first-order $\TV$, but not $\TGV$, the staircasing effect also occurs, resulting in poor reconstructions of curved areas.
To overcome this, the concept of total general variation needs to be reinterpreted for normal vector data since the sphere is not a linear space.
While the $\TGV$ seminorm \eqref{eq:tgv:dual} favors piecewise linear functions and thus piecewise constant gradients, the sought-after $\TGV$ formulation for the normal vector should favor areas of constant principal curvatures.

\subsection*{Contributions}

The goal of this paper is to propose a formulation of $\TGV$ for normal vector fields on triangular meshes that favors areas of constant (discrete) curvature.
To this end, we propose an adaptation of the discrete $\TGV$ formulation for piecewise constant real-valued data from \cite{BaumgaertnerBergmannHerzogSchmidtVidalNunez:2023:1}.
We construct a special Raviart--Thomas-like finite element space for the analog of the auxiliary variable~$\bw$ in \eqref{eq:tgv:dual} that captures derivative information of the normal vector field.
It is worth mentioning here that concepts of total generalized variation for manifold-valued data have been considered in \cite{BrediesHollerStorathWeinmann:2018:1}.
Closely related approaches based on second-order total variation have been taken in \cite{BacakBergmannSteidlWeinmann:2016:1, BergmannFitschenPerschSteidl:2017:1, BergmannFitschenPerschSteidl:2017:2}.
However, all of these approaches work with data on two-dimensional Cartesian grids.
While some of our ideas are similar, we exploit the close relation of the mesh to its normal vector and the fact that the normal vector field maps into $\sphere$-valued instead of a general manifold.

\subsection*{Organization}

The structure of this paper is as follows.
In \cref{section:total-generalized-variation-piecewise-constant}, we recap the discrete formulation for piecewise constant data on triangular meshes from \cite{BaumgaertnerBergmannHerzogSchmidtVidalNunez:2023:1}, which employs a Raviart--Thomas finite element function.
Next, in \cref{section:discrete-tgv-of-the-normal-vector-field} we review the total variation (TV) of the normal.
Then, we extend the formulation from \cite{BaumgaertnerBergmannHerzogSchmidtVidalNunez:2023:1} to obtain a novel discrete formulation of total generalized variation of the normal vector.
Therein, we utilize a tailor-made Raviart--Thomas space for the auxiliary function~$\bw$ whose function values represent part of the derivative (push-forward) of the normal vector field, just as $\bw$ in \eqref{eq:tgv:dual} represents part of the derivative of the scalar data~$u$.
We then address the numerical realization of the resulting method in \cref{section:numerical-realization}.
Finally, we present numerical results for mesh denoising problems in \cref{section:numerical-results-for-mesh-denoising}, comparing our approach to \cite{LiuLiWangLiuChen:2022:1} and \cite{ZhangHeWang:2022:1}.

\section{Total Generalized Variation with Piecewise Constant Finite Elements}
\label{section:total-generalized-variation-piecewise-constant}

In this section, we recap the formulation for $\R$-valued data from \cite{BaumgaertnerBergmannHerzogSchmidtVidalNunez:2023:1}.

\subsection{Notation and Finite Element Spaces}
\label{subsection:notation-and-finite-element-spaces}

Let $\mesh$ be a triangulated and orientable surface mesh embedded in $\R^3$.
We denote its set of triangles by~$\triangles$, its edge set by~$\edges$ and its vertex set by~$\vertices$.
Every edge $\edge \in \edges$ is assumed to have exactly two adjacent triangles, which we denote by $\eplus{\triangle}$ and $\eminus{\triangle}$.
This choice is arbitrary but will remain fixed, even after deformations.
On an edge~$\edge$, we define $\eplus{\bmu}$ as the co-normal vector to the triangle $\eplus{\triangle}$, \ie, the unit vector orthogonal to $\edge$ that lies in the plane of $\eplus{\triangle}$, pointing away from $\eplus{\triangle}$.
We define the other co-normal vector~$\eminus{\bmu}$ similarly; see \cref{figure:co-normal-vectors}.

\tdplotsetmaincoords{75}{140}
\begin{figure}[htb]
	\centering
	\begin{subfigure}{0.49\linewidth}
		\centering
		\begin{tikzpicture}[tdplot_main_coords, > = stealth]
			\pgfmathsetmacro{\rotationangleC}{0}
			\pgfmathsetmacro{\rotationangleD}{130}
			\pgfmathsetmacro{\arrowscale}{0.3}

			\pgfmathsetmacro{\Sx}{1.5}
			\pgfmathsetmacro{\Sy}{4}
			\pgfmathsetmacro{\Sz}{0}

			\pgfmathsetmacro{\Ax}{0}
			\pgfmathsetmacro{\Ay}{0}
			\pgfmathsetmacro{\Az}{0}
			\pgfmathsetmacro{\Bx}{3}
			\pgfmathsetmacro{\By}{0}
			\pgfmathsetmacro{\Bz}{0}

			\pgfmathsetmacro{\Cx}{\Sx}
			\pgfmathsetmacro{\Cy}{{cos(\rotationangleC)*\Sy - sin(\rotationangleC)*\Sz}}
			\pgfmathsetmacro{\Cz}{{sin(\rotationangleC)*\Sy + cos(\rotationangleC)*\Sz}}
			\pgfmathsetmacro{\Dx}{\Sx}
			\pgfmathsetmacro{\Dy}{{cos(\rotationangleD)*\Sy - sin(\rotationangleD)*\Sz}}
			\pgfmathsetmacro{\Dz}{{sin(\rotationangleD)*\Sy + cos(\rotationangleD)*\Sz}}

			\pgfmathsetmacro{\ABx}{{\Bx-\Ax}}
			\pgfmathsetmacro{\ABy}{{\By-\Ay}}
			\pgfmathsetmacro{\ABz}{{\Bz-\Az}}
			\pgfmathsetmacro{\ACx}{{\Cx-\Ax}}
			\pgfmathsetmacro{\ACy}{{\Cy-\Ay}}
			\pgfmathsetmacro{\ACz}{{\Cz-\Az}}
			\pgfmathsetmacro{\ADx}{{\Dx-\Ax}}
			\pgfmathsetmacro{\ADy}{{\Dy-\Ay}}
			\pgfmathsetmacro{\ADz}{{\Dz-\Az}}

			\coordinate (A) at (\Ax,\Ay,\Az);
			\coordinate (B) at (\Bx,\By,\Bz);
			\coordinate (M) at ($(A)!0.5!(B)$);

			\coordinate (C) at (\Cx,\Cy,\Cz);
			\coordinate (D) at (\Dx,\Dy,\Dz);

			\tdplotcrossprod(\ABx,\ABy,\ABz)(\ACx,\ACy,\ACz)
			\pgfmathsetmacro{\nplusx}{\arrowscale*\tdplotresx}
			\pgfmathsetmacro{\nplusy}{\arrowscale*\tdplotresy}
			\pgfmathsetmacro{\nplusz}{\arrowscale*\tdplotresz}

			\tdplotcrossprod(\ADx,\ADy,\ADz)(\ABx,\ABy,\ABz)
			\pgfmathsetmacro{\nminusx}{\arrowscale*\tdplotresx}
			\pgfmathsetmacro{\nminusy}{\arrowscale*\tdplotresy}
			\pgfmathsetmacro{\nminusz}{\arrowscale*\tdplotresz}

			\pgfmathsetmacro{\muplusx}{\nplusx}
			\pgfmathsetmacro{\muplusy}{{cos(90)*\nplusy - sin(90)*\nplusz}}
			\pgfmathsetmacro{\muplusz}{{sin(90)*\nplusy + cos(90)*\nplusz}}

			\pgfmathsetmacro{\muminusx}{\nminusx}
			\pgfmathsetmacro{\muminusy}{{cos(-90)*\nminusy - sin(-90)*\nplusz}}
			\pgfmathsetmacro{\muminusz}{{sin(-90)*\nminusy + cos(-90)*\nplusz}}

			\pgfmathsetmacro{\tEx}{3+\arrowscale}
			\pgfmathsetmacro{\tEy}{0}
			\pgfmathsetmacro{\tEz}{0}

			\draw [TolVibrantBlue, ->, very thick] (M) -- ++ (\muplusx,\muplusy,\muplusz) node [anchor = south]{$\eplus{\bmu}$};
			\draw [TolVibrantBlue, ->, very thick] (M) -- ++ (\muminusx,\muminusy,\muminusz) node [anchor = west]{$\eminus{\bmu}$};

			\draw [thick,fill = TolVibrantBlue, opacity = 0.5] (A) -- (B) -- (C) -- cycle;
			\node at (barycentric cs:A=1,B=1,C=1) {$T_{E_+}$};
			\draw [thick,fill = TolVibrantBlue, opacity = 0.5] (A) -- (B) -- (D) -- cycle;
			\node at (barycentric cs:A=1,B=1,D=1) {$T_{E_-}$};

			\draw [TolVibrantRed, ->, very thick] (M) -- ++ (\nplusx,\nplusy,\nplusz) node [anchor = south](N){$\eplus{\bn}$};
			\draw [TolVibrantRed, ->, very thick] (M) -- ++ (\nminusx,\nminusy,\nminusz) node [anchor = west]{$\eminus{\bn}$};

			\draw [TolVibrantOrange, ->, very thick] (M) -- ++ (\tEx,\tEy,\tEz) node [anchor = south]{$\bt_\edge$};

			\pgfmathsetmacro{\dotproductnmu}{\nplusx*\muminusx+\nplusy*\muminusy+\nplusz*\muminusz}
			\pgfmathsetmacro{\signnmu}{sign(\dotproductnmu)}

			\pgfmathsetmacro{\dotproductnn}{\nplusx*\nminusx+\nplusy*\nminusy+\nplusz*\nminusz}
			\pgfmathsetmacro{\normnplus}{sqrt(\nplusx*\nplusx+\nplusy*\nplusy+\nplusz*\nplusz)}
			\pgfmathsetmacro{\normnminus}{sqrt(\nminusx*\nminusx+\nminusy*\nminusy+\nminusz*\nminusz)}

			\pgfmathsetmacro{\angle}{acos(\dotproductnn/\normnplus/\normnminus)}
			\pgfmathsetmacro{\arccos}{acos(\dotproductnn/\normnplus/\normnminus)/360*2*3.1415}

			\coordinate (N) at ($(M)+(\nplusx,\nplusy,\nplusz)$);
			\draw [black,->, very thick] (N) -- ++ (\arccos*\signnmu*\muplusx,\arccos*\signnmu*\muplusy,\arccos*\signnmu*\muplusz) node [anchor = west]{$\logarithm{\eplus{\bn}}{\eminus{\bn}}$};

			\tdplotsetrotatedcoordsorigin{(M)}
			\path [fill = black, fill opacity = 0.2, tdplot_rotated_coords] (M) -- plot [variable = \t, domain = {\rotationangleC+90}:{\rotationangleD-90}](xyz spherical cs:radius = \normnplus, longitude = 0.001, latitude = \t);
		\end{tikzpicture}
	\end{subfigure}
	\hfill
	\begin{subfigure}{0.49\linewidth}
		\begin{tikzpicture}[tdplot_main_coords, > = stealth]
			\pgfmathsetmacro{\rotationangleC}{-50}
			\pgfmathsetmacro{\rotationangleD}{180}
			\pgfmathsetmacro{\arrowscale}{0.3}

			\pgfmathsetmacro{\Sx}{1.5}
			\pgfmathsetmacro{\Sy}{4}
			\pgfmathsetmacro{\Sz}{0}

			\pgfmathsetmacro{\Ax}{0}
			\pgfmathsetmacro{\Ay}{0}
			\pgfmathsetmacro{\Az}{0}
			\pgfmathsetmacro{\Bx}{3}
			\pgfmathsetmacro{\By}{0}
			\pgfmathsetmacro{\Bz}{0}

			\pgfmathsetmacro{\Cx}{\Sx}
			\pgfmathsetmacro{\Cy}{{cos(\rotationangleC)*\Sy - sin(\rotationangleC)*\Sz}}
			\pgfmathsetmacro{\Cz}{{sin(\rotationangleC)*\Sy + cos(\rotationangleC)*\Sz}}
			\pgfmathsetmacro{\Dx}{\Sx}
			\pgfmathsetmacro{\Dy}{{cos(\rotationangleD)*\Sy - sin(\rotationangleD)*\Sz}}
			\pgfmathsetmacro{\Dz}{{sin(\rotationangleD)*\Sy + cos(\rotationangleD)*\Sz}}

			\pgfmathsetmacro{\ABx}{{\Bx-\Ax}}
			\pgfmathsetmacro{\ABy}{{\By-\Ay}}
			\pgfmathsetmacro{\ABz}{{\Bz-\Az}}
			\pgfmathsetmacro{\ACx}{{\Cx-\Ax}}
			\pgfmathsetmacro{\ACy}{{\Cy-\Ay}}
			\pgfmathsetmacro{\ACz}{{\Cz-\Az}}
			\pgfmathsetmacro{\ADx}{{\Dx-\Ax}}
			\pgfmathsetmacro{\ADy}{{\Dy-\Ay}}
			\pgfmathsetmacro{\ADz}{{\Dz-\Az}}

			\coordinate (A) at (\Ax,\Ay,\Az);
			\coordinate (B) at (\Bx,\By,\Bz);
			\coordinate (M) at ($(A)!0.5!(B)$);

			\coordinate (C) at (\Cx,\Cy,\Cz);

			\coordinate (D) at (\Dx,\Dy,\Dz);

			\tdplotcrossprod(\ABx,\ABy,\ABz)(\ACx,\ACy,\ACz)
			\pgfmathsetmacro{\nplusx}{\arrowscale*\tdplotresx}
			\pgfmathsetmacro{\nplusy}{\arrowscale*\tdplotresy}
			\pgfmathsetmacro{\nplusz}{\arrowscale*\tdplotresz}

			\tdplotcrossprod(\ADx,\ADy,\ADz)(\ABx,\ABy,\ABz)
			\pgfmathsetmacro{\nminusx}{\arrowscale*\tdplotresx}
			\pgfmathsetmacro{\nminusy}{\arrowscale*\tdplotresy}
			\pgfmathsetmacro{\nminusz}{\arrowscale*\tdplotresz}

			\pgfmathsetmacro{\muplusx}{\nplusx}
			\pgfmathsetmacro{\muplusy}{{cos(90)*\nplusy - sin(90)*\nplusz}}
			\pgfmathsetmacro{\muplusz}{{sin(90)*\nplusy + cos(90)*\nplusz}}

			\pgfmathsetmacro{\muminusx}{\nminusx}
			\pgfmathsetmacro{\muminusy}{{cos(-90)*\nminusy - sin(-90)*\nplusz}}
			\pgfmathsetmacro{\muminusz}{{sin(-90)*\nminusy + cos(-90)*\nplusz}}

			\pgfmathsetmacro{\tEx}{3+\arrowscale}
			\pgfmathsetmacro{\tEy}{0}
			\pgfmathsetmacro{\tEz}{0}

			\draw [TolVibrantBlue, ->, very thick] (M) -- ++ (\muplusx,\muplusy,\muplusz) node [anchor = south]{$\eplus{\bmu}$};
			\draw [TolVibrantBlue, ->, very thick] (M) -- ++ (\muminusx,\muminusy,\muminusz) node [anchor = south]{$\eminus{\bmu}$};

			\draw [thick,fill = TolVibrantBlue, opacity = 0.5] (A) -- (B) -- (C) -- cycle;
			\node at (barycentric cs:A=1,B=1,C=1) {$T_{E_+}$};
			\draw [thick,fill = TolVibrantBlue, opacity = 0.5] (A) -- (B) -- (D) -- cycle;
			\node at (barycentric cs:A=1,B=1,D=1) {$T_{E_-}$};

			\draw [TolVibrantRed, ->, very thick] (M) -- ++ (\nplusx,\nplusy,\nplusz) node [anchor = west](N){$\eplus{\bn}$};
			\draw [TolVibrantRed, ->, very thick] (M) -- ++ (\nminusx,\nminusy,\nminusz) node [anchor = south]{$\eminus{\bn}$};

			\draw [TolVibrantOrange, ->, very thick] (M) -- ++ (\tEx,\tEy,\tEz) node [anchor = south]{$\bt_\edge$};

			\pgfmathsetmacro{\dotproductnmu}{\nplusx*\muminusx+\nplusy*\muminusy+\nplusz*\muminusz}
			\pgfmathsetmacro{\signnmu}{sign(\dotproductnmu)}

			\pgfmathsetmacro{\dotproductnn}{\nplusx*\nminusx+\nplusy*\nminusy+\nplusz*\nminusz}
			\pgfmathsetmacro{\normnplus}{sqrt(\nplusx*\nplusx+\nplusy*\nplusy+\nplusz*\nplusz)}
			\pgfmathsetmacro{\normnminus}{sqrt(\nminusx*\nminusx+\nminusy*\nminusy+\nminusz*\nminusz)}

			\pgfmathsetmacro{\angle}{acos(\dotproductnn/\normnplus/\normnminus)}
			\pgfmathsetmacro{\arccos}{acos(\dotproductnn/\normnplus/\normnminus)/360*2*3.1415}

			\coordinate (N) at ($(M)+(\nplusx,\nplusy,\nplusz)$);
			\draw [black, ->, very thick] (N) -- ++ (\arccos*\signnmu*\muplusx,\arccos*\signnmu*\muplusy,\arccos*\signnmu*\muplusz) node [anchor = west]{$\logarithm{\eplus{\bn}}{\eminus{\bn}}$};

			\tdplotsetrotatedcoordsorigin{(M)}
			\path [fill = black, fill opacity = 0.2, tdplot_rotated_coords] (M) -- plot [variable = \t, domain = {\rotationangleC+90}:{\rotationangleD-90}](xyz spherical cs:radius = \normnplus, longitude = 0.001, latitude = \t);
		\end{tikzpicture}
	\end{subfigure}
	\caption{%
		Illustration of normals $\eplus{\bn}$ and $\eminus{\bn}$ of two triangles $\eplus{\triangle}$, $\eminus{\triangle}$ sharing an edge~$\edge$.
		The triangles' co-normals are $\eplus{\bmu}$ and $\eminus{\bmu}$ and the unit vector tangent to the edge is $\bt_\edge$.
		The logarithmic map, described in \cref{section:discrete-tgv-of-the-normal-vector-field} is also pictured.
	}
	\label{figure:co-normal-vectors}
\end{figure}

We define the standard discontinuous Galerkin finite element space on $\mesh$ by
\begin{equation}
	\DG{r}(\mesh, \R^n)
	\coloneqq
	\setDef[Big]{\bu \colon \bigcup_{\triangle \in \triangles} \triangle \to \R^n}{\restr{\bu}{\triangle} \in P_r(\triangle, \R^n) \text{ for all } \triangle \in \triangles}
	,
\end{equation}
where $P_r(\triangle,V)$ is the set of all polynomials defined on~$\triangle$ of maximum degree~$r$ with values in some vector space~$V$.
The restriction of a function $\bu \in \DG{r}(\mesh, V)$ to a triangle~$\triangle \in \triangles$ is denoted by~$\bu_\triangle$.
Likewise, for an edge $\edge \in \edges$, we denote the restriction of~$\bu$ to $\eplus{\triangle}$ by $\eplus{\bu}$, and the restriction of $\bu$ to $\eminus{\triangle}$ by $\eminus{\bu}$
The jump of~$\bu$ across an edge~$\edge$ is denoted by $\jump{\bu}_\edge \coloneqq \eplus{\bu} - \eminus{\bu}$.

Furthermore, we define the finite element space on the skeleton of the mesh as
\begin{equation}
	\DG{r}(\edges, \R^n)
	\coloneqq
	\setDef[Big]{\bu \colon \bigcup_{\edge \in \edges} \edge \to \R^n}{\restr{\bu}{\edge} \in P_r(\edge, \R^n) \text{ for all } \edge \in \edges}
	,
\end{equation}
where $P_r(\edge,V)$ is the set of all polynomials defined on~$\edge$ of maximum degree~$r$ with values in some vector space~$V$.

A key ingredient to the $\TGV$ formulation from \cite{BaumgaertnerBergmannHerzogSchmidtVidalNunez:2023:1} for piecewise constant functions is the lowest-order Raviart--Thomas finite element space~$\RT{0}$.
In the case of a planar (2D) mesh, $\RT{0}$ is defined as the smallest $H(\div)$-conforming space that maps the divergence surjectively onto $\DG{0}$.
The $H(\div)$-conformity is equivalent to the continuity of the co-normal component across the edges of the mesh.
As described in \cite{RognesHamCotterMcRae:2013:1,HerrmannHerzogKroenerSchmidtVidalNunez:2018:1}, the space can be generalized to triangular meshes~$\mesh$ embedded in $\R^3$ by using piecewise polynomial functions with the same basis functions on the reference element as in the planar case.
The requirement of $H(\div)$-conformity then becomes
\begin{equation}
	\label{eq:RT0:conormal-continuity}
	\dotproduct{\eplus{\bw}}{\eplus{\bmu}}
	=
	- \dotproduct{\eminus{\bw}}{\eminus{\bmu}}
\end{equation}
on all edges~$\edge$, and we can obtain the following description of the lowest-order Raviart--Thomas space on $\mesh$:
\begin{equation}
	\label{eq:RT0-space}
	\RT{0}(\mesh, \R^3)
	\coloneqq
	\setDef[auto]{\bw \in \DG{1}(\mesh, \R^3)}{%
		\begin{aligned}
			&
			\restr{\bw}{\triangle} \in P_0(\triangle, \tangentSpace{\triangle}[\mesh]) + (\bx - \bx_\triangle) \, P_0(\triangle, \R)
			\\
			&
			\text{and }
			\jump{\dotproduct{\bw}{\bmu}}_\edge
			=
			0
			\text{ for all }
			\triangle \in \triangles
			\text{ and }
			\edge \in \edges
		\end{aligned}
	}
	.
\end{equation}
Here $\bx$ denotes the spatial coordinate on~$\mesh$, $\bx_\triangle$ is a fixed reference point in~$\triangle$, and $\tangentSpace{\triangle}[\mesh]$ is the common tangent space to $\mesh$ at all points in~$\triangle$.
Notice that the function values $\restr{\bw}{\triangle}$ belong to $\tangentSpace{\triangle}[\mesh]$.
The co-normal continuity \eqref{eq:RT0:conormal-continuity} is conveniently realized by choosing
\begin{equation}
	\label{eq:RT0:degrees-of-freedom}
	\int_\edge \dotproduct{\eplus{\bw}}{\eplus{\bmu}} \dS
	=
	- \int_\edge \dotproduct{\eminus{\bw}}{\eminus{\bmu}} \dS
\end{equation}
as the global degrees of freedom, which results in the following choice of basis functions for the space \eqref{eq:RT0-space}:
\begin{equation*}
	\Phi_\edge(\bx)
	\coloneqq
	\begin{cases}
		\tfrac{1}{2 \, \abs{\eplus{\triangle}}} (\bx - \eplus{\bp})
		&
		\text{ if }
		\bx \in \eplus{\triangle}
		,
		\\
		\tfrac{-1}{2 \, \abs{\eminus{\triangle}}} (\bx - \eminus{\bp})
		&
		\text{ if }
		\bx \in \eminus{\triangle}
		,
		\\
		0
		&
		\text{ else}
		,
	\end{cases}
\end{equation*}
with $\bp_{\edge_\pm}$ denoting the coordinate of the vertex of $\triangle_{\edge_\pm}$ opposite to~$\edge$.

\subsection{Discrete TGV for Piecewise Constant Functions}
\label{subsection:discrete-tgv-for-piecewise-constant-functions}

The first-order total variation of a piecewise constant function $u \in \DG{0}(\mesh, \R)$ amounts to
\begin{equation}
	\label{eq:tv:DG0}
	\TV(u)
	=
	\sum_{\edge \in \edges} \abs{\jump{u}_\edge} \dS
	=
	\sum_{\edge \in \edges} \abs{\edge} \, \abs{\jump{u}_\edge}
	.
\end{equation}
As we have shown in~\cite[eq.~(2.10)]{BaumgaertnerBergmannHerzogSchmidtVidalNunez:2023:1}, the second-order total generalized variation seminorm \eqref{eq:tgv:dual} reduces to TV \eqref{eq:tv:DG0} and therefore offers no advantage.
To overcome this, a discrete adaptation of $\TGV$ is required.
In \eqref{eq:tgv:dual}, the $\alpha_1$-term couples the gradient of $u$ to the auxiliary variable~$\bw$.
When $u$ is piecewise constant, the gradient information is concentrated on the edges in form of the jump $\jump{u}$.
We proposed to couple this scalar value of $\jump{u}_\edge$ on an edge~$\edge$ to the degree of freedom located on~$\edge$ of a Raviart--Thomas function $\bw \in \RT{0}(\mesh, \R^3)$ as in \eqref{eq:RT0-space}--\eqref{eq:RT0:degrees-of-freedom}.
The $\alpha_0$-term then measures the (discrete) total variation of the auxiliary variable $\bw \in \RT{0}(\mesh, \R^3)$, leading to a concept of discretely linear, piecewise constant functions.
Overall, the formulation proposed in \cite{BaumgaertnerBergmannHerzogSchmidtVidalNunez:2023:1} reads
\begin{multline}
	\label{eq:tgv:DG0}
	\fetgv_{(\alpha_0, \alpha_1)}^2(u)
	\coloneqq
	\min_{\bw \in \RT{0}(\mesh, \R^3)}
	\alpha_1 \sum_{\edge \in \edges} \int_\edge \abs[big]{\jump{u}_\edge + h_\edge \, \dotproduct{\eplus{\bw}}{\eplus{\bmu}}} \dS
	\\
	+
	\alpha_0 \sum_{\triangle \in \triangles} \int_\triangle \abs{\nabla \bw_\triangle}_F \dx
	+
	\alpha_0 \sum_{\edge \in \edges} \int_\edge \interpolate[big]{1}{\abs[big]{\jump{\bw}}_2} \dS
	,
\end{multline}
where $\abs{\,\cdot\,}_F$ is the Frobenius norm of a matrix and $\interpolate{1}{\cdot}$ denotes the linear interpolation at the endpoints of an edge, denoted by $X_{\edge, 1}$ and $X_{\edge, 2}$.
Furthermore, $h_\edge$ denotes a mesh-dependent factor chosen as the distance between the circumcenters of the two adjacent triangles sharing the edge~$\edge$.
Therefore, $\jump{u}_\edge/h_\edge$ is a finite difference that corresponds to the directional derivative of $u$ in direction $- \eplus{\bmu}$.
For more details we refer the reader to~\cite[Section~3]{BaumgaertnerBergmannHerzogSchmidtVidalNunez:2023:1}.

\section{Discrete Total Generalized Variation of the Normal}
\label{section:discrete-tgv-of-the-normal-vector-field}

In this section we extend the discrete total generalized variation for piecewise constant functions \eqref{eq:tgv:DG0} to the piecewise constant unit normal vector field~$\bn$ on a triangular mesh~$\mesh$ embedded in~$\R^3$.
Unlike the methods proposed in \cite{LiuLiWangLiuChen:2022:1, ZhangHeWang:2022:1} for $\TGV$ mesh denoising, we consider~$\bn$ with values in the manifold~$\sphere$ rather than in~$\R^3$.
This has significant implications on the auxiliary variable~$\bw$ in \eqref{eq:tgv:DG0}, which is responsible for capturing changes in the data, in this case, in the normal vector.
Before defining the proposed formulation in \cref{subsection:discrete-tgv-of-the-normal-vector-field}, we review some elementary geometric calculus for the sphere in \cref{subsection:geometric-calculus-for-the-sphere-and-identities-on-triangulated-meshes}.
In \cref{subsection:discrete-tv-of-the-normal-vector-field}, we revisit the first-order total variation of the normal vector field, and then we define the tailored tangential Raviart--Thomas space in \cref{subsection:tangential-Raviart-Thomas-space} that captures derivative information of the normal vector field.

\subsection{Geometric Calculus for the Sphere and Identities on Triangulated Meshes}
\label{subsection:geometric-calculus-for-the-sphere-and-identities-on-triangulated-meshes}

We briefly recall some basic concepts on the Riemannian manifold $\sphere$, the $2$-sphere, in the context of the normal vector of a triangulated mesh embedded in~$\R^3$, following \cite[Appendix]{BergmannHerrmannHerzogSchmidtVidalNunez:2020:1}.
Given two vectors $\bn_1, \bn_2 \in \sphere$ with $\bn_1 \neq - \bn_2$, the logarithmic map is given as
\begin{equation}
	\label{eq:sphere:logarithmic-map}
	\logarithm{\bn_1}{\bn_2}
	=
	\begin{cases}
		\bnull
		&
		\text{ if }
		\bn_1 = \bn_2
		\\
		\displaystyle \dist{\bn_1}{\bn_2} \frac{\bn_2 - (\dotproduct{\bn_1}{\bn_2}) \, \bn_1}{\abs{\bn_2 - (\dotproduct{\bn_1}{\bn_2}) \, \bn_1}_2}
		&
		\text{ else}
		,
	\end{cases}
\end{equation}
where
\begin{equation*}
	\dist{\bn_1}{\bn_2}
	\coloneqq
	\arccos(\dotproduct{\bn_1}{\bn_2})
\end{equation*}
is the geodesic distance on~$\sphere$.
The logarithmic map is the vector in the tangent space $\tangentSpace{\bn_1}[\sphere]$ pointing from $\bn_1$ to $\bn_2$ and of length~$\dist{\bn_1}{\bn_2}$.
It also enters the so-called parallel transport, which transforms a vector $\bxi \in \tangentSpace{\bn_1}[\sphere]$ to a vector in $\tangentSpace{\bn_2}[\sphere]$ along the shortest geodesic (assuming $\bn_1 \neq - \bn_2$) by
\begin{subequations}
	\label{eq:sphere:parallel-transport}
	\begin{align}
		\parallelTransport{\bn_1}{\bn_2}(\bxi)
		&
		=
		\begin{cases}
			\bxi
			&
			\text{ if }
			\bn_1 = \bn_2
			\\
			\bxi
			-
			\frac{\dotproduct{\bxi}{\logarithm{\bn_1}{\bn_2}}}{\dist{\bn_1}{\bn_2}^2} \paren[auto](){\logarithm{\bn_1}{\bn_2} +\logarithm{\bn_2}{\bn_1}}
			&
			\text{ else}
		\end{cases}
		\label{eq:sphere:parallel-transport:1}
		\\
		&
		=
		\paren[auto](){\id - \frac{\bn_2 + \bn_1}{1 + \dotproduct{\bn_2}{\bn_1}} \, \bn_2^\transp} \bxi
		.
		\label{eq:sphere:parallel-transport:2}
	\end{align}
\end{subequations}
When $\bn_1 = \bn_2$, the equality in between \eqref{eq:sphere:parallel-transport:1} and \eqref{eq:sphere:parallel-transport:2} is obvious due to $\dotproduct{\bn_2}{\bxi} = 0$.
Otherwise, using the definition of the logarithmic map \eqref{eq:sphere:logarithmic-map} and expanding the norms yields
\begin{align*}
	\MoveEqLeft
	\bxi
	-
	\frac{\dotproduct{\bxi}{\logarithm{\bn_1}{\bn_2}}}{\dist{\bn_1}{\bn_2}^2} \paren[auto](){\logarithm{\bn_1}{\bn_2} +\logarithm{\bn_2}{\bn_1}}
	\\
	&
	=
	\bxi
	-
	\dotproduct{\bxi}{\frac{\bn_2-(\dotproduct{\bn_2}{\bn_1}) \, \bn_1}{\abs{\bn_2-(\dotproduct{\bn_2}{\bn_1}) \, \bn_1}_2}} \paren[auto](){\frac{\bn_2-(\dotproduct{\bn_2}{\bn_1}) \, \bn_1}{\abs{\bn_2-(\dotproduct{\bn_2}{\bn_1}) \, \bn_1}_2} + \frac{\bn_1 - (\dotproduct{\bn_2}{\bn_1}) \, \bn_2}{\abs{\bn_1 - (\dotproduct{\bn_2}{\bn_1}) \, \bn_2}_2}}
	\\
	&
	=
	\bxi
	-
	\dotproduct{\bxi}{\frac{\bn_2}{\sqrt{1 - (\dotproduct{\bn_2}{\bn_1})^2}}} \frac{(1 - \dotproduct{\bn_2}{\bn_1})(\bn_2 + \bn_1)}{\sqrt{1 - (\dotproduct{\bn_2}{\bn_1})^2}}
	\\
	&
	=
	\bxi
	-
	\dotproduct{\bxi}{\bn_2} \frac{\bn_2 + \bn_1}{1 + \dotproduct{\bn_2}{\bn_1}}
	\\
	&
	=
	\paren[auto](){\id - \frac{\bn_2 + \bn_1}{1 + \dotproduct{\bn_2}{\bn_1}} \, \bn_2^\transp} \bxi
	.
\end{align*}

To make use of these definitions on triangular meshes, define a unit vector $\bt_\edge$, tangential to an edge $\edge\in \edges$ with arbitrary but fixed orientation.
Then, $\set{\eplus{\bn}, \eplus{\bmu}, \bt_\edge}$ forms an orthonormal basis of~$\R^3$ \wrt to the standard inner product at a point on an edge~$\edge$.
Analogously, $\set{\eminus{\bn}, \eminus{\bmu}, \bt_\edge}$ also forms an orthonormal basis of~$\R^3$.
This setup is illustrated in \cref{figure:co-normal-vectors}.

Using this property, the logarithmic map between two normal vectors of adjacent triangles can be simplified.
\begin{lemma}
	\label{lemma:sphere:logarithmic-map:using-co-normals}
	Let $\edge$ be the edge shared by the triangles $\eplus{\triangle}$, $\eminus{\triangle}$ with respective normal vectors $\eplus{\bn}$, $\eminus{\bn}$ and co-normal vectors $\eplus{\bmu}$, $\eminus{\bmu}$.
	Then
	\begin{equation}
		\label{eq:sphere:logarithmic-map:using-co-normals}
		\begin{aligned}
			\logarithm{\eplus{\bn}}{\eminus{\bn}}
			&
			=
			\sign \paren[big](){\dotproduct{\eminus{\bn}}{\eplus{\bmu}}} \, \dist{\eplus{\bn}}{\eminus{\bn}} \, \eplus{\bmu}
			,
			\\
			\logarithm{\eminus{\bn}}{\eplus{\bn}}
			&
			=
			\sign \paren[big](){\dotproduct{\eminus{\bn}}{\eplus{\bmu}}} \, \dist{\eplus{\bn}}{\eminus{\bn}} \, \eminus{\bmu}
			.
		\end{aligned}
	\end{equation}
\end{lemma}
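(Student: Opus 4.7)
The plan is to substitute the expansions of $\eminus{\bn}$ and $\eplus{\bn}$ in the respective orthonormal bases available at the shared edge directly into the definition~\eqref{eq:sphere:logarithmic-map} of the logarithmic map. As noted in the paragraph preceding the lemma, at any point on~$\edge$ both triples $\{\eplus{\bn}, \eplus{\bmu}, \bt_\edge\}$ and $\{\eminus{\bn}, \eminus{\bmu}, \bt_\edge\}$ form orthonormal bases of~$\R^3$. The key geometric input is that $\eminus{\bn} \perp \bt_\edge$, since $\bt_\edge$ lies along the edge $\edge \subset \eminus{\triangle}$ and $\eminus{\bn}$ is normal to $\eminus{\triangle}$; symmetrically $\eplus{\bn} \perp \bt_\edge$. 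Hence the expansion of $\eminus{\bn}$ in the first basis has no $\bt_\edge$-component:
\begin{equation*}
    \eminus{\bn} = \inner{\eminus{\bn}}{\eplus{\bn}} \, \eplus{\bn} + \inner{\eminus{\bn}}{\eplus{\bmu}} \, \eplus{\bmu}.
\end{equation*}

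First I would substitute this into~\eqref{eq:sphere:logarithmic-map}. The numerator $\eminus{\bn} - \inner{\eplus{\bn}}{\eminus{\bn}} \, \eplus{\bn}$ then collapses to the single term $\inner{\eminus{\bn}}{\eplus{\bmu}} \, \eplus{\bmu}$, whose Euclidean norm is $\abs{\inner{\eminus{\bn}}{\eplus{\bmu}}}$, so the normalized direction is $\sign\paren[big](){\inner{\eminus{\bn}}{\eplus{\bmu}}} \, \eplus{\bmu}$. Multiplying by the geodesic distance yields the first identity in~\eqref{eq:sphere:logarithmic-map:using-co-normals}. Repeating the same computation with the roles of $+$ and $-$ interchanged, \ie, expanding $\eplus{\bn}$ in the basis $\{\eminus{\bn}, \eminus{\bmu}, \bt_\edge\}$, produces
\begin{equation*}
    \logarithm{\eminus{\bn}}{\eplus{\bn}} = \sign\paren[big](){\inner{\eplus{\bn}}{\eminus{\bmu}}} \, \dist{\eplus{\bn}}{\eminus{\bn}} \, \eminus{\bmu}.
\end{equation*}

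The hard part will be matching this sign factor to the one stated in the lemma, \ie, showing $\sign(\inner{\eplus{\bn}}{\eminus{\bmu}}) = \sign(\inner{\eminus{\bn}}{\eplus{\bmu}})$. Both $\{\eplus{\bn}, \eplus{\bmu}\}$ and $\{\eminus{\bn}, \eminus{\bmu}\}$ are orthonormal frames of the same two-dimensional plane $\bt_\edge^\perp$, so they are related by a $2 \times 2$ orthogonal matrix. The convention that each co-normal points away from its own triangle places $\eplus{\bmu}$ and $\eminus{\bmu}$ in opposite half-planes of $\bt_\edge^\perp$; in the degenerate flat configuration $\eplus{\bn} = \eminus{\bn}$ this forces $\eminus{\bmu} = -\eplus{\bmu}$, so the matrix reduces to $\operatorname{diag}(1,-1)$ with determinant~$-1$. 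By continuity on the connected set of admissible configurations (those with $\eplus{\bn} \ne -\eminus{\bn}$), the determinant remains~$-1$, so the matrix is a reflection of the form $\eminus{\bn} = a \, \eplus{\bn} + b \, \eplus{\bmu}$, $\eminus{\bmu} = b \, \eplus{\bn} - a \, \eplus{\bmu}$ with $a^2 + b^2 = 1$. Reading off the coefficient $b$ in two ways yields $\inner{\eminus{\bn}}{\eplus{\bmu}} = b = \inner{\eplus{\bn}}{\eminus{\bmu}}$, closing the argument. The degenerate case $\eplus{\bn} = \eminus{\bn}$ is handled separately, since then both sides of~\eqref{eq:sphere:logarithmic-map:using-co-normals} vanish.
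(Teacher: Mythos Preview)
Your proposal is correct. The derivation of each individual logarithmic-map identity---expanding $\bn_{E_\mp}$ in the orthonormal frame attached to the opposite triangle and noting that the $\bt_\edge$-component vanishes---is exactly the paper's argument, only stated a bit more explicitly.

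Where you differ is in the sign-matching step, \ie, in showing $\inner{\eplus{\bn}}{\eminus{\bmu}} = \inner{\eminus{\bn}}{\eplus{\bmu}}$. The paper attacks this algebraically, introducing the orthogonal matrix $Q = \eminus{\bn}\,\eminus{\bmu}^\transp - \eminus{\bmu}\,\eminus{\bn}^\transp + \bt_\edge\,\bt_\edge^\transp$ and invoking the vector triple product. Your route is more geometric: both planar frames $\{\eplus{\bn},\eplus{\bmu}\}$ and $\{\eminus{\bn},\eminus{\bmu}\}$ live in $\bt_\edge^\perp$, and once you know their change of basis is a reflection, the equality of the off-diagonal entries is immediate. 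The continuity argument you give to pin the determinant at $-1$ is valid, but you can make it direct and avoid appealing to connectedness of a configuration space: because the mesh is consistently oriented, the two triangles induce opposite traversal directions along the shared edge, so $\eplus{\bn}\times\eplus{\bmu}$ and $\eminus{\bn}\times\eminus{\bmu}$ point in opposite directions along~$\edge$. Hence the $3\times 3$ change of basis (which fixes $\bt_\edge$) has determinant $-1$, and the $2\times 2$ block is a reflection as you claim. Either variant is a clean alternative to the paper's $Q$-matrix computation.
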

\begin{proof}
	We start with the first identity and exclude the obvious case $\eplus{\bn} = \eminus{\bn}$.
	Then, the logarithmic map from \eqref{eq:sphere:logarithmic-map}, up to a scaling factor, is
	\begin{equation}
		\label{eq:sphere:logarithmic-map:up-to-scaling}
		\frac{\eminus{\bn} - (\dotproduct{\eplus{\bn}}{\eminus{\bn}}) \, \eplus{\bn}}{\abs{\eminus{\bn} - (\dotproduct{\eplus{\bn}}{\eminus{\bn}}) \, \eplus{\bn}}_2}
		.
	\end{equation}
	It is easy to see that \eqref{eq:sphere:logarithmic-map:up-to-scaling} is orthogonal to $\eplus{\bn}$ and $\bt_\edge$.
	Since $\set{\eplus{\bn},\eplus{\bmu},\bt_\edge}$ form an orthonormal basis of~$\R^3$, we have
	\begin{equation*}
		\frac{\eminus{\bn} - (\dotproduct{\eplus{\bn}}{\eminus{\bn}}) \, \eplus{\bn}}{\abs{\eminus{\bn} - (\dotproduct{\eplus{\bn}}{\eminus{\bn}}) \, \eplus{\bn}}_2}
		=
		\sigma \, \eplus{\bmu}
	\end{equation*}
	for some $\sigma \in \R$.
	Since the left side has norm one, $\sigma \in \set{-1, 1}$.
	Taking the inner product with $\eplus{\bmu}$ on both sides yields $\sign(\dotproduct{\eminus{\bn}}{\eplus{\bmu}}) = \sigma$.
	Plugging these identities into the definition of $\logarithm{\eplus{\bn}}{\eminus{\bn}}$ \eqref{eq:sphere:logarithmic-map} shows the desired first identity in \eqref{eq:sphere:logarithmic-map:using-co-normals}.
	To show the second, we can swap $\eplus{\triangle}$ and $\eminus{\triangle}$.
	There we can proceed analogously to obtain
	\begin{equation*}
		\logarithm{\eminus{\bn}}{\eplus{\bn}}
		=
		\sign \paren[auto](){\dotproduct{\eplus{\bn}}{\eminus{\bmu}}} \dist{\eplus{\bn}}{\eminus{\bn}} \eminus{\bmu}
		.
	\end{equation*}
	It remains to show $\dotproduct{\eplus{\bn}}{\eminus{\bmu}} = \dotproduct{\eminus{\bn}}{\eplus{\bmu}}$, for which we use the help of the orthogonal matrix $Q = \eminus{\bn}^{} \eminus{\bmu}^\transp - \eminus{\bmu}^{} \eminus{\bn}^\transp + \bt_\edge^{} \, \bt_\edge^\transp$.
	Using the triple vector product, we obtain
	\begin{align*}
		\dotproduct{\eplus{\bn}}{\eminus{\bmu}}
		&
		=
		\dotproduct{(Q \, \eplus{\bn})}{(Q \, \eminus{\bmu})}
		\\
		&
		=
		\dotproduct{\paren[big](){\eminus{\bn}^{} \eminus{\bmu}^\transp \eplus{\bn}^{} - \eminus{\bmu}^{} \eminus{\bn}^\transp \eplus{\bn}^{}}}{\eminus{\bn}^{}}
		\\
		&
		=
		\dotproduct{\paren[big](){\eplus{\bn} \times (\eminus{\bn} \times \eminus{\bmu})}}{\eminus{\bn}^{}}
		\\
		&
		=
		\dotproduct{\eminus{\bmu}}{\eplus{\bn}^{}}
		.
	\end{align*}
\end{proof}
Having established these identities for the logarithmic maps between normals of adjacent triangles, the parallel transport \eqref{eq:sphere:parallel-transport} for this situation can be significantly simplified.
\begin{lemma}
	\label{lemma:sphere:parallel-transport:using-co-normals}
	The parallel transport \eqref{eq:sphere:parallel-transport} between vectors in adjacent tangent spaces $\tangentplus$ and $\tangentminus$ is given by
	\begin{equation}
		\label{eq:sphere:parallel-transport:adjacent-tangent-spaces}
		\begin{aligned}
			\parallelTransport{\eminus{\bn}}{\eplus{\bn}}(\bxi)
			&
			=
			\paren[big](){\id - \eminus{\bmu} \eminus{\bmu}^\transp - \eplus{\bmu} \eminus{\bmu}^\transp} \, \bxi
			,
			\\
			\parallelTransport{\eplus{\bn}}{\eminus{\bn}}(\bchi)
			&
			=
			\paren[big](){\id - \eplus{\bmu} \eplus{\bmu}^\transp - \eminus{\bmu} \eplus{\bmu}^\transp} \, \bchi
			,
		\end{aligned}
	\end{equation}
	where $\bxi \in \tangentminus$ and $\bchi \in \tangentplus$ respectively.
	In particular, we have
	\begin{equation}
		\label{eq:sphere:parallel-transport:transporting-co-normals}
		\parallelTransport{\eminus{\bn}}{\eplus{\bn}}(\eminus{\bmu})
		=
		- \eplus{\bmu}
		\quad
		\text{and}
		\quad
		\parallelTransport{\eplus{\bn}}{\eminus{\bn}}(\eplus{\bmu})
		=
		- \eminus{\bmu}
		.
	\end{equation}
\end{lemma}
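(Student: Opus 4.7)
The plan is to substitute the simplified logarithmic maps from \cref{lemma:sphere:logarithmic-map:using-co-normals} into the parallel transport formula \eqref{eq:sphere:parallel-transport:1}. Taking $\bn_1 = \eminus{\bn}$, $\bn_2 = \eplus{\bn}$, and abbreviating $s \coloneqq \sign(\inner{\eminus{\bn}}{\eplus{\bmu}})$ and $d \coloneqq \dist{\eplus{\bn}}{\eminus{\bn}}$, the previous lemma gives $\logarithm{\eminus{\bn}}{\eplus{\bn}} = s \, d \, \eminus{\bmu}$ and $\logarithm{\eplus{\bn}}{\eminus{\bn}} = s \, d \, \eplus{\bmu}$. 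Plugging these into \eqref{eq:sphere:parallel-transport:1} and using $s^2 = 1$, the scalar factors cancel cleanly, yielding
\begin{equation*}
\parallelTransport{\eminus{\bn}}{\eplus{\bn}}(\bxi)
=
\bxi - \inner{\bxi}{\eminus{\bmu}} \paren[auto](){\eminus{\bmu} + \eplus{\bmu}}
=
\paren[big](){\id - \eminus{\bmu} \eminus{\bmu}^\transp - \eplus{\bmu} \eminus{\bmu}^\transp} \bxi,
\end{equation*}
which is the first identity. The second follows by swapping the roles of $\eplus{\triangle}$ and $\eminus{\triangle}$ throughout.

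For the degenerate case $\eplus{\bn} = \eminus{\bn}$, the parallel transport is the identity map by definition. The two triangles then lie in a common plane on opposite sides of $\edge$, so $\eplus{\bmu} = -\eminus{\bmu}$, and the matrix on the right-hand side collapses to $\id - \eminus{\bmu} \eminus{\bmu}^\transp + \eminus{\bmu} \eminus{\bmu}^\transp = \id$, so the formula remains valid. For the consequence \eqref{eq:sphere:parallel-transport:transporting-co-normals}, I would simply substitute $\bxi = \eminus{\bmu}$ into the first identity; since $\inner{\eminus{\bmu}}{\eminus{\bmu}} = 1$, this gives $\eminus{\bmu} - \eminus{\bmu} - \eplus{\bmu} = -\eplus{\bmu}$, and analogously for the transport in the other direction.

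There is no real obstacle here — the argument is a direct computation once the preceding lemma is in place. The only point that requires care is keeping track of the common sign $s$, but because it appears both as a scalar multiple of the vector $\logarithm{\eminus{\bn}}{\eplus{\bn}}$ and inside the inner product with $\bxi$, it contributes $s^2 = 1$ and drops out of the final expression. Thus the whole statement reduces to a one-line substitution plus verification of the degenerate case and evaluation at $\bxi = \eminus{\bmu}$.
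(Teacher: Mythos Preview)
Your proposal is correct and follows essentially the same route as the paper: substitute the simplified logarithmic maps from \cref{lemma:sphere:logarithmic-map:using-co-normals} into \eqref{eq:sphere:parallel-transport:1}, observe that the sign and distance factors cancel via $s^2 d^2/d^2 = 1$, handle the coplanar case $\eplus{\bn} = \eminus{\bn}$ separately using $\eplus{\bmu} = -\eminus{\bmu}$, and read off \eqref{eq:sphere:parallel-transport:transporting-co-normals} by plugging in $\bxi = \eminus{\bmu}$. The only cosmetic difference is that the paper treats the degenerate case first rather than last.
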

\begin{proof}
	In the case of $\eplus{\bn} = \eminus{\bn}$ the parallel transport is an identity and we have $\eplus{\bmu} = - \eminus{\bmu}$, which gives the desired result.
	Otherwise, plugging in the result of \cref{lemma:sphere:logarithmic-map:using-co-normals} into \eqref{eq:sphere:parallel-transport} yields
	\begin{align*}
		\parallelTransport{\eminus{\bn}}{\eplus{\bn}}(\bxi)
		&
		=
		\bxi
		- \sign \paren[auto](){\dotproduct{\eminus{\bn}}{\eplus{\bmu}}}^2 \dist{\eplus{\bn}}{\eminus{\bn}}^2 \frac{\dotproduct{\bxi}{\eminus{\bmu}}}{\dist{\eplus{\bn}}{\eminus{\bn}}^2} \paren[auto](){\eminus{\bmu} + \eplus{\bmu}}
		\\
		&
		=
		\bxi - \dotproduct{\bxi}{\eminus{\bmu}} \paren[auto](){\eminus{\bmu} + \eplus{\bmu}}
		\\
		&
		=
		\paren[big](){\id - \eminus{\bmu} \eminus{\bmu}^\transp - \eplus{\bmu} \eminus{\bmu}^\transp} \, \bxi
		,
	\end{align*}
	and an analogous result is obtained for $\parallelTransport{\eplus{\bn}}{\eminus{\bn}}(\bchi)$.
	The identities \eqref{eq:sphere:parallel-transport:transporting-co-normals} for the parallel transport of co-normals $\bmu_{\edge_\pm}$ follow immediately from plugging in $\bxi = \eminus{\bmu}$ and $\bchi = \eplus{\bmu}$ into \eqref{eq:sphere:parallel-transport:adjacent-tangent-spaces}.
\end{proof}

\subsection{Discrete Total Variation of the Normal Vector Field}
\label{subsection:discrete-tv-of-the-normal-vector-field}

The normal vector on a triangular mesh embedded in $\R^3$ is constant on each triangle, \ie, $\bn \in \DG{0}(\mesh, \sphere)$.
Therefore the variation of the normal is concentrated on the edges.
In this manifold-valued setting, the total variation of the normal is defined as
\begin{equation}
	\label{eq:tv:normal}
	\TV_\sphere(\bn)
	=
	\sum_{\edge \in \edges} \int_\edge \dist{\eplus{\bn}}{\eminus{\bn}} \dS
	,
\end{equation}
\ie, the absolute value of the difference for scalar-valued data \eqref{eq:tv:DG0} is replaced by the geodesic distance $\dist{\cdot}{\cdot}$; see for instance \cite{LellmannStrekalovskiyKoetterCremers:2013:1}.
Generally, the geodesic distance can be expressed using the norm on the tangent space of the logarithmic map; see \cite{BergmannHerrmannHerzogSchmidtVidalNunez:2020:2}.
Hence,
\begin{equation}
	\label{eq:tv:normal:alternative}
	\TV_\sphere(\bn)
	=
	\sum_{\edge \in \edges} \int_\edge \abs[big]{\logarithm{\eplus{\bn}}{\eminus{\bn}} }_2 \dS
	.
\end{equation}
Comparing this to the total variation in the scalar-valued setting~\eqref{eq:tv:DG0}, it can be observed that $\logarithm{\eplus{\bn}}{\eminus{\bn}}$ takes the role of the jump in \eqref{eq:tv:DG0}.
Indeed the logarithmic map can be conceived as a generalization of the difference.

In order to pass to the discrete formulation \eqref{eq:tgv:DG0} of second-order $\TGV$, we observe that the jump $\jump{u}$ is coupled to the co-normal component $\dotproduct{\eplus{\bw}}{\eplus{\bmu}}$ of the Raviart--Thomas function~$\bw$ in the $\alpha_1$-term.
To extend this formulation to the case of the normal vector, we need to replace $\jump{u}_\edge$ by $\logarithm{\eplus{\bn}}{\eminus{\bn}}$, which carries the information about the variations of neighboring normal vectors.
We thus need to couple $\logarithm{\eplus{\bn}}{\eminus{\bn}}$ to an auxiliary variable from a Raviart--Thomas space with a degree of freedom located also on the edge~$\edge$.
However, the logarithmic map is tangent space-valued, therefore a tangent space-valued Raviart--Thomas space is required to adapt \eqref{eq:tgv:DG0} to the normal vector.
Such a space is non-standard and constructed in the following two subsections.

\subsection{First- and Second-Order Derivatives of Normal Vector Fields on Surfaces}
\label{subsection:first-and-second-order-derivatives-of-normal-vector-fields-on-surfaces}

In order to motivate what follows, we need to briefly discuss first- and second-order derivatives of normal vectors of manifolds.
To this end, suppose that $\mesh$ is a smooth submanifold of $\R^3$ equipped with the parallel transport $\parallelTransportCurve{\gamma}{t} \colon \tangentSpace{\gamma(0)}[\mesh] \to \tangentSpace{\gamma(t)}[\mesh]$ along smooth curves $\gamma$ on~$\mesh$ compatible with the Euclidean metric in~$\R^3$.
Furthermore, let $\bn \colon \mesh \to \sphere$ be the normal vector field of $\mesh$.

The first-order derivative (or push-forward) of~$\bn$ at a point $\bx \in \mesh$, denoted by $\bn_\bx'$, is a linear mapping from the domain tangent space $\tangentSpace{\bx}[\mesh]$ to the co-domain tangent space $\tangentSpace{\bn(\bx)}[\sphere]$.
We denote such mappings by $L(\tangentSpace{\bx}[\mesh],\tangentSpace{\bn(\bx)}[\sphere])$.

In contrast to derivatives for functions in linear spaces, second-order derivatives cannot be defined in a straightforward manner in an iterated fashion.
The reason is that, when $\bx \in \mesh$ is perturbed slightly to $\bar{\bx}$, the tangent space changes as well and $\bn_{\bar{\bx}}' \in L(\tangentSpace{\bar{\bx}}[\mesh],\tangentSpace{\bn(\bar{\bx})}[\sphere])$.
Hence, $\bn_\bx'$ and $\bn_{\bar{\bx}}'$ belong to different spaces.
To define a second-order derivative of the normal vector field~$\bn$, it is convenient to use an identification of $\tangentSpace{\bn(\bx)}[\sphere]$ with $\tangentSpace{\bx}[\mesh]$. This is visualized in \cref{figure:identification-of-tangent-spaces} and is discussed, for instance, in \cite[Sec.~2.2]{BergmannHerrmannHerzogSchmidtVidalNunez:2020:1}.

\begin{figure}[htp]
	\centering
	\begin{adjustbox}{trim = 0pt 0pt 0pt 100pt, clip}
		\begin{tikzpicture}
			\begin{axis}[
				scale = 1.5,
				view = {315}{40},
				hide axis,
				enlargelimits = false,
				xmin = -0.75,
				xmax = 1.075,
				ymin = -0.75,
				ymax = 1.075,
				zmin = 0.45,
				zmax = 1.75,
				]
				\addplot3[surfaceBoundary, domain = -0.7:0.7, samples = 40, samples y = 0,
					mark = none](0.7, x, {\pz{0.7}{x}});
				\addplot3[surfaceBoundary, domain = -0.7:0.7, samples = 40, samples y = 0,
					mark = none](x, 0.7, {\pz{x}{0.7}});
				\addplot3[surf, shader = flat, surface, samples = 40, domain = -0.7:0.7, y domain = -0.7:0.7] {\pz{x}{y}};
				\addplot3[surfaceBoundary, domain = -0.7:0.7, samples = 40, samples y = 0,
					mark = none](x, -0.7, { \pz{x}{-0.7}});
				\addplot3[surfaceBoundary, domain = -0.7:0.7, samples = 40, samples y = 0,
					mark = none](-0.7, x, { \pz{-0.7}{x} });
				\addplot3[curve, domain = -0.5:0.5, samples = 40, samples y = 0,
					mark = none](-x, x, { \pz{-x}{x} });
				\pgfplotsinvokeforeach{-.45, .45, 0}{%
					\draw[plane] \p{#1}{-#1} ++\txUnitScaled{#1}{#1}{.2} %
						-- ++\tyUnitScaled{#1}{-#1}{.2} %
						-- ++\txUnitScaled{#1}{-#1}{-.4} -- ++\tyUnitScaled{#1}{-#1}{-.4}%
						-- ++\txUnitScaled{#1}{-#1}{.4} -- ++\tyUnitScaled{#1}{-#1}{.2};
					\draw[tang] \p{#1}{-#1} -- ++\txUnitScaled{#1}{-#1}{.2};
					\draw[tang] \p{#1}{-#1} -- ++\tyUnitScaled{#1}{-#1}{.2};
					\draw[orth] \p{#1}{-#1} -- ++\tNUnitScaled{#1}{-#1}{.2};
					\addplot3[surf, shader = flat,
						domain = 0:360, domain y = 0:90, samples = 36, samples y = 18,
						z buffer = sort, sphere]
						({0.2*sin(x)*cos(y)}, {0.2*cos(x)*cos(y)}, {\pz{0}{0}+0.2*sin(y)});
					\draw[plane2] \p{#1}{-#1} ++\tNUnitScaled{#1}{-#1}{.2}
						++\txUnitScaled{#1}{-#1}{.2} %
						-- ++\tyUnitScaled{#1}{-#1}{.2} %
						-- ++\txUnitScaled{#1}{-#1}{-.4} -- ++\tyUnitScaled{#1}{-#1}{-.4}%
						-- ++\txUnitScaled{#1}{-#1}{.4} -- ++\tyUnitScaled{#1}{-#1}{.2};
				}
			\end{axis}
		\end{tikzpicture}
	\end{adjustbox}
	\caption{%
		Visualization of the relation between $\tangentSpace{\bn(\bx)}[\sphere]$ and $\tangentSpace{\bx}[\mesh]$.
		Adapted from \cite[Fig.~1]{BergmannHerrmannHerzogSchmidtVidalNunez:2020:1}.
	}
	\label{figure:identification-of-tangent-spaces}
\end{figure}
Consequently, $\bn_\bx'$ can be treated as an element of $L(\tangentSpace{\bx}[\mesh],\tangentSpace{\bx}[\mesh])$ for the purpose of differentiation.
Let $\tangentBundle[\mesh]$ denote the tangent bundle and $\cotangentBundle[\, \mesh]$ denote the cotangent bundle on~$\mesh$.
The function $\bn' \colon \tangentBundle[\mesh] \to \tangentBundle[\mesh]$ can be seen as a $(1,1)$-tensor field~$T$ on~$\mesh$, \ie, a bilinear map with one argument in the cotangent bundle and one argument in the tangent bundle.
More precisely, the relation between~$\bn'$ and $T$ is given by $T_\bx[\bTheta, \bXi] \coloneqq \bTheta_x[\bn_x'[\bXi_x]]$ where $\bTheta \in \cotangentBundle[\, \mesh]$ and $\bXi \in \tangentBundle[\mesh]$.

As described in \cite[Lemma~4.6]{Lee:1997:1}, such a $(1,1)$-tensor field $T$ can be differentiated using covariant derivatives. Given a smooth co-vector field $\bTheta \in \cotangentBundle[\, \mesh]$ and smooth vector fields $\bChi,\bXi \in \tangentBundle[\mesh]$, the derivative of $T \colon \cotangentBundle[\,\mesh] \times \tangentBundle[\mesh] \to \R$ in direction $\bChi$ is given by
\begin{equation}
	\label{eq:covariant-derivative-of-a-tensor}
	\covariantDerivative{\bChi}{T}[\bTheta, \bXi]
	\coloneqq
	(T[\bTheta, \bXi])'[\bChi] - T[\covariantDerivative{\bChi}{\bTheta}, \bXi] - T[\bTheta, \covariantDerivative{\bChi}{\bXi}]
	,
\end{equation}
where $\covariantDerivative{X}{\bTheta}$ and $\covariantDerivative{X}{\bXi}$ are the covariant derivatives induced by the parallel transport on~$\mesh$ as for instance described in \cite[Definition~4.1.2]{Jost:2017:1}.

Through this construction, the value of $\covariantDerivative{\bChi}{T}[\bTheta, \bXi]$ at $\bx \in \mesh$ is computed only from values of~$T$ along the curve $\gamma \colon \interval(){-\varepsilon}{\varepsilon} \to \mesh$ with $\gamma(0) = \bx$ and $\dot{\gamma}(t) = \bChi_{\gamma(t)}$.
In fact, $\covariantDerivative{\bChi}{T}[\bTheta, \bXi]$ at~$\bx$ only depends on $\bTheta_\bx$, $\bChi_\bx$ and $\bXi_\bx$, which means that $\bTheta$ and $\bXi$ can be defined as extensions of quantities $\btheta \in \cotangentSpace{\bx}[\,\mesh]$ and $\bxi \in \tangentSpace{\bx}[\mesh]$ in an arbitrary but smooth way along the curve~$\gamma$.
The specific choice of such an extension $\bTheta_{\gamma(t)} \coloneqq \btheta \circ \parallelTransportCurve{\gamma}{t}^{-1}$ and $\bXi_{\gamma(t)} = \parallelTransportCurve{\gamma}{t}(\bxi)$ achieves that \eqref{eq:covariant-derivative-of-a-tensor} is simplified, because $\covariantDerivative{\bChi}{\bTheta} = 0$ and $\covariantDerivative{\bChi}{\bXi}= 0$ holds at~$\bx$.
Overall, the derivative of a $(1,1)$-tensor field~$T$ at a point~$\bx$ in direction~$\bchi \in \tangentSpace{\bx}[\mesh]$ is given by
\begin{equation}
	\covariantDerivative{\bchi}{T}[\btheta, \bxi]
	=
	\lim_{t \to 0}
	\frac{T_{\gamma(t)}\paren[big][]{\btheta \circ \parallelTransportCurve{\gamma}{t}^{-1}, \parallelTransportCurve{\gamma}{t}(\bxi)} - T_\bx[\btheta, \bxi]}{t}
	,
\end{equation}
where $\gamma \colon (-\varepsilon, \varepsilon) \to \mesh$ is a smooth curve with $\gamma(0) = \bx$ and $\dot{\gamma}(\bx) = \bchi$.

Going back to the normal vector $\bn' \colon \tangentBundle[\mesh] \to \tangentBundle[\mesh]$, it is possible to drop $\btheta$ by using $(\cotangentSpace{\bx}[\mesh])^* \cong \tangentSpace{\bx}[\mesh]$.
Hence, we define
\begin{equation}
	\label{eq:second-derivative-of-the-normal}
	\covariantDerivative{\bchi}\bn'[\bxi]
	\coloneqq
	\lim_{t \to 0}
	\frac{\parallelTransportCurve{\gamma}{t}^{-1}\paren[big](){\bn_{\gamma(t)}'[\parallelTransportCurve{\gamma}{t}(\bxi)]}
		-
		\bn_\bx'[\bxi]
	}{t}
	.
\end{equation}

We can use $\covariantDerivative{\bchi}\bn'[\bxi]$ to define the analog of a linear function $\mesh \to \R$, \ie, of a function whose second derivative vanished.
The normal vector field $\bn \colon \mesh \to \sphere$ can be considered \enquote{linear} in a region~$\mesh_0$ with non-empty interior in case $\covariantDerivative{\bchi}\bn'[\bxi] = \bnull$ holds for all $\bchi, \bxi \in \tangentSpace{\bx}[\mesh]$ and all points~$\bx \in \mesh_0$.
Such areas can be exptected to be favored by the $\TGV$ regularizer we devise in \cref{subsection:discrete-tgv-of-the-normal-vector-field}.
We now relate this \enquote{linearity} of the normal vector field with the curvature of the surface~$\mesh$.
To this end, recall that the eigenvalues of $\bn_\bx' \in L(\tangentSpace{\bx}[\mesh],\tangentSpace{\bx}[\mesh])$ are known as principal curvatures while the eigenvectors are the principal directions of curvature.
Provided that $\covariantDerivative{\bchi}\bn'[\bxi] = \bnull$ for all $\bchi, \bxi \in \tangentSpace{\bx}[\mesh]$ holds in a region~$\mesh_0$ with non-empty interior, the numerator of \eqref{eq:second-derivative-of-the-normal} is zero along curves $\gamma$ in $\mesh_0$.
Choosing then $\bxi$ as an eigenvector of $\bn_\bx'$ with eigenvalue $\lambda$ yields
\begin{align*}
	\parallelTransportCurve{\gamma}{t}^{-1} \paren[big](){\bn_{\gamma(t)}'[\parallelTransportCurve{\gamma}{t}(\bxi)]}
	=
	\bn_\bx'[\bxi]
	=
	\lambda \, \bxi
	.
\end{align*}
Applying the parallel transport on both sides shows
\begin{equation*}
	\bn_{\gamma(t)}' \paren[big][]{\parallelTransportCurve{\gamma}{t}(\bxi)}
	=
	\lambda \, \parallelTransportCurve{\gamma}{t}(\bxi)
	,
\end{equation*}
\ie, $\lambda$ is an eigenvalue of $\bn_{\gamma(t)}'$ to the eigenvector $\parallelTransportCurve{\gamma}{t}(\bxi)$.
This means that \enquote{linearity} of the normal vector in the previously mentioned sense implies that the principal curvatures are constant in $\mesh_0$ and the principal directions of curvature are related by simple parallel transports.
This property is characteristic for planes, spheres and the lateral surface of cylinders.
Consequently, we expect that the regularizer based on the total generalized variation of the normal vector (see \eqref{eq:tgv:normal}) will favor shapes that are piecewise surfaces of these three types.

Notice that after the derivation and interpretation of \eqref{eq:second-derivative-of-the-normal}, the identification of $\tangentSpace{\bn(\bx)}[\sphere]$ with $\tangentSpace{\bx}[\mesh]$ is no longer required. Hence, $\covariantDerivative{\bchi}\bn_\bx'[\bxi]$ can be treated as an element of $\tangentSpace{\bn(\bx)}[\sphere]$ rather then $\tangentSpace{\bx}[\mesh]$.

\subsection{Tangential Raviart--Thomas Space}
\label{subsection:tangential-Raviart-Thomas-space}

We now return to the setting where $\mesh$ is a triangular and orientable surface mesh embedded in $\R^3$.
Using insights from the smooth setting in \cref{subsection:first-and-second-order-derivatives-of-normal-vector-fields-on-surfaces}, we construct a tailored Raviart--Thomas-like finite element space for the auxiliary variable that captures derivatives of the normal vector field in a discrete $\TGV$ formulation.
Notice that a similar idea has already been mentioned in \cite[Remark~2.4]{BrediesHollerStorathWeinmann:2018:1}, but only for Cartesian grids.
While it is natural to consider the derivative of the normal vector field intrinsically between two-dimensional tangent spaces, we find it convenient for implementation purposes to work with the ambient space $\R^3$.
To this end, we identify the space $L(\tangentSpace{\bx}[\cM],\tangentSpace{\bn(x)}[\sphere])$ with a subspace of $L(\R^3,\R^3)$, by mapping vectors orthogonal to $\tangentSpace{\bx}[\cM]$ to $\bnull$.
Using, for instance, the standard basis in~$\R^3$, an element of $L(\R^3,\R^3)$ can be represented by a $\R^{3 \times 3}$ matrix whose rows can be interpreted as elements of $\cotangentSpace{\bx}[\cM] \subseteq \R^{1 \times 3}$ and columns are interpreted as elements of $\tangentSpace{\bn(x)}[\sphere] \subseteq \R^3$.

For each point in the interior of a triangle $\triangle \in \triangles$, a function of the RT space under construction should take values in the tensor product space $\tangentSpace{\bn_\triangle}[\sphere] \otimes \tangentSpace{\triangle}[\mesh]$.
Therefore, we denote the space by $\tangentRT$.
In contrast to previous work on finite element spaces with tangent space-valued data such as \cite{Sander:2012:1}, we benefit from the fact that the tangent space is constant across each triangle, which simplifies the setting significantly.
Across an edge~$\edge$, however, the normal vector~$\bn$ may change.
Hence the tangent space spaces $\tangentplus$ and $\tangentminus$ may be different as well.

The desired mapping properties of $W$ into $\tangentSpace{\bn_\triangle}[\sphere] \otimes \tangentSpace{\triangle}[\mesh]$ entail that
\begin{equation*}
	\eplus{W} \eplus{\bmu}
	\in
	\tangentplus
	\quad
	\text{and}
	\quad
	- \eminus{W} \eminus{\bmu}
	\in
	\tangentminus
\end{equation*}
since the co-normal $\eplus{\bmu}$ belongs to $\tangentplus$ and $\eminus{\bmu}$ belongs to $\tangentminus$.

We impose the co-normal continuity of~$W$ in the sense of
\begin{equation}
	\label{eq:tangent-RT0:conormal-continuity}
	\eplus{W} \eplus{\bmu}
	=
	\parallelTransport[big]{\eminus{\bn}}{\eplus{\bn}}(- \eminus{W} \eminus{\bmu})
\end{equation}
on all edges.
This is a natural, intrinsic generalization of \eqref{eq:RT0:conormal-continuity} for the standard RT space.

To obtain an equivalent, more manageable formulation of \eqref{eq:tangent-RT0:conormal-continuity}, we use the identity $\id = \eplus{\bn}^{} \eplus{\bn}^\transp + \eplus{\bmu}^{} \eplus{\bmu}^\transp + \bt_\edge^{} \bt_\edge^\transp$ (and similarly with $\edge_-$) to rewrite \eqref{eq:tangent-RT0:conormal-continuity} as
\begin{equation*}
	(\eplus{\bmu} \eplus{\bmu}^\transp + \bt_\edge^{} \bt_\edge^\transp) \, \eplus{W} \eplus{\bmu}
	=
	\parallelTransport[big]{\eminus{\bn}}{\eplus{\bn}}(-(\eminus{\bmu} \eminus{\bmu}^\transp + \bt_\edge^{} \bt_\edge^\transp) \, \eminus{W} \eminus{\bmu})
	.
\end{equation*}
Using $\parallelTransport{\eminus{\bn}}{\eplus{\bn}}(- \eminus{\bmu}) = \eplus{\bmu}$ and $\parallelTransport{\eminus{\bn}}{\eplus{\bn}}(\bt_\edge) = \bt_\edge$, see \cref{lemma:sphere:parallel-transport:using-co-normals} and \cref{figure:co-normal-vectors}, we can further rewrite this as
\begin{equation*}
	\eplus{\bmu} \paren[big](){\eplus{\bmu}^\transp \eplus{W}^{} \eplus{\bmu}^{}}
	+
	\bt_\edge^{} \paren[big](){\bt_\edge^\transp \eplus{W}^{} \eplus{\bmu}^{}}
	=
	\eplus{\bmu} \paren[big](){\eminus{\bmu}^\transp \eminus{W}^{} \eminus{\bmu}^{}}
	+
	\bt_\edge \paren[big](){- \bt_\edge^\transp \eminus{W}^{} \eminus{\bmu}^{}}
	.
\end{equation*}
Since $\eplus{\bmu}$ and $\bt_\edge$ are orthogonal, this is equivalent to
\begin{equation}
	\label{eq:tangent-RT0:conormal-continuity:alternative}
	\begin{aligned}
		\eplus{\bmu}^\transp \eplus{W}^{} \eplus{\bmu}
		&
		=
		\eminus{\bmu}^\transp \eminus{W}^{} \eminus{\bmu}
		,
		\\
		\bt_\edge^\transp \eplus{W}^{} \eplus{\bmu}
		&
		=
		- \bt_\edge^\transp \eminus{W}^{} \eminus{\bmu}
		.
	\end{aligned}
\end{equation}
This motivates the following basis functions indexed by the edges~$\edge$ and supported on the adjacent triangles $T_{E_\pm}$:
\begin{subequations}
	\label{eq:tangent-RT0:basis}
	\begin{align}
		\Phi_{\edge,1}(\bx)
		&
		=
		\begin{cases}
			\tfrac{1}{2 \, \abs{\eplus{\triangle}}} \eplus{\bmu}(\bx - \eplus{\bp})^\transp
			&
			\text{ if }
			\bx \in \eplus{\triangle}
			,
			\\
			\tfrac{1}{2 \, \abs{\eminus{\triangle}}} \eminus{\bmu}(\bx - \eminus{\bp})^\transp
			&
			\text{ if }
			\bx \in \eminus{\triangle}
			,
			\\
			0
			&
			\text{ else}
			,
		\end{cases}
		\label{eq:tangent-RT0:basis:1}
		\\
		\Phi_{\edge,2}(\bx)
		&
		=
		\begin{cases}
			\mrep{\tfrac{1}{2 \, \abs{\eplus{\triangle}}} \bt_\edge \, (\bx - \eplus{\bp})^\transp}{\tfrac{1}{2 \, \abs{\eplus{\triangle}}} \eplus{\bmu}(\bx - \eplus{\bp})^\transp}
			&
			\text{ if }
			\bx \in \eplus{\triangle}
			,
			\\
			\tfrac{-1}{2 \, \abs{\eminus{\triangle}}} \bt_\edge \, (\bx - \eminus{\bp})^\transp
			&
			\text{ if }
			\bx \in \eminus{\triangle}
			,
			\\
			0
			&
			\text{ else}
			.
		\end{cases}
		\label{eq:tangent-RT0:basis:2}
	\end{align}
\end{subequations}
where $\bx$ is the spatial coordinate and $\bp_{E_\pm}$ is the coordinate of the vertex of~$T_{E_\pm}$ opposite to~$\edge$.
It is easy to see that, as desired, $\Phi_{\edge,1}$ and $\Phi_{\edge,2}$ are linear, matrix-valued functions with rows from $\cotangentSpace{\triangle}[\, \mesh]$ and columns from $\tangentSpace{\bn_\triangle}[\sphere]$ on each triangle $\triangle \in \triangles$.
To verify the intrinsic co-normal continuity \eqref{eq:tangent-RT0:conormal-continuity:alternative} of these basis functions, a simple geometric consideration shows $(\bx - \bp_{E_\pm})^\transp \bmu_{E_\pm} = \tfrac{2 \, \abs{T_{E_\pm}}}{\abs{\edge}}$ for any $\bx \in \edge$.
Plugging this in yields
\begin{align*}
	\eplus{\bmu}^\transp \eplus{(\Phi_{\edge,1})} \eplus{\bmu}
	&
	\equiv
	\frac{1}{\abs{\edge}}
	=
	\eminus{\bmu}^\transp \eminus{(\Phi_{\edge,1})} \eminus{\bmu}
	,
	\\
	\bt_\edge^\transp \eplus{(\Phi_{\edge,2})} \eplus{\bmu}
	&
	\equiv
	\frac{1}{\abs{\edge}}
	=
	- \bt_\edge^\transp \eminus{(\Phi_{\edge,2})} \eminus{\bmu}
	.
\end{align*}
Consequently, one obtains
\begin{alignat*}{2}
	\delta_{i1}
	&
	=
	\int_\edge \eplus{\bmu}^\transp \eplus{(\Phi_{\edge,i})} \eplus{\bmu} \dS
	&
	&
	=
	\int_\edge \eminus{\bmu}^\transp \eminus{(\Phi_{\edge,i})} \eminus{\bmu} \dS
	,
	\\
	\delta_{i2}
	&
	=
	\int_\edge \bt_\edge^\transp \eplus{(\Phi_{\edge,i})} \eplus{\bmu} \dS
	&
	&
	=
	- \int_\edge \bt_\edge^\transp \eminus{(\Phi_{\edge,i})} \eminus{\bmu} \dS.
\end{alignat*}
for $i = 1, 2$, where $\delta_{ij}$ is the Kronecker delta.
Furthermore, for an edge $\widetilde{\edge} \neq \edge$, we have
\begin{align*}
	0
	&
	=
	\int_\edge \eplus{\bmu}^\transp \eplus{(\Phi_{\widetilde{\edge},i})} \eplus{\bmu} \dS
	,
	\\
	0
	&
	=
	\int_\edge \bt_\edge^\transp \eplus{(\Phi_{\widetilde{\edge},i})} \eplus{\bmu} \dS
	,
\end{align*}
for $i = 1,2$.
This is because we have $(\bx - \widetilde{\bp})^\transp \eplus{\bmu} = 0$ for $\widetilde{\bp}$ for $\bx \in \edge$, where $\widetilde{\bp}$ is the coordinate of the vertex opposite to $\widetilde{\edge}$.
The situation on $\eminus{\triangle}$ is similar.
Therefore,
\begin{subequations}
	\label{eq:tangent-RT0:degrees-of-freedom}
	\begin{alignat}{2}
		\int_\edge \eplus{\bmu}^\transp \eplus{W} \eplus{\bmu} \dS
		&
		=
		\int_\edge \eminus{\bmu}^\transp \eminus{W} \eminus{\bmu} \dS
		\\
		\int_\edge \bt_\edge^\transp \eplus{W} \eplus{\bmu} \dS
		&
		=
		- \int_\edge \bt_\edge^\transp \eminus{W} \eminus{\bmu} \dS
	\end{alignat}
\end{subequations}
are the degrees of freedom corresponding to the basis functions $\phi_{\edge,1}$, $\phi_{\edge,2}$,
which is closely related to the degrees of freedom of the standard Raviart--Thomas space from the literature~\eqref{eq:RT0:degrees-of-freedom}.
Therefore, we define
\begin{equation}
	\tangentRT
	\coloneqq
	\Span \bigcup_{\edge \in \edges} \set{\phi_{\edge,1}, \phi_{\edge,2}}
	.
\end{equation}

In the discrete $\TGV$ \eqref{eq:tgv:DG0} from \cite{BaumgaertnerBergmannHerzogSchmidtVidalNunez:2023:1}, \ie, the scalar-valued setting, the $\alpha_0$-terms evaluate the (discrete) total variation of the auxiliary variable $\bw \in \RT{0}$.
When adapting the formulation to the normal vector of a mesh, the same has to be done for $W \in \tangentRT$.
It consists of the variation of~$W$ on each triangle as well as contributions from the edges.

Since the triangles are planar, their tangent spaces are constant and the parallel transports in \eqref{eq:second-derivative-of-the-normal} can be omitted.
Hence, the derivative of the matrix-valued $W$ in a triangle $T$ is a constant tensor of order three which is computed by standard techniques.
This defines a piecewise Jacobian of $W$, which we denote by $D_\mesh W$.
Then, $\restr{\D_\mesh W}{\triangle} \in P_0(\triangle, \tangentSpace{\bn_\triangle}[\sphere] \otimes \tangentSpace{\triangle}[\mesh] \otimes \tangentSpace{\triangle}[\mesh])$ and
\begin{align*}
	\D_\mesh \Phi_{\edge,1}(\bx)
	&
	=
	\begin{cases}
		\tfrac{1}{2 \, \abs{\eplus{\triangle}}} \eplus{\bmu} \otimes (\id - \eplus{\bn}^{} \eplus{\bn}^\transp)
		&
		\text{ if }
		\bx \in \eplus{\triangle}
		,
		\\
		\tfrac{1}{2 \, \abs{\eminus{\triangle}}} \eminus{\bmu} \otimes (\id - \eminus{\bn}^{} \eminus{\bn}^\transp)
		&
		\text{ if }
		\bx \in \eminus{\triangle}
		,
		\\
		0
		&
		\text{ else }
		,
	\end{cases}
	\\
	\D_\mesh \Phi_{\edge,2}(\bx)
	&
	=
	\begin{cases}
		\mrep{\tfrac{1}{2 \, \abs{\eplus{\triangle}}} \bt_\edge \otimes (\id - \eplus{\bn}^{} \eplus{\bn}^\transp)}{\tfrac{1}{2 \, \abs{\eplus{\triangle}}} \eplus{\bmu} \otimes (\id - \eplus{\bn}^{} \eplus{\bn}^\transp)}
		&
		\text{ if }
		\bx \in \eplus{\triangle}
		,
		\\
		\tfrac{-1}{2 \, \abs{\eminus{\triangle}}} \bt_\edge \otimes (\id - \eminus{\bn}^{} \eminus{\bn}^\transp)
		&
		\text{ if }
		\bx \in \eminus{\triangle}
		,
		\\
		0
		&
		\text{ else}
		.
	\end{cases}
\end{align*}

For the edge contribution, things will be more involved.
If two adjacent triangles $\eplus{\triangle}$ and $\eminus{\triangle}$ at an edge $\edge$ are not co-planar, the tangent spaces will differ.
To measure the jump between $\eplus{W}$ and $\eminus{W}$ intrinsically, we proceed similarly as for the numerator in \eqref{eq:second-derivative-of-the-normal}.
Hence, we require a parallel transport from $\tangentSpace{\eplus{\triangle}}[\! \! \mesh]$ to $\tangentSpace{\eminus{\triangle}}[\!\!\mesh]$.
For this purpose, we again use the identification of $\tangentSpace{\triangle}[\mesh]$ and $\tangentSpace{\bn_\triangle}[\sphere]$ and use the parallel transport along shortest geodesics on the sphere \eqref{eq:sphere:parallel-transport:1}.
This means that the jump of~$W$, applied to a tangent vector $\bxi \in \tangentplus$, across an edge is computed via
\begin{equation}
	\label{eq:tangent-RT0:jump-of-W:general-direction}
	\parallelTransport[big]{\eminus{\bn}}{\eplus{\bn}}(\eminus{W} \parallelTransport{\eplus{\bn}}{\eminus{\bn}}{\bxi})
	-
	\eplus{W} \bxi
	.
\end{equation}
We examine the above term by inserting the orthonormal basis vectors $\eplus{\bmu},\bt_\edge$ for~$\bxi$.
When $\bxi = \eplus{\bmu}$, the difference in \eqref{eq:tangent-RT0:jump-of-W:general-direction} is zero due to the co-normal continuity \eqref{eq:tangent-RT0:conormal-continuity} and using $\parallelTransport{\eplus{\bn}}{\eminus{\bn}}(\eplus{\bmu}) = - \eminus{\bmu}$.
When $\bxi = \bt_\edge$ we obtain $\parallelTransport{\eplus{\bn}}{\eminus{\bn}}(\bt_\edge) = \bt_\edge$ and thus the inner parallel transport in \eqref{eq:tangent-RT0:jump-of-W:general-direction} can be omitted.
We thus define the intrinsic jump as
\begin{equation}
	\label{eq:tangent-RT0:jump-of-W}
	\jump{W}_\edge
	\coloneqq
	\parallelTransport[big]{\eminus{\bn}}{\eplus{\bn}}(\eminus{W} \bt_\edge)
	-
	\eplus{W} \bt_\edge
	\in
	\tangentplus
	.
\end{equation}

\subsection{Discrete Total Generalized Variation of the Normal Vector Field}
\label{subsection:discrete-tgv-of-the-normal-vector-field}

In the following, we adapt \eqref{eq:tgv:DG0} from piecewise constant, scalar-valued functions $u \colon \mesh \to \R$ to the piecewise constant normal vector field $\bn \colon \mesh \to \sphere$.
For the $\alpha_1$-term, we couple the logarithmic map from the total variation of the normal vector formula \eqref{eq:tv:normal:alternative} to the co-normal component of a function $W \in \tangentRT$.
Replacing the jump in \eqref{eq:tgv:DG0} by its analogue for the normal vector, the $\alpha_1$-term on an edge $\edge \in \edges$ becomes
\begin{equation}
	\label{eq:tangent-RT0:alpha1-term}
	\abs[big]{\logarithm{\eplus{\bn}}{\eminus{\bn}} + h_\edge \, \eplus{W} \eplus{\bmu}}_2
	.
\end{equation}
Using the fact that $\logarithm{\eplus{\bn}}{\eminus{\bn}}$ is a multiple of $\eplus{\bmu}$ by \cref{lemma:sphere:logarithmic-map:using-co-normals} and that $\set{\eplus{\bmu}, \bt_\edge}$ form an orthonormal basis of $\tangentplus$, this can be rewritten to
\begin{multline}
	\label{eq:tangent-RT0:alpha1-term:alternative}
	\abs[big]{\logarithm{\eplus{\bn}}{\eminus{\bn}} + h_\edge \, \eplus{W} \eplus{\bmu}}_2^2
	\\
	=
	\abs[big]{\eplus{\bmu}^\transp \paren(){\logarithm{\eplus{\bn}}{\eminus{\bn}} + h_\edge \, \eplus{W} \eplus{\bmu}}}^2
	+
	\abs[big]{h_\edge \, \bt_\edge^\transp \eplus{W} \eplus{\bmu}}^2
	.
\end{multline}
This reveals that \eqref{eq:tangent-RT0:alpha1-term} in fact couples the logarithmic map only to one of the degrees of freedom of $W$ on the edge, namely $\eplus{\bmu}^\transp \eplus{W}^{} \eplus{\bmu}^{}$, while the other one, $\bt_\edge^\transp \, \eplus{W}^{} \eplus{\bmu}^{}$, is penalized.
The term $\bt_\edge^\transp \, \eplus{W}^{} \eplus{\bmu}^{}$ being zero would mean that $\eplus{\bmu}$ and $\bt_\edge$ are principal directions of curvature, because in this basis, the matrix $\eplus{W}$ is diagonal.

Since we aim to avoid that the local quantities
$\eplus{\bmu}$ and $\bt_\edge$
become principal directions of curvature, we omit the last term in \eqref{eq:tangent-RT0:alpha1-term:alternative} and only use
\begin{equation}
	\abs[big]{\eplus{\bmu}^\transp \paren(){\logarithm{\eplus{\bn}}{\eminus{\bn}} + h_\edge \, \eplus{W} \eplus{\bmu}}}
\end{equation}
for the coupling between the normal vector~$\bn$ and the auxiliary variable~$W$.

For the $\alpha_0$-term we utilize the tangential Jacobian in the triangles as well as the tangential jump \eqref{eq:tangent-RT0:jump-of-W}.
Overall, we propose the following formulation as the discrete total generalized variation of the normal vector field $\bn \colon \mesh \to \sphere$:
\begin{align}
	\label{eq:tgv:normal}
	\fetgv_{(\alpha_0, \alpha_1)}^2(\bn)
	&
	\coloneqq
	\\
	\min_{W \in \tangentRT}
	&
	\alpha_1 \sum_{\edge \in \edges} \int_\edge \abs[big]{\eplus{\bmu}^\transp \paren(){\logarithm{\eplus{\bn}}{\eminus{\bn}} + h_\edge \, \eplus{W} \eplus{\bmu}}} \dS
	\notag
	\\
	{}
	+
	{}
	&
	\alpha_0 \sum_{\triangle \in \triangles} \int_\triangle \abs{\D_\mesh W_\triangle}_F \dx
	+
	\alpha_0 \sum_{\edge \in \edges} \int_\edge \interpolate[big]{1}{\abs[big]{\jump{W}_\edge}_2} \dS
	.
	\notag
\end{align}
The reader is invited to compare this with the discrete total generalized variation \eqref{eq:tgv:DG0} for a scalar quantity.
Here, the distance between the circumcenters, $h_\edge$, is measured intrinsically, as described in \cite[Section~5.2.3]{BaumgaertnerBergmannHerzogSchmidtVidalNunez:2023:1}, \ie,
\begin{equation}
	h_\edge
	\coloneqq
	\dotproduct{\eplus{\bmu}}{(\bm_\edge - \eplus{\bm})}
	+
	\dotproduct{\eminus{\bmu}}{(\bm_\edge - \eminus{\bm})}
	,
\end{equation}
where $\eplus{\bm}$ and $\eminus{\bm}$ denote the circumcenters of the triangles $\eplus{\triangle}$ and $\eminus{\triangle}$, and $\bm_\edge$ denotes the midpoint of the edge.

\begin{lemma}
	Formulation \eqref{eq:tgv:normal} is independent of the orientation of the edges.
\end{lemma}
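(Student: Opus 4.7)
The plan is to separate the edge orientation data into two independent atomic choices on each $\edge \in \edges$: (i)~the labeling of the adjacent triangles as $\eplus{\triangle}$ versus $\eminus{\triangle}$, and (ii)~the direction of the unit tangent $\bt_\edge$. A general reorientation is a composition of these two flips per edge, so showing invariance of \eqref{eq:tgv:normal} under each flip separately suffices.

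First I would check that the admissible space $\tangentRT$ itself is orientation-invariant. From \eqref{eq:tangent-RT0:basis}, $\Phi_{\edge,1}$ is symmetric in the labels $\pm$ and does not involve $\bt_\edge$, while $\Phi_{\edge,2}$ merely picks up an overall sign under either flip; hence the span is unchanged. It therefore suffices to fix $W \in \tangentRT$ and to verify that the value of the objective in \eqref{eq:tgv:normal} at this $W$ is unchanged by either flip. The triangle terms $\int_\triangle \abs{\D_\mesh W_\triangle}_F \dx$ and the mesh factor $h_\edge$ manifestly do not depend on edge orientation, so these are trivially invariant.

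The core of the argument concerns the two edge-based terms. For the $\alpha_1$-term the tangent $\bt_\edge$ does not appear, so flip~(ii) is immediate. For the swap (i), \cref{lemma:sphere:logarithmic-map:using-co-normals} applied once in each orientation yields
\begin{equation*}
  \eplus{\bmu}^\transp \logarithm{\eplus{\bn}}{\eminus{\bn}}
  = \sign\paren[big](){\inner{\eminus{\bn}}{\eplus{\bmu}}}\,\dist{\eplus{\bn}}{\eminus{\bn}}
  = \eminus{\bmu}^\transp \logarithm{\eminus{\bn}}{\eplus{\bn}},
\end{equation*}
where the second equality uses the identity $\inner{\eplus{\bn}}{\eminus{\bmu}} = \inner{\eminus{\bn}}{\eplus{\bmu}}$ shown in the proof of that lemma. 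For the $W$ contribution I would combine the co-normal continuity \eqref{eq:tangent-RT0:conormal-continuity}, the isometry (self-adjointness) of parallel transport, and the identity $\parallelTransport{\eplus{\bn}}{\eminus{\bn}}(\eplus{\bmu}) = -\eminus{\bmu}$ from \eqref{eq:sphere:parallel-transport:transporting-co-normals} to deduce $\eplus{\bmu}^\transp \eplus{W} \eplus{\bmu} = \eminus{\bmu}^\transp \eminus{W} \eminus{\bmu}$. Together with the logarithmic-map identity above, this gives invariance of the $\alpha_1$-term.

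For the $\alpha_0$-jump term, flip~(ii) replaces $\bt_\edge$ by $-\bt_\edge$ and hence sends $\jump{W}_\edge$ to $-\jump{W}_\edge$, preserving $\abs{\jump{W}_\edge}_2$. For flip~(i), applying the isometry $\parallelTransport{\eplus{\bn}}{\eminus{\bn}}$ to the jump defined in \eqref{eq:tangent-RT0:jump-of-W} produces exactly the negative of the jump with $\pm$ interchanged, so the Euclidean norms coincide. The main obstacle I anticipate is the sign bookkeeping in the $W$-piece of the $\alpha_1$-term, where the minus sign in \eqref{eq:tangent-RT0:conormal-continuity}, the sign reversal in the co-normal transport \eqref{eq:sphere:parallel-transport:transporting-co-normals}, and the adjoint relation for the parallel transport must combine cleanly; the remaining terms reduce to routine isometry arguments.
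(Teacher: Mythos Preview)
Your proposal is correct and follows essentially the same approach as the paper. The paper's proof also treats the three terms separately, invokes \cref{lemma:sphere:logarithmic-map:using-co-normals} for the logarithmic-map identity, and uses the isometry of parallel transport to handle the jump term; the only notable differences are that the paper cites the already-established equivalent form \eqref{eq:tangent-RT0:conormal-continuity:alternative} directly for $\eplus{\bmu}^\transp \eplus{W}\,\eplus{\bmu} = \eminus{\bmu}^\transp \eminus{W}\,\eminus{\bmu}$ rather than re-deriving it from \eqref{eq:tangent-RT0:conormal-continuity} and \eqref{eq:sphere:parallel-transport:transporting-co-normals}, and it does not explicitly split the reorientation into the two atomic flips (i) and (ii) as you do---your decomposition is a bit more systematic, but the underlying argument is the same.
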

\begin{proof}
	For the $\alpha_1$-term first notice that by \cref{lemma:sphere:logarithmic-map:using-co-normals}, we have
	\begin{equation*}
		\dotproduct{\eplus{\bmu}}{\logarithm{\eplus{\bn}}{\eminus{\bn}}}
		=
		\dotproduct{\eminus{\bmu}}{\logarithm{\eminus{\bn}}{\eplus{\bn}}}
		.
	\end{equation*}
	Thereby, using also the tangential co-normal continuity \eqref{eq:tangent-RT0:conormal-continuity:alternative}, it holds
	\begin{equation*}
		\abs[big]{\dotproduct{\eplus{\bmu}}{\paren(){\logarithm{\eplus{\bn}}{\eminus{\bn}} + h_\edge \, \eplus{W} \eplus{\bmu}}}}
		=
		\abs[big]{\dotproduct{\eminus{\bmu}}{\paren(){\logarithm{\eminus{\bn}}{\eplus{\bn}} + h_\edge \, \eminus{W} \eminus{\bmu}}}}
		.
	\end{equation*}
	Clearly, the triangle contribution will be independent of the orientation if the sign of the degrees of freedom \eqref{eq:tangent-RT0:degrees-of-freedom} are flipped accordingly.
	For the last term, we can use that the parallel transport \eqref{eq:sphere:parallel-transport} is norm preserving and linear and thus
	\begin{align*}
		\abs[big]{\jump{W}_\edge}_2
		&
		=
		\abs[big]{\parallelTransport{\eplus{\bn}}{\eminus{\bn}}(\jump{W}_\edge)}_2
		\\
		&
		=
		\abs[big]{ \parallelTransport[big]{\eplus{\bn}}{\eminus{\bn}}(\parallelTransport{\eminus{\bn}}{\eplus{\bn}}(\eminus{W} \bt_\edge)) - \parallelTransport{\eplus{\bn}}{\eminus{\bn}}(\eplus{W} \bt_\edge)}_2
		\\
		&
		=
		\abs[big]{\eminus{W} \bt_\edge - \parallelTransport{\eplus{\bn}}{\eminus{\bn}}(\eplus{W} \bt_\edge)}_2
		\\
		&
		=
		\abs[big]{ \parallelTransport{\eplus{\bn}}{\eminus{\bn}}(\eplus{W} \bt_\edge) - \eminus{W} \bt_\edge}_2
		.
	\end{align*}
	The last quantity is precisely \eqref{eq:tangent-RT0:jump-of-W} with the roles of $\eplus{\triangle}$ and $\eminus{\triangle}$ swapped.
\end{proof}

\section{Numerical Realization}
\label{section:numerical-realization}

In this section we present a realization of the discrete total generalized variation of the normal \eqref{eq:tgv:normal} as a regularizer utilizing the alternating direction method of multipliers (ADMM).
This method is utilized to deal with the non-smoothness present in all problems involving total variation terms.
To formulate the ADMM, finite element spaces with values in tangent spaces are required, which will be introduced first.

\subsection{Tangential Finite Element Spaces}
\label{subsection:notation-and-finite-element-spaces:tangential}

In addition to the standard spaces defined in \cref{subsection:notation-and-finite-element-spaces}, we define $\DG{0}(\mesh, \sphere)$ as the finite element space with constant, $\sphere$-valued data on each triangle. In particular, the normal vector~$\bn$ to~$\mesh$ belongs to $\DG{0}(\mesh, \sphere)$. Furthermore, we define $\DG{0}(\edges, \sphere)$ as the space with piecewise constant, $\sphere$-valued data on edges.

Since $\sphere$ is embedded in $\R^3$, the tangent space $\tangentSpace{\bm}[\sphere]$ at a point $\bm \in \sphere$ is a subspace of $\R^3$.
Given the normal vector field $\bn \in \DG{0}(\mesh, \sphere)$, we define $\DG{r}(\mesh, \tangentBundle[\sphere], \bn)$ as
\begin{equation}
	\DG{r}(\mesh, \tangentBundle[\sphere], \bn)
	\coloneqq
	\setDef[big]{\bu \in \DG{r}(\mesh, \R^3)}{\bu_\triangle \in P_r(\triangle, \tangentSpace{\bn_\triangle}[\sphere]) \text{ for all } \triangle \in \triangles}
	,
\end{equation}
a subspace of $\DG{r}(\triangle, \R^3)$.
In other words, $\DG{r}(\mesh, \tangentBundle[\sphere], \bn)$ consists of piecewise polynomials with values in the tangent space to the sphere~$\sphere$ at the point specified by the normal vector in the respective triangle.
Similarly, we define $\DG{r}(\edges, \tangentBundle[\sphere], \eplus{\bn})$ as the subspace of $\DG{r}(\edges, \R^3)$ with values in $\tangentplus$ on each $\edge \in \edges$.

Recall that we identify the tangent space $\tangentSpace{\triangle}[\mesh]$ of a triangle~$\triangle$ is denoted with $\tangentSpace{\bn_\triangle}[\sphere]$.
Therefore, we define $\DG{r}(\mesh, \tangentBundle[\mesh], \bn)\coloneqq \DG{r}(\mesh, \tangentBundle[\sphere], \bn)$.
However, we continue to use both notations.

\subsection{Derivation of the ADMM}

We consider the problem
\begin{align}
	\text{Minimize}
	\quad
	&
	\cF(\mesh)
	+
	\alpha_1 \sum_{\edge \in \edges} \int_\edge \abs[big]{\dotproduct{\eplus{\bmu}}{\paren(){\logarithm{\eplus{\bn}}{\eminus{\bn}} + h_\edge \, \eplus{W} \eplus{\bmu}}}} \dS
	\notag
	\\
	&
	+
	\alpha_0 \sum_{\triangle \in \triangles} \int_\triangle \abs{\D_\mesh W_\triangle}_F \dx
	+
	\alpha_0 \sum_{\edge \in \edges} \int_\edge \interpolate[big]{1}{\abs[big]{\jump{W}_\edge}_2} \dS
	\notag
	\\
	\text{\wrt\ }
	\quad
	&
	\text{the vertex positions in }
	\mesh
	\text{ and }
	W \in \tangentRT
	.
	\label{eq:generic-problem-with-tgv-of-the-normal-regularization}
\end{align}
where $\cF$ is some smooth function depending on the mesh~$\mesh$ and the remaining terms in the objective represent the discrete total generalized variation of the normal vector field $\bn$, see \eqref{eq:tgv:normal}.
A concrete example for $\cF$ for the purpose of mesh denoising will be specified in \eqref{eq:mesh-denoising-with-tgv-of-the-normal-regularization}.

The first optimization variable in \eqref{eq:generic-problem-with-tgv-of-the-normal-regularization} is the collection of vertex positions.
The connectivity of the mesh~$\mesh$ is preserved throughout the optimization.
The second optimization variable $W \in \tangentRT$ depends on the current vertex positions, and thus technically we are facing an optimization problem over a fiber bundle.
Notice that also the quantities $\eplus{\bmu}$, $\eplus{\bn}$ and $\eminus{\bn}$ depend on the vertex positions.

Following the ADMM paradigm of adding variables to achieve simpler subproblems, we introduce additional variables $d_0 \in \DG{0}(\skeleton, \R)$, $\bd_2 \in \DG{1}(\skeleton, \tangentBundle[\sphere], \eplus{\bn})$ defined on the skeleton~$\skeleton$ of the mesh, as well as $\bD_1 \in \DG{0}(\mesh, \tangentBundle[\sphere] \otimes \tangentBundle[\mesh] \otimes \tangentBundle[\mesh], \bn)$ defined on the triangles.
These variables are coupled to the original quantities of the problem via the following constraints:
\begin{subequations}
	\label{eq:ADMM:constraints}
	\begin{alignat}{2}
		d_{0,\edge}
		&
		=
		\dotproduct{\eplus{\bmu}}{\paren(){\logarithm{\eplus{\bn}}{\eminus{\bn}} + h_\edge \, \eplus{W} \eplus{\bmu}}}
		&
		&
		\in
		P_0(\edge,\R)
		\label{eq:ADMM:constraints:1}
		\\
		\bD_{1, \triangle}
		&
		=
		\D_\mesh W_\triangle
		&
		&
		\in
		P_0(\triangle,\tangentSpace{\bn_\triangle}[\sphere] \otimes \tangentSpace{\triangle}[\mesh] \otimes \tangentSpace{\triangle}[\mesh])
		\label{eq:ADMM:constraints:2}
		\\
		\bd_{2,\edge}
		&
		=
		\jump{W}_\edge
		&
		&
		\in
		P_1(\edge,\tangentplus)
		\label{eq:ADMM:constraints:3}
	\end{alignat}
\end{subequations}
on all edges $\edge \in \edges$ and triangles $\triangle \in \triangles$ respectively.
For these constraints, Lagrange multipliers $\lambda_0 \in \DG{0} \paren[auto](){\skeleton, \R}$, $\bLambda_1 \in \DG{0} \paren[auto](){\mesh, \tangentBundle[\sphere] \otimes \tangentBundle[\mesh] \otimes \tangentBundle[\mesh], \bn}$ and $\blambda_2 \in \DG{1} \paren[auto](){\skeleton, \tangentBundle[\sphere], \eplus{\bn}}$ are introduced.
That is, we represent Lagrange multipliers as primal objects rather than elements from the dual space of the constraint co-domain.
To adjoin the constraints and specify the norms for the penalty terms, we use the following inner products:
\begin{alignat*}{2}
	\inner{\lambda_0}{d_0}_{\DG{0}(\skeleton, \R)}
	&
	\coloneqq
	\sum_{\edge \in \edges}
	\abs{\edge} \, \lambda_{0,\edge} \, d_{0,\edge}
	,
	\\
	\inner{\bLambda_1}{\bD_1}_{\DG{0}(\mesh, \tangentBundle[\sphere] \otimes \tangentBundle[\mesh] \otimes \tangentBundle[\mesh], \bn)}
	&
	\coloneqq
	\sum_{\triangle \in \triangles}
	\abs{\triangle} \, \bLambda_{1, \triangle} \dprod \bD_{1, \triangle}
	=
	\sum_{\triangle \in \triangles}
	\abs{\triangle} \, \trace \paren[big](){\bLambda_{1, \triangle}^\transp \bD_{1, \triangle}}
	,
	\\
	\inner{\blambda_2}{\bd_2}_{\DG{1}(\skeleton, \tangentBundle[\sphere], \eplus{\bn})}
	&
	\coloneqq
	\sum_{\edge \in \edges}
	\sum_{i=1}^2 \frac{\abs{\edge}}{2} \, \dotproduct{\blambda_{2,\edge}(X_{\edge, i})}{\bd_{2,\edge}(X_{\edge, i})}
	.
\end{alignat*}
Recall that $X_{\edge,1}, X_{\edge,2}$ are the two endpoints of the edge~$\edge$.
Notice that the interpolation operator has the same support points $X_{\edge,1}, X_{\edge,2}$ as the inner product, which will be exploited later.

Using these definitions as well as penalty parameters $\rho_0, \rho_1, \rho_2 > 0$, we obtain the following augmented Lagrangian function for the problem at hand:
\makeatletter
\begin{align}
	\MoveEqLeft
	\cL_\rho(\mesh, W, \bd_0,\bD_1, \bd_2, \blambda_0, \bLambda_1, \blambda_2)
	\notag
	\\
	&
	=
	\cF(\mesh)
	+
	\alpha_1 \sum_{\edge \in \edges} \abs{\edge} \, \abs{d_{0,\edge}}
	\ltx@ifclassloaded{siamonline250106}{}{%
		\notag
		\\
		&
		\quad
	}
	+
	\alpha_0 \sum_{\triangle \in \triangles} \abs{\triangle} \, \abs{\bD_{1,\triangle}}_F
	+
	\alpha_0 \sum_{\edge \in \edges} \smash{\sum_{i=1}^2} \frac{\abs{\edge}}{2} \, \abs[big]{\bd_2(X_{\edge, i})}_2
	\notag
	\\
	&
	\quad
	\mrep[r]{{}+{}}{{}+ \frac{\rho_0}{2}} \sum_{\edge \in \edges} \abs{\edge} \, \lambda_{0,\edge} \, \paren[big][]{\dotproduct{\eplus{\bmu}}{\paren(){\logarithm{\eplus{\bn}}{\eminus{\bn}} + h_\edge \, \eplus{W} \eplus{\bmu}}} - d_{0,\edge}}
	\notag
	\\
	&
	\quad
	+
	\frac{\rho_0}{2} \sum_{\edge \in \edges} \abs{\edge} \, \abs[big]{d_{0,\edge} - \dotproduct{\eplus{\bmu}}{(\logarithm{\eplus{\bn}}{\eminus{\bn}} + h_\edge \, \eplus{W} \eplus{\bmu})}}^2
	\notag
	\\
	&
	\quad
	\mrep[r]{{}+{}}{{}+ \frac{\rho_1}{2}} \sum_{\triangle \in \triangles} \abs{\triangle} \, \bLambda_{1, \triangle} \dprod \paren[big][]{\D_\mesh W_\triangle - \bD_{1,\triangle}}
	+
	\frac{\rho_1}{2} \sum_{\triangle \in \triangles} \abs{\triangle} \, \abs[big]{\bD_{1, \triangle} - \D_\mesh W_\triangle}_F^2
	\notag
	\\
	&
	\quad
	\mrep[r]{{}+{}}{{}+ \frac{\rho_2}{2}} \sum_{\edge \in \edges} \smash{\sum_{i=1}^2} \frac{\abs{\edge}}{2} \, \dotproduct{\blambda_2(X_{\edge, i})}{\paren[big][]{\jump{W}_\edge(X_{\edge, i}) - \bd_2(X_{\edge, i})}}
	\notag
	\\
	&
	\quad
	+
	\frac{\rho_2}{2} \sum_{\edge \in \edges} \smash{\sum_{i=1}^2} \frac{\abs{\edge}}{2} \, \abs[big]{\bd_2(X_{\edge,i}) - \jump{W}_\edge(X_{\edge, i})}_2^2
	.
	\label{eq:ADMM:augmented-Lagrangian}
\end{align}
\makeatother

The ADMM for problem \eqref{eq:generic-problem-with-tgv-of-the-normal-regularization} requires the independent minimization of the augmented Lagrangian \eqref{eq:ADMM:augmented-Lagrangian} with respect to the vertex positions of $\mesh$ as well as $W$ and the auxiliary variables $(d_0, \bD_1, \bd_2)$, and the update of multipliers.
Notice that the problems involving $d_0, \bD_1, \bd_2$ are independent of each other and thus these problems can be solved simultaneously in each iteration.
The overall algorithm is described in \cref{algorithm:ADMM}.
We now proceed to discussing the subproblems in more detail, in their order of appearance in \cref{algorithm:ADMM}.

\begin{algorithm}[htb]
	\caption{ADMM for problem \eqref{eq:generic-problem-with-tgv-of-the-normal-regularization}.}
	\label{algorithm:ADMM}
	\begin{algorithmic}[1]
		\Require initial mesh $\sequence{\mesh}{0}$
		\Require regularization parameters $\alpha_0, \alpha_1 > 0$
		\Require penalty parameters $\rho_0, \rho_1, \rho_2 > 0$
		\Ensure approximate solution of \eqref{eq:generic-problem-with-tgv-of-the-normal-regularization}
		\State Set $k \coloneqq 0$
		\State Set $\sequence{W}{0} \coloneqq 0$
		\State Set $(\sequence{d_0}{0},\sequence{\bD_1}{0},\sequence{\bd_2}{0}) \coloneqq (0,\bnull,\bnull)$
		\State Set $(\sequence{\lambda_0}{0},\sequence{\bLambda_1}{0},\sequence{\blambda_2}{0}) \coloneqq (0,\bnull,\bnull)$
		\While{not converged}
		\State Set $\sequence{d_0}{k+1} \coloneqq \argmin\limits_{\mrep{d_0 \in \DG{0}(\sequence{\skeleton}{k}, \R)}{}} \cL_\rho(\sequence{\mesh}{k},\sequence{W}{k}, d_0, \sequence{\bD_1}{k}, \sequence{\bd_2}{k}, \sequence{\lambda_0}{k}, \sequence{\bLambda_1}{k}, \sequence{\blambda_2}{k})$
		\label{line:ADMM:d0-problem}
		\State Set $\sequence{\bD_1}{k+1} \coloneqq \argmin\limits_{\mrep{\bD_1 \in \DG{0}(\sequence{\mesh}{k}, \tangentBundle[\sphere] \otimes \tangentBundle[\mesh] \otimes \tangentBundle[\mesh], \sequence{\bn}{k})}{}} \cL_\rho(\sequence{\mesh}{k},\sequence{W}{k}, \sequence{d_0}{k}, \bD_1, \sequence{\bd_2}{k}, \sequence{\lambda_0}{k}, \sequence{\bLambda_1}{k}, \sequence{\blambda_2}{k})$
		\label{line:ADMM:D1-problem}
		\State Set $\sequence{\bd_2}{k+1} \coloneqq \argmin\limits_{\mrep{\bd_2 \in \DG{1}( \sequence{\skeleton}{k}, \tangentBundle[\sphere], \sequence{ \eplus{\bn} }{k})}{}} \cL_\rho(\sequence{\mesh}{k},\sequence{W}{k}, \sequence{d_0}{k}, \sequence{\bD_1}{k}, \bd_2, \sequence{\lambda_0}{k}, \sequence{\bLambda_1}{k}, \sequence{\blambda_2}{k})$
		\label{line:ADMM:d2-problem}
		\State Set $\sequence{W}{k+1} \coloneqq \argmin\limits_{\mrep{W \in \tangentRT[\sequence{\mesh}{k}]}{}} \cL_\rho(\sequence{\mesh}{k},W, \sequence{d_0}{k+1}, \sequence{\bD_1}{k+1}, \sequence{\bd_2}{k+1}, \sequence{\lambda_0}{k}, \sequence{\bLambda_1}{k}, \sequence{\blambda_2}{k})$
		\label{line:ADMM:W-problem}
		\State Perform a number of globalized Newton steps for the approximate solution of
		\begin{equation*}
			\sequence{\mesh}{k+1}
			\approx
			\argmin_{\mesh} \cL_\rho(\mesh,\sequence{W}{k+1}, \sequence{d_0}{k+1}, \sequence{\bD_1}{k+1}, \sequence{\bd_2}{k+1}, \sequence{\lambda_0}{k}, \sequence{\bLambda_1}{k}, \sequence{\blambda_2}{k})
			,
		\end{equation*}
		using parallel transports to the correct tangent spaces for $\sequence{\bD_1}{k+1},\sequence{\bd_2}{k+1}$ and $\sequence{\bLambda_1}{k+1},\sequence{\blambda_2}{k+1}$
		\label{line:ADMM:shape-problem}
		\State Parallely transport $\sequence{\bD_1}{k+1},\sequence{\bd_2}{k+1}$ and $\sequence{\bLambda_1}{k+1},\sequence{\blambda_2}{k+1}$ from the tangent spaces corresponding to $\sequence{\mesh}{k}$ to the tangent spaces corresponding to $\sequence{\mesh}{k+1}$ using \eqref{eq:sphere:parallel-transport} and \eqref{eq:parallel-transport-of-a-tensor}
		\label{line:ADMM:parallel-transport}
		\State Set $\sequence{\lambda_{0, \edge}}{k+1} \coloneqq \sequence{\lambda_{0, \edge}}{k} + \rho_0 \paren[big][]{\dotproduct{\sequence{\eplus{\bmu}}{k+1}}{(\logarithm{\sequence{\eplus{\bn}}{k+1}}{\sequence{\eminus{\bn}}{k+1}} + \sequence{h_\edge}{k+1} \, \sequence{\eplus{W}}{k+1} \sequence{\eplus{\bmu}}{k+1})} - \sequence{d_{0, \edge}}{k+1}}$
		\label{line:ADMM:lambda0-update}
		\State Set $\sequence{\bLambda_{1, \triangle}}{k+1} \coloneqq \sequence{\bLambda_{1,\triangle}}{k} + \rho_1 \paren[big][]{\D_\mesh \sequence{W_\triangle}{k+1} - \sequence{\bD_{1, \triangle}}{k+1}}$
		\label{line:ADMM:Lambda1-update}
		\State Set $ \sequence{\blambda_{2, \edge}}{k+1} \coloneqq \sequence{\blambda_{2,\edge}}{k} + \rho_2 \paren[big][]{\sequence{\jump{W}_\edge}{k+1} - \sequence{\bd_{2, \edge}}{k+1}}$
		\label{line:ADMM:lambda2-update}
		\State Set $k \coloneqq k+1$
		\EndWhile
	\end{algorithmic}
\end{algorithm}

\subsubsection*{\texorpdfstring{Minimization \wrt $(d_0,\bD_1,\bd_2)$}{Minimization with Respect to d0, D1, d2}}

The minimization \wrt $(d_0,\bD_1,\bd_2)$ of the augmented Lagrangian \eqref{eq:ADMM:augmented-Lagrangian} decouples into three independent problems, addressed in \crefrange{line:ADMM:d0-problem}{line:ADMM:d2-problem} of \cref{algorithm:ADMM}.
Moreover, each problem further decouples into very small subproblems, one for each edge or triangle.
By completing the squares in~\eqref{eq:ADMM:augmented-Lagrangian}, one is left with simple (non-smooth) minimization problems to compute the coefficients, denoted here by $d_{0,\edge}$, $\bD_{1,\triangle}$, $\bd_2(X_{\edge, 1})$ and $\bd_2(X_{\edge, 2})$, on all edges or triangles, respectively:
\begin{subequations}
	\label{eq:d-problems}
	\begin{align}
		\text{Minimize}
		\quad
		&
		\alpha_1 \, \abs{d_{0,\edge}}
		+
		\frac{\rho_0}{2} \abs[Big]{d_{0,\edge} - \dotproduct{\eplus{\bmu}}{\paren(){\logarithm{\eplus{\bn}}{\eminus{\bn}} + h_\edge \, \eplus{W} \eplus{\bmu}}} - \frac{\lambda_{0,\edge}}{\rho_0}}^2
		\notag
		\\
		\text{\wrt\ }
		\quad
		&
		d_{0,\edge} \in \R
		,
		\\[0.5\baselineskip]
		\text{Minimize}
		\quad
		&
		\alpha_0 \, \abs{\bD_{1, \triangle}}_F
		+
		\frac{\rho_1}{2} \, \abs[Big]{\bD_{1, \triangle} - \D_\mesh W_\triangle - \frac{\bLambda_{1, \triangle}}{\rho_1}}_F^2
		\notag
		\\
		\text{\wrt\ }
		\quad
		&
		\bD_{1, \triangle} \in \tangentSpace{\bn_\triangle}[\mesh] \otimes \tangentSpace{\triangle}[\mesh] \otimes \tangentSpace{\triangle}[\mesh]
		,
		\\[0.5\baselineskip]
		\text{Minimize}
		\quad
		&
		\alpha_0 \, \abs[big]{\bd_2(X_{\edge, i})}_2
		+
		\frac{\rho_2}{2} \, \abs[Big]{\bd_2(X_{\edge, i}) - \jump{W}_\edge(X_{\edge, i}) - \frac{\blambda_2(X_{\edge, i})}{\rho_2}}_2^2
		\notag
		\\
		\text{\wrt\ }
		\quad
		&
		\bd_2(X_{\edge, i}) \in \tangentplus
	\end{align}
\end{subequations}
for $i = 1,2$.
These problems belong to the following class of convex, piecewise quadratic problems
\begin{equation}
	\text{Minimize}
	\quad
	\alpha \, \abs{d}_*
	+
	\frac{\rho}{2} \abs{d-x}_*^2
	\quad
	\text{\wrt\ }
	d
\end{equation}
on a vector space with some norm $\abs{\,\cdot\,}_*$ induced by an inner product.
The solution is given by the soft-thresholding (shrinkage) operator
\begin{equation}
	\label{eq:soft-thresholding-function}
	\shrink[Big]{x}{\frac{\alpha}{\rho}}
	\coloneqq
	\begin{cases}
		\frac{x}{\abs{x}_*} \max \set{\abs{x}_* - \frac{\alpha}{\rho}, \; 0}
		&
		\text{if }
		x \neq 0
		,
		\\
		0
		&
		\text{if }
		x = 0
		.
	\end{cases}
\end{equation}

\subsubsection*{Minimization with Respect to the Auxiliary Variable $W$}

The minimization \wrt~$W$ in~\cref{line:ADMM:W-problem} is an unconstrained, positive definite quadratic problem and thus requires the solution of a linear system of equations. 
Since $\tangentRT$ has two scalar degrees of freedom on each edge, see \eqref{eq:tangent-RT0:degrees-of-freedom}, the size of the system is twice the number of edges. 
We use a conjugate gradient method to solve this problem.
We found that preconditioning via symmetric successive over relaxation (SSOR) was sufficiently efficient.
We are using the \petsc conjugate gradient (CG) implementation  \cite{BalayAbhyankarAdamsBrownBruneBuschelmanConstantinescuDenerFaibussowitschGroppIsaacKaushikKnepleyKongMcInnesMunsonRuppSananSarichSmithZhangZhangBensonSuhDalcinEijkhoutHaplaJolivetKarpeevKrugerMayMitchellRomanZampiniMillsZhang:2024:1} with a relative tolerance of $10^{-3}$ with respect to the Euclidean norm of the residual, which typically requires about $70$~iterations.

\subsubsection*{Approximate Solution with Respect to the Vertex Coordinates}

In the augmented Lagrangian \eqref{eq:ADMM:augmented-Lagrangian}, the auxiliary variable $W$, additional variables $\bD_1,\bd_2$ and multipliers $\bLambda_1,\blambda_2$ are required to be elements of the respective tangent spaces corresponding to the piecewise constant normal vector~$\bn$, which in turn depends on the optimization variable~$\mesh$.
For the minimization \wrt $\mesh$ in \cref{line:ADMM:shape-problem} of \cref{algorithm:ADMM} with \enquote{fixed} $\sequence{W}{k+1}$, $\sequence{d_0}{k+1}$, $\sequence{\bD_1}{k+1}$, $\sequence{\bd_2}{k+1}$, $\sequence{\lambda_0}{k+1}$, $\sequence{\bLambda_1}{k+1}$, $\sequence{\blambda_2}{k+1}$, one therefore has to define how these variables behave when the mesh is updated.

In case of $\sequence{W}{k+1} \in \tangentRT$, the basis functions \eqref{eq:tangent-RT0:basis} depend directly on the mesh, which means that they automatically adapt when $\mesh$ is deformed.
We simply leave the coefficients unchanged.
For $\sequence{\bD_1}{k+1},\sequence{\bLambda_1}{k+1} \in \DG{0}(\mesh, \tangentBundle[\sphere] \otimes \tangentBundle[\mesh] \otimes \tangentBundle[\mesh], \sequence{\bn}{k}) $ and $\sequence{\bd_2}{k+1}, \sequence{\blambda_2}{k+1} \in \DG{1}( \skeleton, \tangentBundle[\sphere], \sequence{ \eplus{\bn} }{k} )$ parallel transports to the tangent spaces after the update will be used.
This is discussed further in the respective paragraph for the parallel transports.

Even with these dependencies, the minimization of the augmented Lagrangian $\cL_\rho$ with respect to $\mesh$ is still smooth and can be carried out by standard techniques of unconstrained optimization.
In particular we use a globalized, inexact, truncated Newton-CG scheme similar to what is described, \eg, in \cite[p.49]{UlbrichUlbrich:2012:1}.
First- and second-order derivatives of the augmented Lagrangian with respect to the mesh coordinates are evaluated using a combination of algorithmic differentiation (AD) and hand-coded derivatives.
The implementation details match those in our recent publication \cite[Section~4]{BaumgaertnerBergmannHerzogSchmidtVidalNunezWeiss:2025:1}, which only differs in the objective function.

\subsubsection*{Parallel Transport}

As described in the previous paragraph, the minimization of the augmented Lagrangian with respect to $\mesh$ in \cref{line:ADMM:shape-problem} requires parallel transports of the \enquote{fixed} variables $\sequence{\bD_1}{k+1},\sequence{\bLambda_1}{k+1} \in \DG{0}(\mesh, \tangentBundle[\sphere] \otimes \tangentBundle[\mesh] \otimes \tangentBundle[\mesh], \sequence{\bn}{k}) $ and $\sequence{\bd_2}{k+1}, \sequence{\blambda_2}{k+1} \in \DG{1}( \skeleton, \tangentBundle[\sphere], \sequence{ \eplus{\bn} }{k} )$ from tangent spaces corresponding to the previous iterate $\sequence{\mesh}{k}$ to tangent spaces corresponding to the current~$\mesh$.
For $\sequence{\bd_2}{k+1}$ this is achieved by replacing every occurrence of $\sequence{\bd_2}{k+1}(X_{E,i})$ in the augmented Lagrangian \eqref{eq:ADMM:augmented-Lagrangian} by $\parallelTransport[big]{\sequence{\eplus{\bn}}{k}}{\eplus{\bn}}(\sequence{\bd_2}{k+1}(X_{E,i}))$.
Here, $\sequence{\bn}{k} \in \DG{0}(\mesh, \sphere)$ is the normal vector field of $\sequence{\mesh}{k}$ and $\bn$ is the normal vector field of the optimization variable $\mesh$.
We replace $\sequence{\blambda_2}{k+1}(X_{E,i})$ analogously.

For the tensor-valued quantity $\sequence{\bD_1}{k+1}$, the parallel transport is slightly more complicated.
As seen in \eqref{eq:sphere:parallel-transport:2}, the action of the parallel transport from~$\sequence{\bn}{k}$ to~$\bn$ can be represented elementwise by the matrix
\begin{equation}
	M
	\coloneqq
	\paren[Big](){\id - \frac{\bn_\triangle + \sequence{\bn_\triangle}{k}}{1 + \dotproduct{\bn_\triangle}{\sequence{\bn_\triangle}{k}}} \bn_\triangle^\transp}
	.
\end{equation}
This matrix is simply applied to each axis of the tensor, which amounts to
\begin{equation}
	\label{eq:parallel-transport-of-a-tensor}
	\paren[auto](){\parallelTransport{\sequence{\bn}{k}}{\bn}(\bD_1)}_{ijk}
	\coloneqq
	\sum_{a,b,c=1}^3 (\bD_1)_{abc} \, M_{ia} \, M_{jb} \, M_{kc}
	.
\end{equation}
Then, as before, every occurrence of $\sequence{\bD_{1,T}}{k+1}$ in the augmented Lagrangian \eqref{eq:ADMM:augmented-Lagrangian} is replaced by $\parallelTransport[big]{\sequence{\bn_\triangle}{k}}{\bn_\triangle}(\sequence{\bD_{1,T}}{k+1})$ (analogously $\bLambda_{1,T}$).

After the update of $\mesh$ by \cref{line:ADMM:shape-problem}, the variables $\sequence{\bD_1}{k+1}$, $\sequence{\bLambda_1}{k+1}$, $\sequence{\bd_2}{k+1}$, $\sequence{\blambda_2}{k+1}$ are parallely transported to tangent spaces corresponding to the new iterate $\sequence{\mesh}{k+1}$ in \cref{line:ADMM:parallel-transport}.
Thereby, the coefficients of the variables are changed in order to correspond to the respective mesh iterate $\sequence{\mesh}{k+1}$.

\subsubsection*{Multiplier Update}

Finally, \crefrange{line:ADMM:lambda0-update}{line:ADMM:lambda2-update} of \cref{algorithm:ADMM} are standard multiplier updates of ADMM.

\section{Numerical Results for Mesh Denoising}
\label{section:numerical-results-for-mesh-denoising}

In this section, we present numerical experiments for mesh denoising problems.
We compare the proposed total generalized variation of the normal regularizer \eqref{eq:tgv:normal} to first-order total variation regularization \eqref{eq:tv:normal} as described in \cite{BaumgaertnerBergmannHerzogSchmidtVidalNunezWeiss:2025:1} as well as to alternate formulations from \cite{LiuLiWangLiuChen:2022:1,ZhangHeWang:2022:1}.
In the $\alpha_1$-term of their respective formulation, both \cite{LiuLiWangLiuChen:2022:1} and \cite{ZhangHeWang:2022:1} couple the Euclidean difference of the normal vectors of two adjacent triangles to an auxiliary variable in $\DG{0}(\edges, \R^{3})$.
The authors of \cite{ZhangHeWang:2022:1} then proceed to use the approach similar as in \cite{GongSchullckeKruegerZiolekZhangMuellerLisseMoeller:2018:1} for images and utilize a divergence like operator for the $\alpha_0$-term by adding up the values of the auxiliary variable on the three edges of a triangle.
Unlike the approach in \cite{GongSchullckeKruegerZiolekZhangMuellerLisseMoeller:2018:1}, however, they add an additional weight to each term in order to be closer to a full derivative and avoid spurious oscillations.
The authors of \cite{LiuLiWangLiuChen:2022:1} use different combinations of edge values over larger patches of adjacent triangles to obtain more accurate derivative information from the auxiliary variable.

Both methods use a normal filtering approach to realize their formulation for mesh denoising. This means that first, the problem
\begin{equation}
	\mathop{\operatorname{Minimize}}\limits_{\bm \in \DG{0}(\mesh, \R^3)} \frac{1}{2} \sum_{\triangle \in \triangles} \abs{\bm_\triangle - \bn_\triangle}_2^2 + \cR(\bm)
\end{equation}
is solved on the noisy mesh, where $\cR$ is the respective variant of total generalized variation for piecewise constant data from \cite{LiuLiWangLiuChen:2022:1} or \cite{ZhangHeWang:2022:1}.
Then, the vertex positions are adapted such that the normal vector $\bn$ of $\mesh$ is similar to the optimized variable $\bm$; see also \cite{ZhangDengZhangBouazizLiu:2015:1, SunRosinMartinLangbein:2007:1}.

We on the other hand do not use normal filtering and instead optimize the vertex positions of the mesh directly.
The objective function we use for the purpose of mesh denoising is
\begin{equation}
	\label{eq:mesh-denoising-with-tgv-of-the-normal-regularization}
	\cF(\mesh)
	\coloneqq
	\frac{1}{2} \sum_{\vertex \in \vertices} \abs{\bx_\vertex - \bx^\text{data}_\vertex}_2^2
	+
	\tau \sum_{\triangle \in \cT} \frac{1}{\abs{\triangle}}
	.
\end{equation}
The first term is a fidelity term in the squared $\ell_2$-norm, and the second term is a barrier term that avoids degenerately small triangles, as used in \cite{BaumgaertnerBergmannHerzogSchmidtVidalNunezWeiss:2025:1}.
The proposed total generalized variation term \eqref{eq:tgv:normal} with parameters $\alpha_0, \alpha_1$ is added to the objective.
For comparison, we also consider denoising using the first-order total variation of the normal with parameter $\beta$ using the method from \cite{BaumgaertnerBergmannHerzogSchmidtVidalNunezWeiss:2025:1}.
The values of $\alpha_0, \alpha_1$ or $\beta$ and $\tau$ are to be balanced so that the regularizer term dominates unless triangles become extremely small.
Here, we always use $\tau = 10^{-12}$.
The implementation was achieved in the finite element framework \fenics, version~2019.2.0.\footnote{The code is publicly available at \url{https://github.com/LukasBaumgaertner/tgv-of-normal-code}.}

An optimized implementation of the tangential Raviart--Thomas finite element described in \cref{subsection:tangential-Raviart-Thomas-space} is outside the scope of this work.
Nevertheless, an implementation in \fenics is possible, which has the advantage of providing automatic derivatives with respect to geometric changes.
Our experiments were carried out on a desktop computer with an AMD Ryzen~5~3600 CPU.
To give a rough idea, the computation times are given in \cref{table:runtime}.
\begin{table}[h!]
	\centering
	\begin{tabular}{lrrl}
		\toprule
		test case & number of vertices & runtime in minutes & results
		\\
		\midrule
		spheres    & \num{12231} & 156 & \cref{figure:spheres}
		\\
		cylinders  & \num{13848} & 163 & \cref{figure:cylinders}
		\\
		fandisk    & \num{6475}  & 91  & \cref{figure:fandisk:denoising}
		\\
		joint      & \num{20902} & 434 & \cref{figure:joint:denoising}
		\\
		\bottomrule
	\end{tabular}
	\caption{Runtime of $300$ iterations of \cref{algorithm:ADMM} for the different test cases.}
	\label{table:runtime}
\end{table}

Our numerical experiments are organized as follows.
In \cref{subsection:spheres-and-cylinders-meshes} we present results for simple geometries consisting of hemispheres and half-cylinders, respectively.
Featuring piecewise constant principal curvatures, these geometries are idealized showcases for using the total generalized variation of the normal vector field as regularizer.
In particular, they serve to distinguish the discrete formulation $\fetgv^2$ proposed in \eqref{eq:tgv:normal} from that in \cite{LiuLiWangLiuChen:2022:1}, referred to as meshTGV.
(We cannot compare with rTGV~\cite{ZhangHeWang:2022:1} since no implementation is available to us.)

Subsequently, we consider two real-world geometries from the literature in \cref{subsection:fandisk-mesh,subsection:joint-mesh}.
In each test, Gaussian noise with standard deviation based on the average edge length is added to the vertex positions.
In fact, we use the noisy input data from \cite{ZhangHeWang:2022:1} in order to include their results in our comparison.
We thank the authors of \cite{LiuLiWangLiuChen:2022:1} for making their method publicly available, and the authors of \cite{ZhangHeWang:2022:1} for providing us with access to their numerical results.

\subsection{Spheres and Cylinders}
\label{subsection:spheres-and-cylinders-meshes}

As mentioned above, we expect denoising problems using the TGV of the normal vector field as regularizer to perform well in denoising problems with geometries featuring piecewise constant principal curvatures, such as planes, spheres, and cylinders.
To verify this numerically, we generate a $3 \times 3$ grid of hemispheres on a flat base, using \gmsh \cite{GeuzaineRemacle:2009:1}.
Each row of spheres uses a different radius, and each column uses a different mesh resolution.
Gaussian noise with standard deviation of $0.2$ times the average edge length is then added to the mesh.
We compare our proposed model \eqref{eq:tgv:normal} with the results using the method from \cite{LiuLiWangLiuChen:2022:1}, whose implementation is publicly available.

The results are shown in \cref{figure:spheres}.

\begin{figure}[ht]
	\includegraphics[width = 0.49\linewidth]{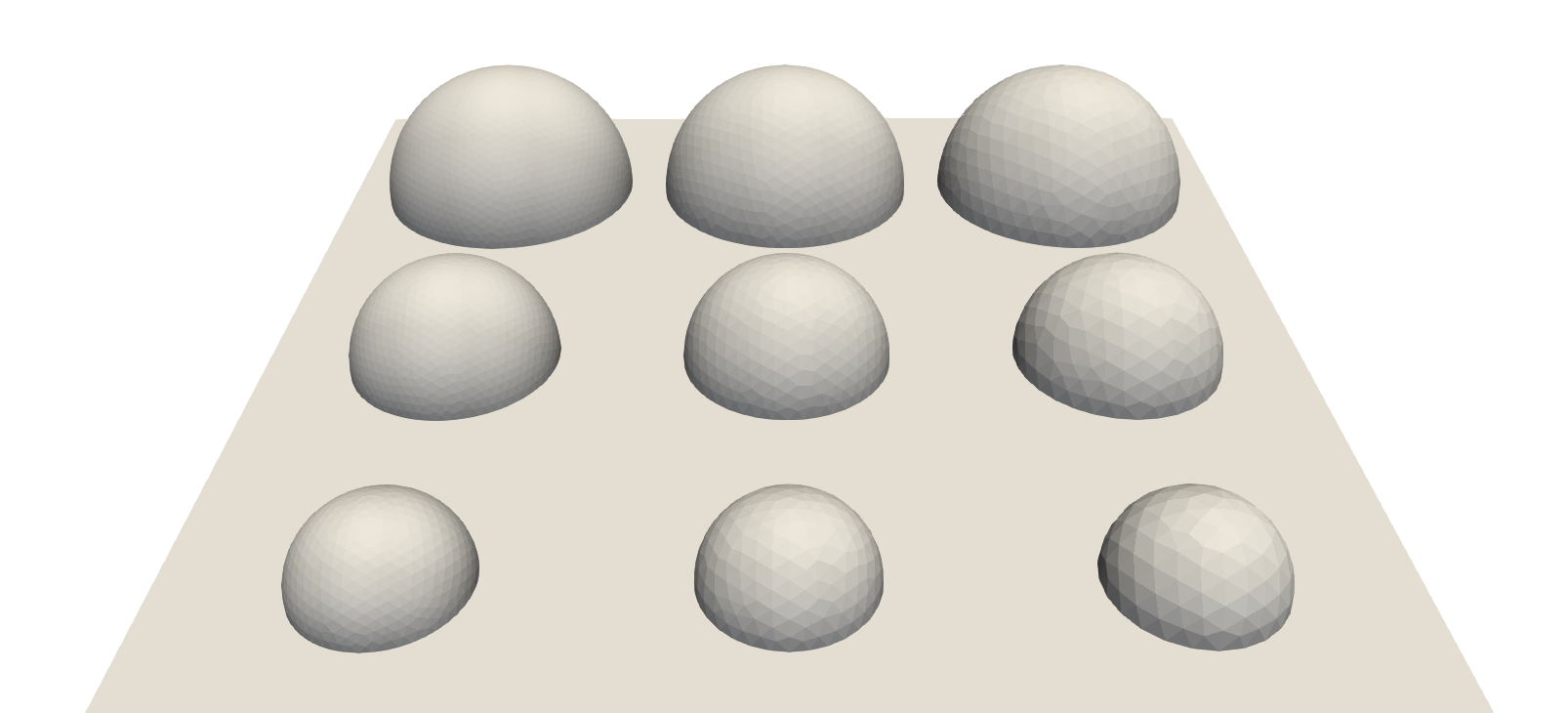}
	\hfill
	\includegraphics[width = 0.49\linewidth]{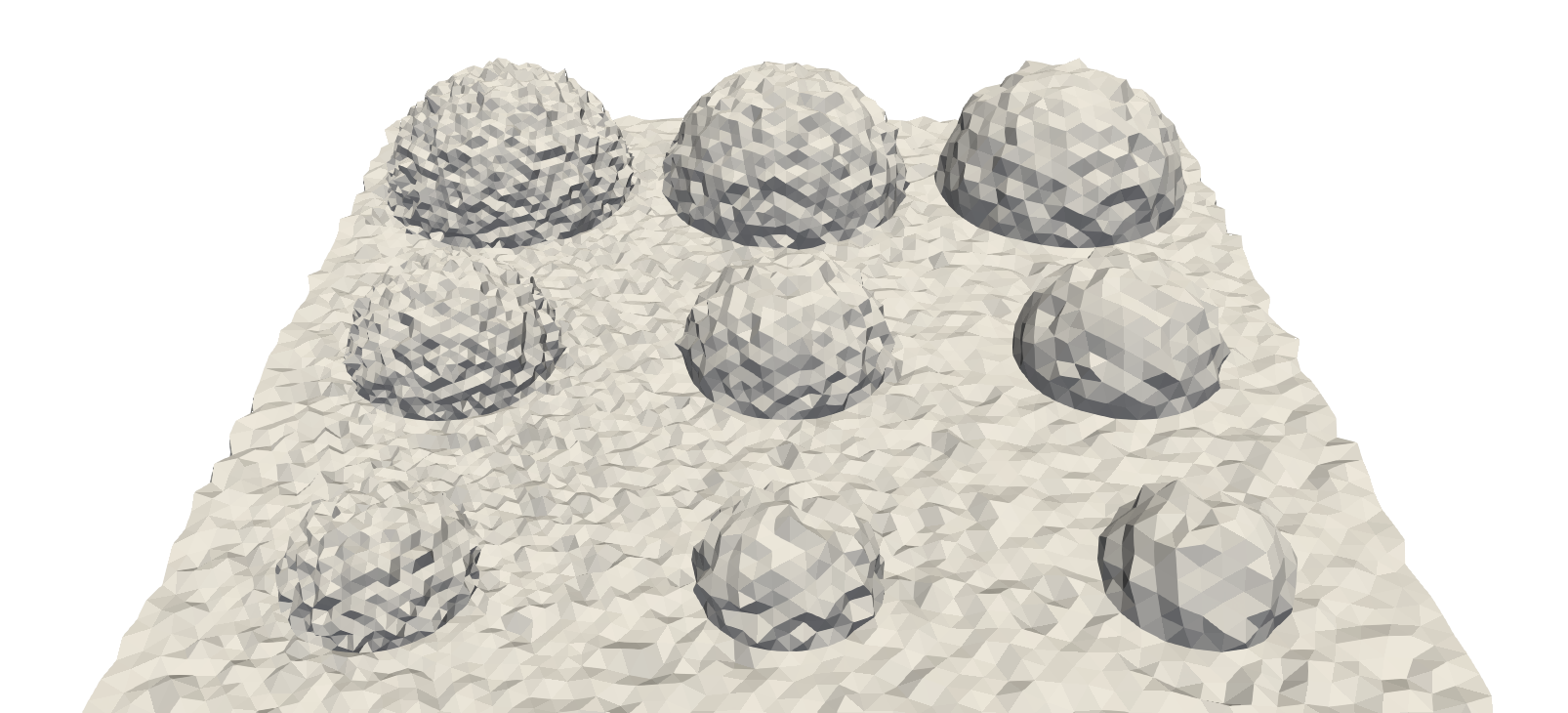}
	\\
	\includegraphics[width = 0.49\linewidth]{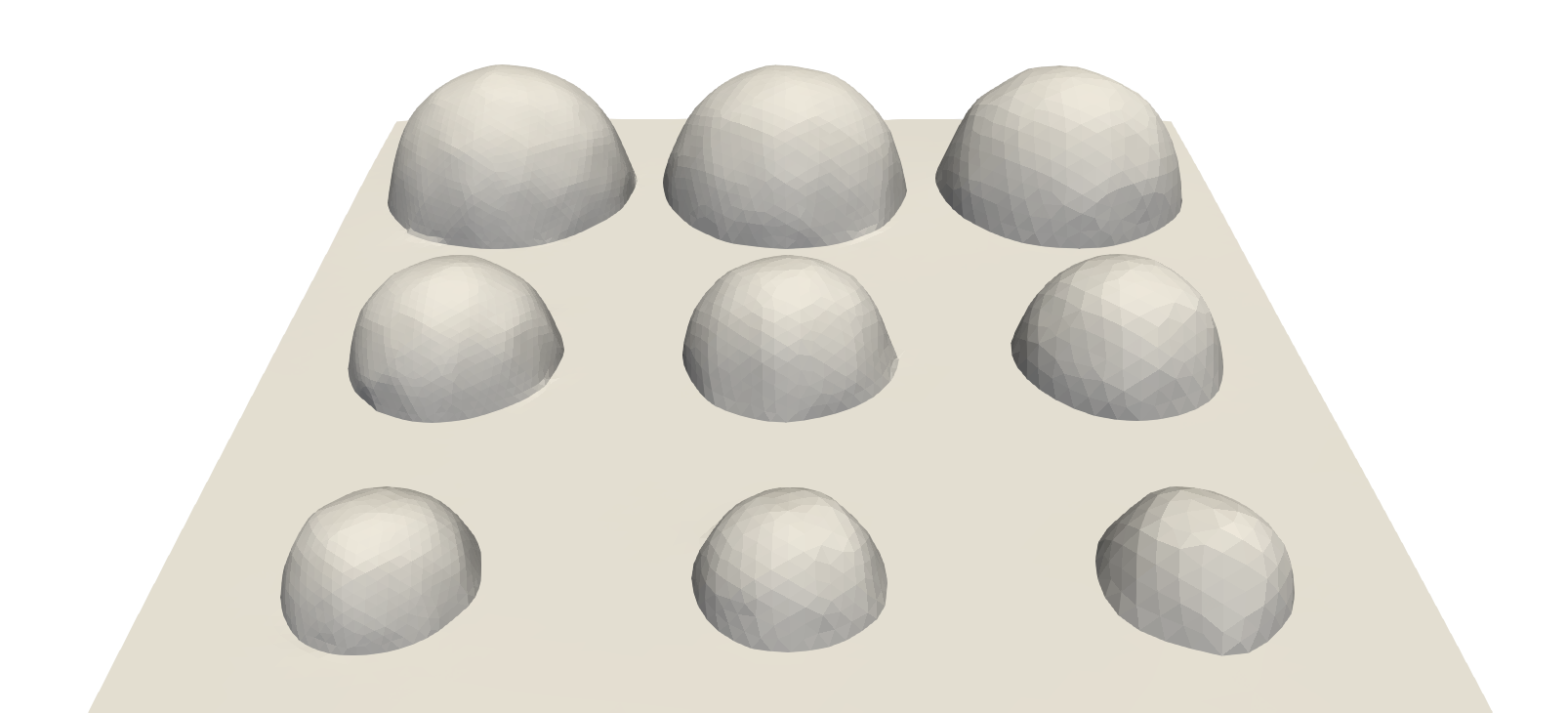}
	\hfill
	\includegraphics[width = 0.49\linewidth]{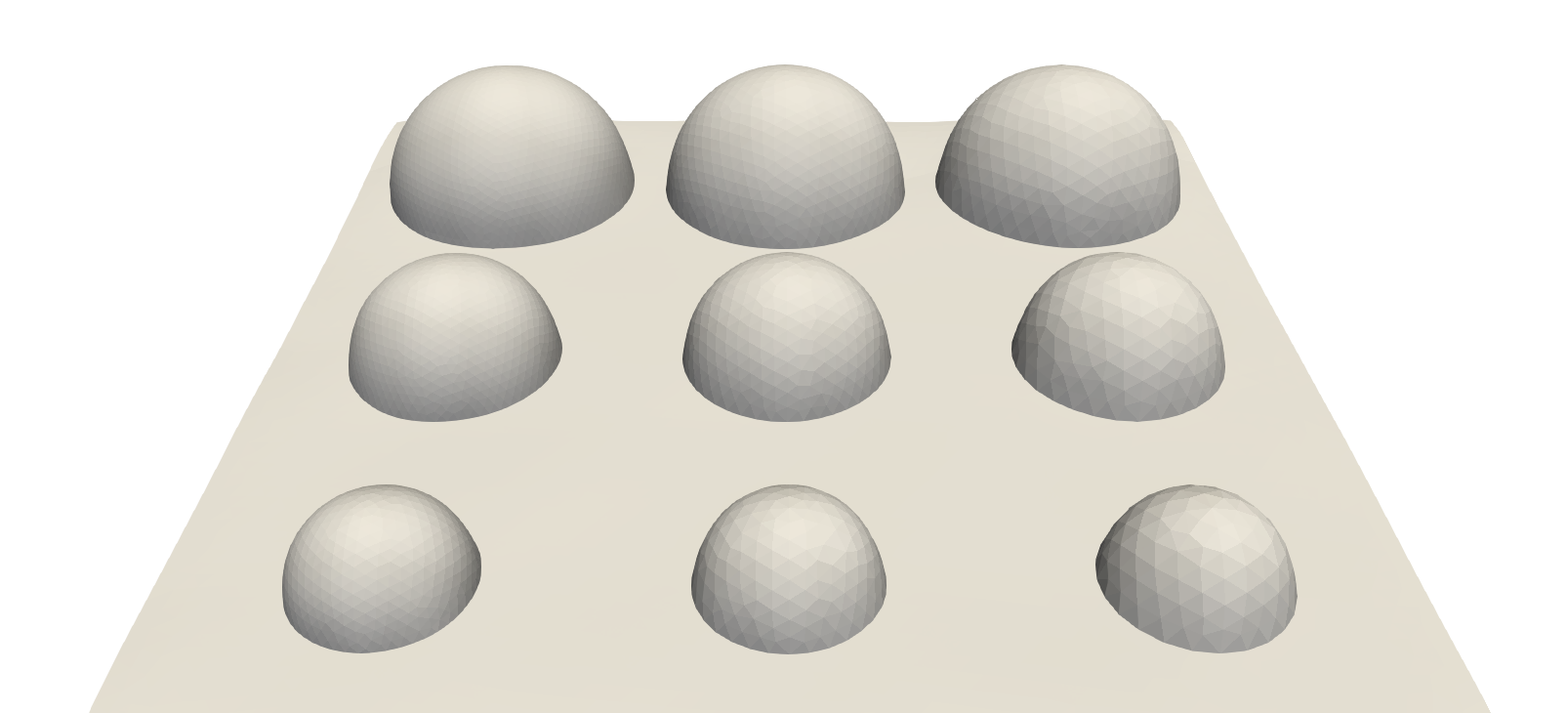}
	\caption{%
	Top row: original geometry (left), noisy geometry (right).
	Bottom row: reconstructions using meshTGV~\cite{LiuLiWangLiuChen:2022:1} (left) with $\alpha_0 = 0.2$ and $\alpha_1 = 1.1$, and the proposed $\fetgv^2$ (right) using \eqref{eq:tgv:normal} with $\alpha_0 = 3 \cdot 10^{-5}$, $\alpha_1 = 3.5 \cdot 10^{-3}$.
	}
	\label{figure:spheres}
\end{figure}
We observe that our method manages to reconstruct the spheres almost perfectly for all radii and mesh resolutions.
For the method from \cite{LiuLiWangLiuChen:2022:1} we were unable find suitable parameters to produce results of similar quality.
In particular, a slight staircasing effect always seems to remain, resulting in a less even reconstruction.

In order to assess the quality of each reconstruction quantitatively, we evaluate two distance metrics between the reconstructed and the original geometries.
As the first metric, we use the \texttt{Hausdorff Distance} function \cite{CignoniRocchiniScopigno:1998:1} in \meshlab \cite{CignoniCallieriCorsiniDellepianeGanovelliRanzuglia:2008:1}.
To compare two meshes $\Gamma_A, \Gamma_B$ and the geometries they represent, the \texttt{Hausdorff Distance function samples} each triangle of both meshes using several sample points, yielding two point clouds $P_A, P_B$.
The average Hausdorff distance between the meshes is then approximated through
\begin{equation}
	\label{eq:mesh_distance}
	\disttext{vertices}(\Gamma_A,\Gamma_B)
	\coloneqq
	\frac{1}{\abs{P_A}} \sum_{a \in P_A} \min_{b \in \Gamma_B} \abs{a-b}_2
	+
	\frac{1}{\abs{P_B}} \sum_{b \in P_B} \min_{a \in \Gamma_A} \abs{a-b}_2
	.
\end{equation}
As the second metric, we utilize the mean distance of the normal vectors, inspired by \cite{LiZhuFuHeng:2018:1}.
Since both meshes have the same connectivity, we can evaluate the distance of the normal vectors $\bn_{T_A}, \bn_{T_B}$ on a per-triangle basis, \ie,
\begin{equation}
	\label{eq:normal_distance}
	\disttext{normals}(\Gamma_A, \Gamma_B)
	\coloneqq
	\frac{1}{\abs{\cT}} \sum_{\triangle\in \triangles} \dist{\bn_{T_A}}{\bn_{T_B}}
	.
\end{equation}
For the example shown in \cref{figure:spheres}, the meshTGV method from \cite{LiuLiWangLiuChen:2022:1} achieved $\disttext{vertices} = \num{0.00160}$ and $\disttext{normals} = \num{0.0392}$, while our method reached $\disttext{vertices} = \num{0.00126}$ and $\disttext{normals} = \num{0.0247}$.
Hence, our approach performs measureably better \wrt both metrics.

We repeat the experiment using a $3 \times 3$ grid of half-cylinders, with the same changes of radii and mesh resolutions as for the hemispheres.
The results are shown in \cref{figure:cylinders}.

\begin{figure}[ht]
	\includegraphics[width = 0.49\linewidth]{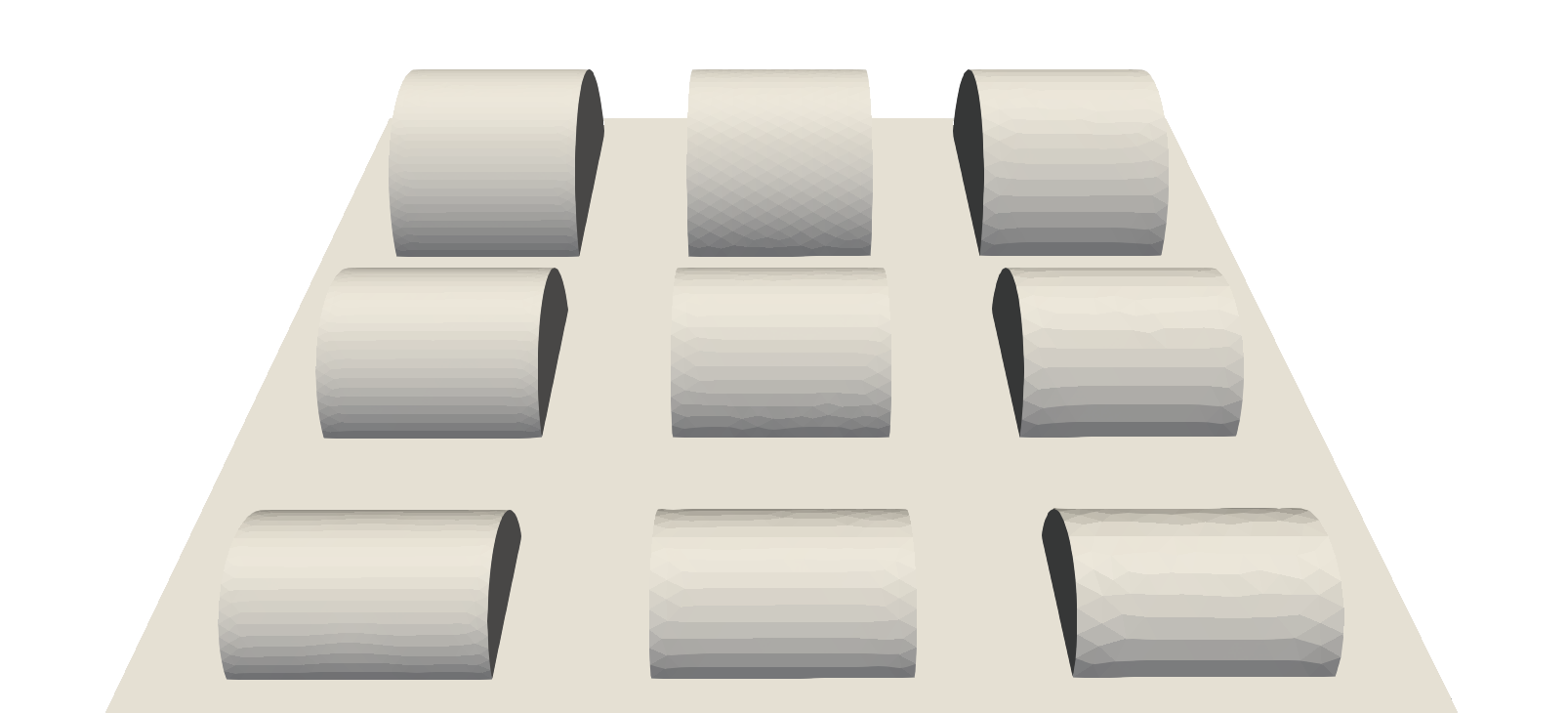}
	\hfill
	\includegraphics[width = 0.49\linewidth]{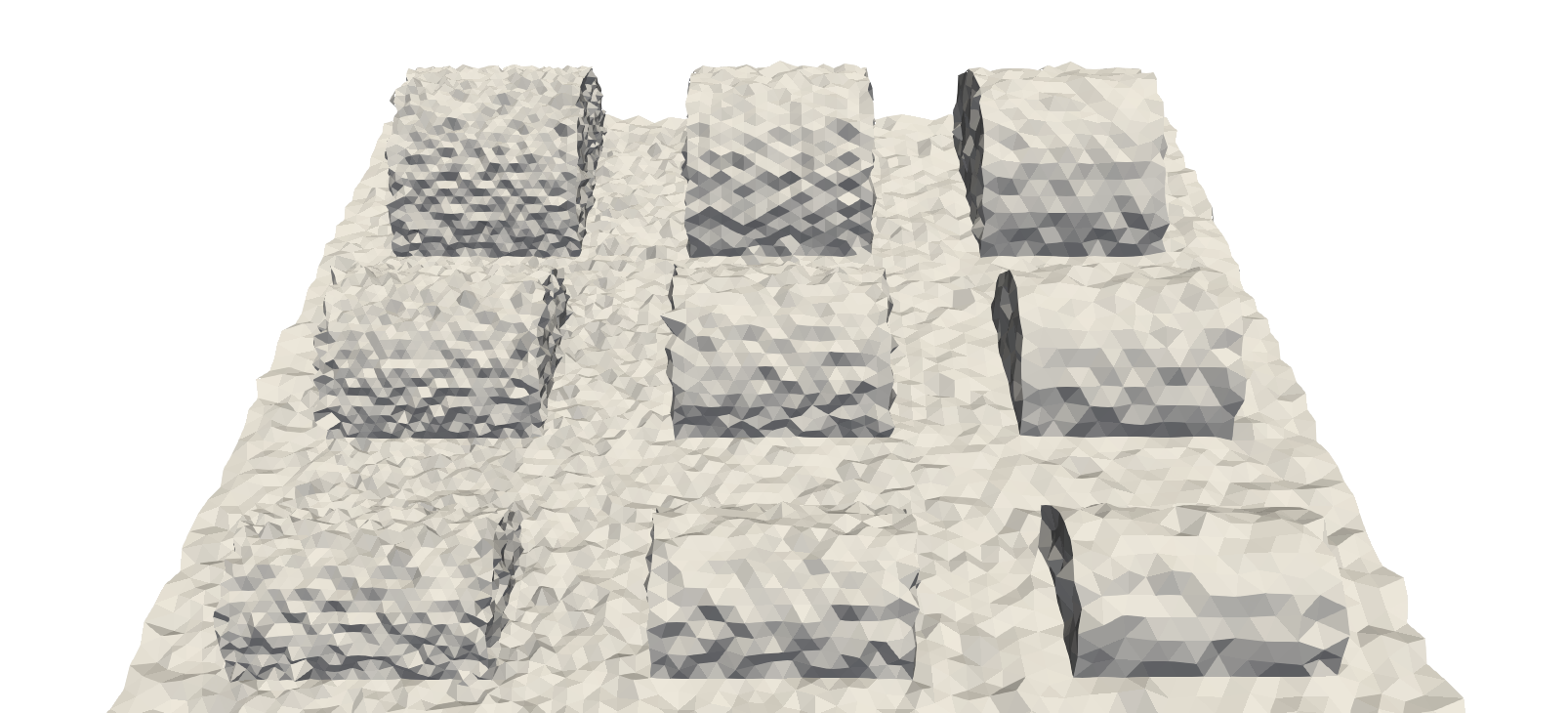}
	\\
	\includegraphics[width = 0.49\linewidth]{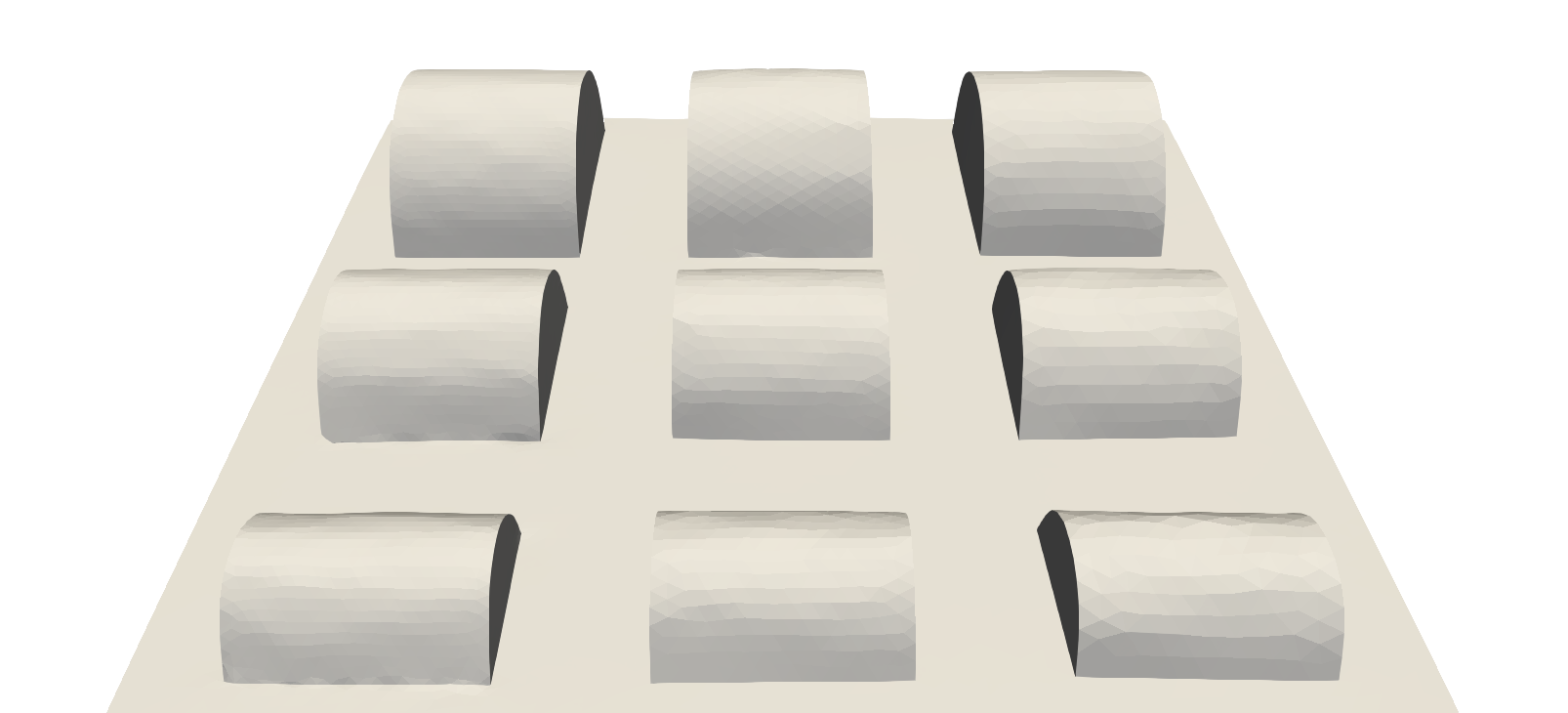}
	\hfill
	\includegraphics[width = 0.49\linewidth]{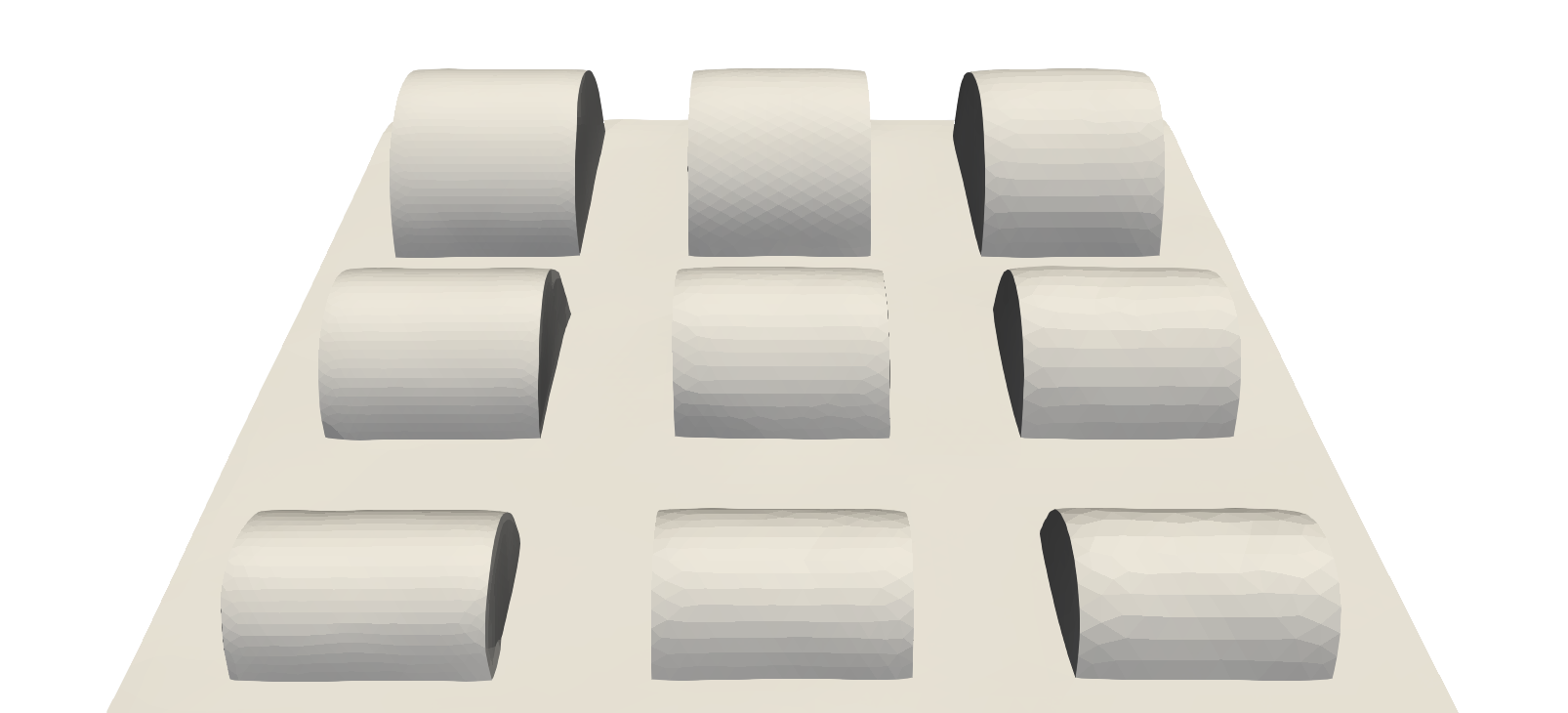}
	\caption{%
		Top row: original geometry (left), noisy geometry (right).
		Bottom row: reconstructions using meshTGV~\cite{LiuLiWangLiuChen:2022:1} (left) with $\alpha_0 = 0.2$ and $\alpha_1 = 1.5$, and the proposed $\fetgv^2$ (right) using \eqref{eq:tgv:normal} with $\alpha_0 = 3\cdot 10^{-5}$, $\alpha_1 = 3.5 \cdot 10^{-3}$.
	}
	\label{figure:cylinders}
\end{figure}

For the example shown in \cref{figure:cylinders}, the meshTGV method of \cite{LiuLiWangLiuChen:2022:1} achieved $\disttext{vertices} = \num{0.00141}$ and $\disttext{normals} = \num{0.0307}$, while our method reached $\disttext{vertices} = \num{0.00120}$ and $\disttext{normals} = \num{0.0302}$.
This time, the results of both models are quantitatively more similar and both achieve good reconstruction results.
However, minor staircasing artifacts remain visible in some of the cylinders reconstructed using the method by \cite{LiuLiWangLiuChen:2022:1}.

In summary, comparing columns in \cref{figure:spheres,figure:cylinders}, we may conclude that the method from \cite{LiuLiWangLiuChen:2022:1} appears to be fully suitable only for geometries with at least one of the principal curvatures vanishing locally.
While our approach produces just slightly better results in this case, it significantly outperforms the meshTGV method from \cite{LiuLiWangLiuChen:2022:1} in terms of reconstruction quality for the hemisphere case, were both principal curvatures are nonzero.
By comparing rows, we see that the performance of our method is independent of the mesh resolution and, in particular, does not require the regularization parameters to be resolution dependent.
For the meshTGV method of \cite{LiuLiWangLiuChen:2022:1}, the slight starcasing effect appears to be more pronounced for the highest mesh resolution.
This can be observed in the first column of the bottom left subplot of \cref{figure:cylinders}.

\subsection{Fandisk Mesh}
\label{subsection:fandisk-mesh}

The next experiment concerns the well-known fandisk mesh and features a relatively low amount of noise, which was provided via the dataset from \cite{ZhangHeWang:2022:1}.
Specifically, each component of a vertex coordinate is perturbed by Gaussian noise of standard deviation of $0.1$ times the average edge length; see \cite[Section~5.1]{ZhangHeWang:2022:1}.
The results of the three $\TGV$ models \cite{LiuLiWangLiuChen:2022:1}, \cite{ZhangHeWang:2022:1} and \eqref{eq:tgv:normal}, as well as first-order TV \eqref{eq:tv:normal} as in \cite{BaumgaertnerBergmannHerzogSchmidtVidalNunezWeiss:2025:1}, are shown in \cref{figure:fandisk:denoising}.
The distance measures to the original mesh via $\disttext{vertices}$ \eqref{eq:mesh_distance} and $\disttext{normals}$ \eqref{eq:normal_distance} are summarized in \cref{table:fandisk_distances}

\begin{figure}[ht]
	\centering
	\begin{tikzpicture}[spy using outlines = {circle, size = 2.5cm, magnification = 3, connect spies}]
		\node (img01) at (0,6.25) {%
				\includegraphics[width = 0.3\linewidth]{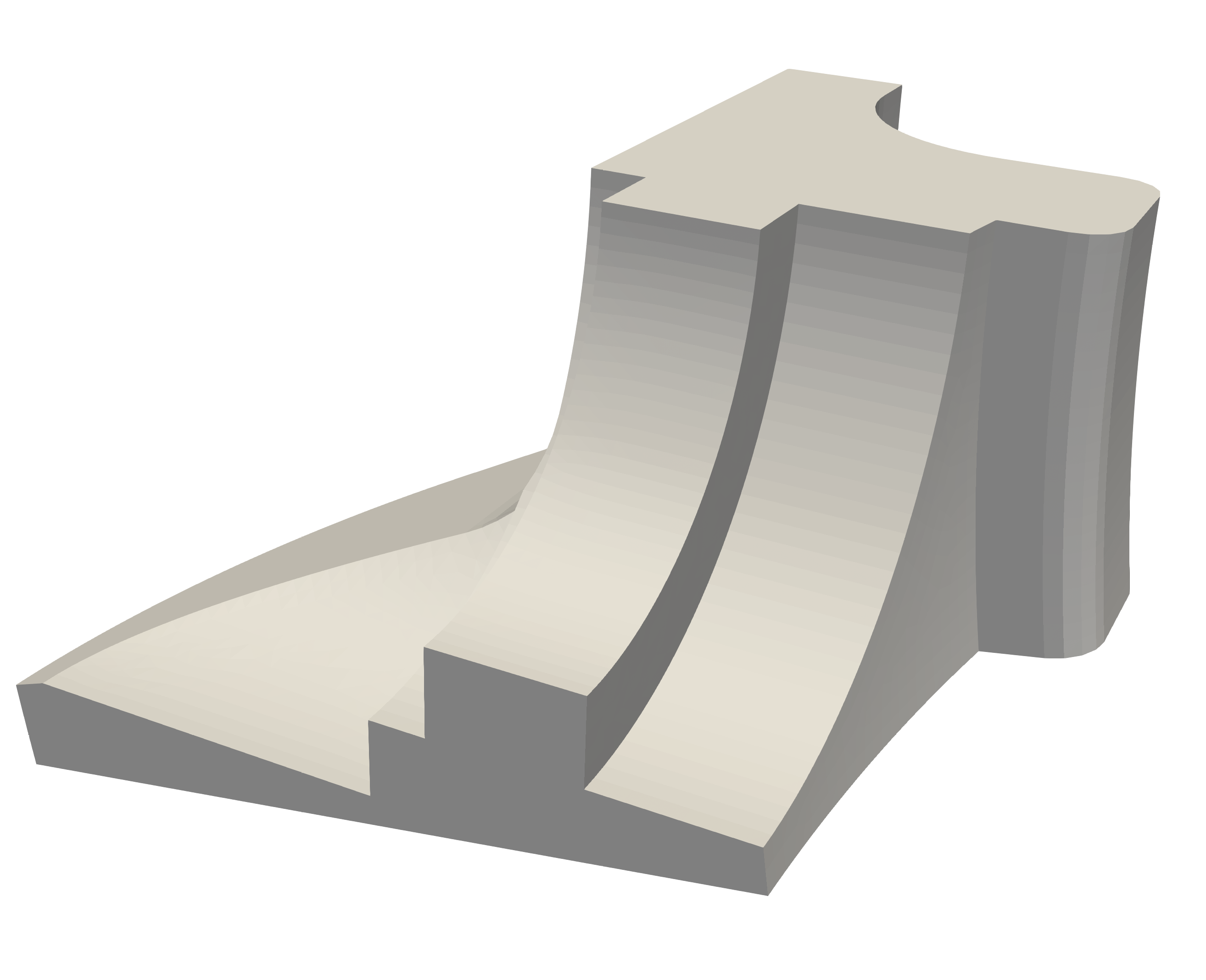}
			};
		\node (img02) at (0.33\linewidth,6.25) {%
				\includegraphics[width = 0.3\linewidth]{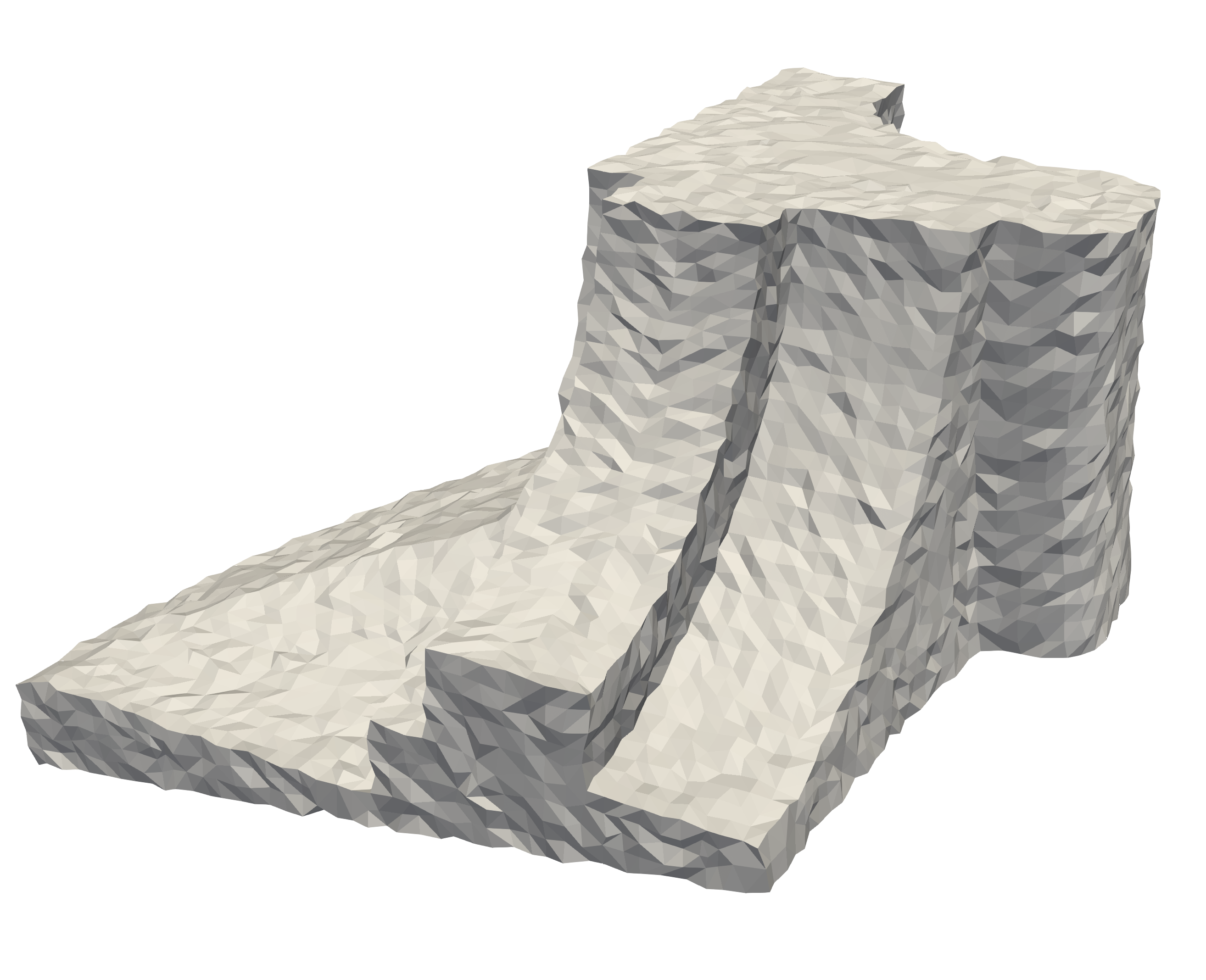}
			};
		\node (img03) at (0.66\linewidth,6.25) {%
				\includegraphics[width = 0.3\linewidth]{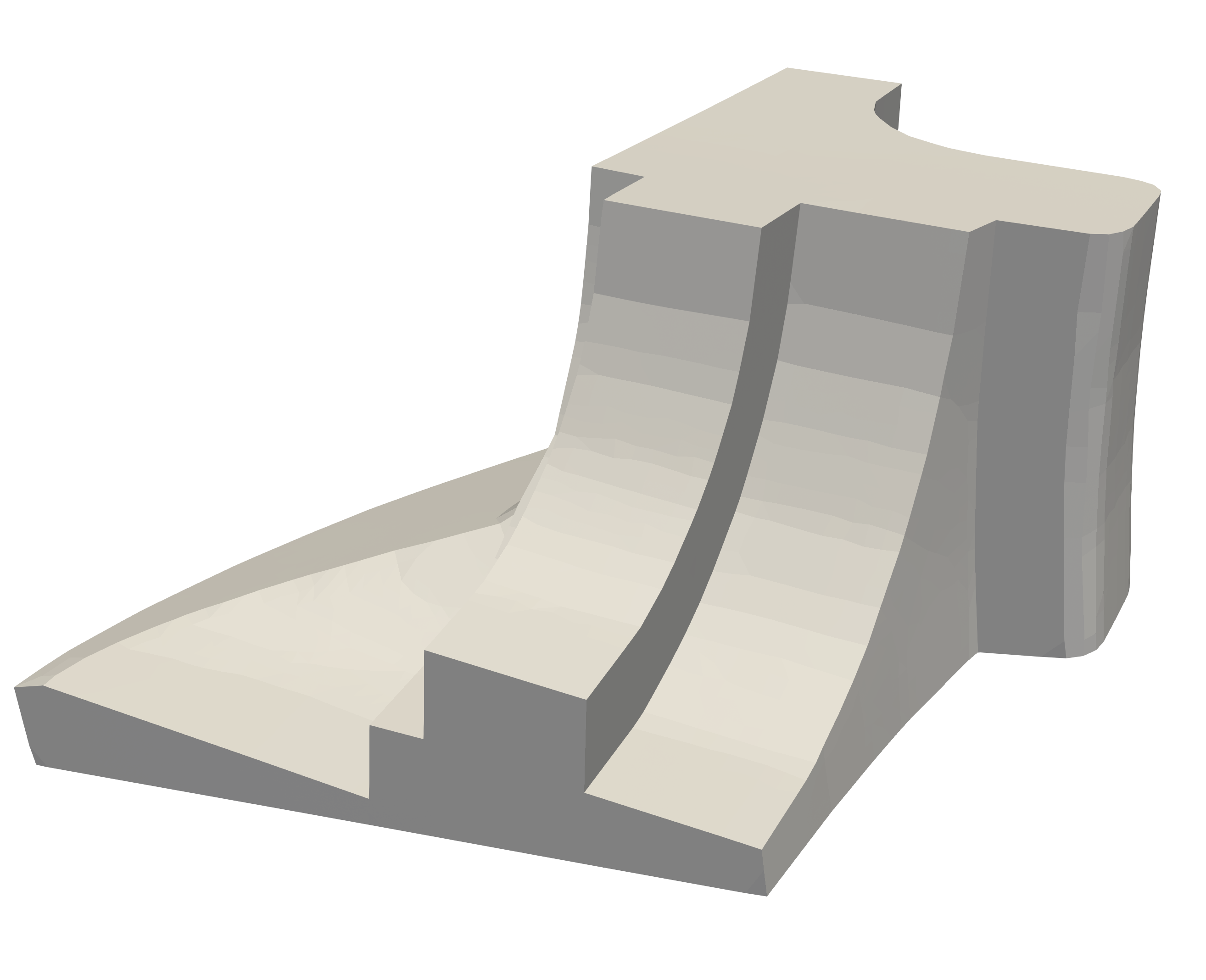}
			};
		\node (img1) at (0,0) {%
				\includegraphics[width = 0.3\linewidth]{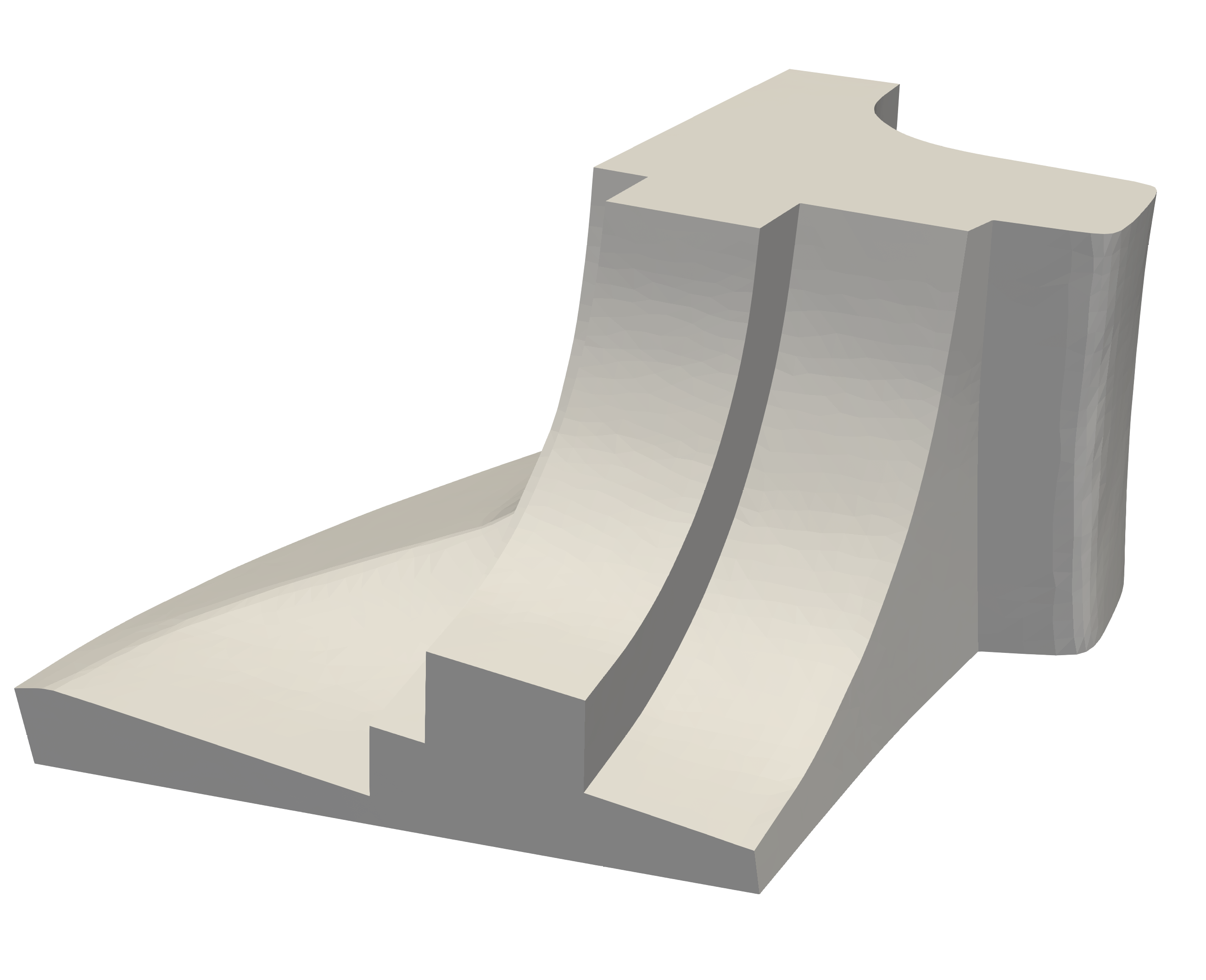}
			};
		\node (img2) at (0.33\linewidth,0) {%
				\includegraphics[width = 0.3\linewidth]{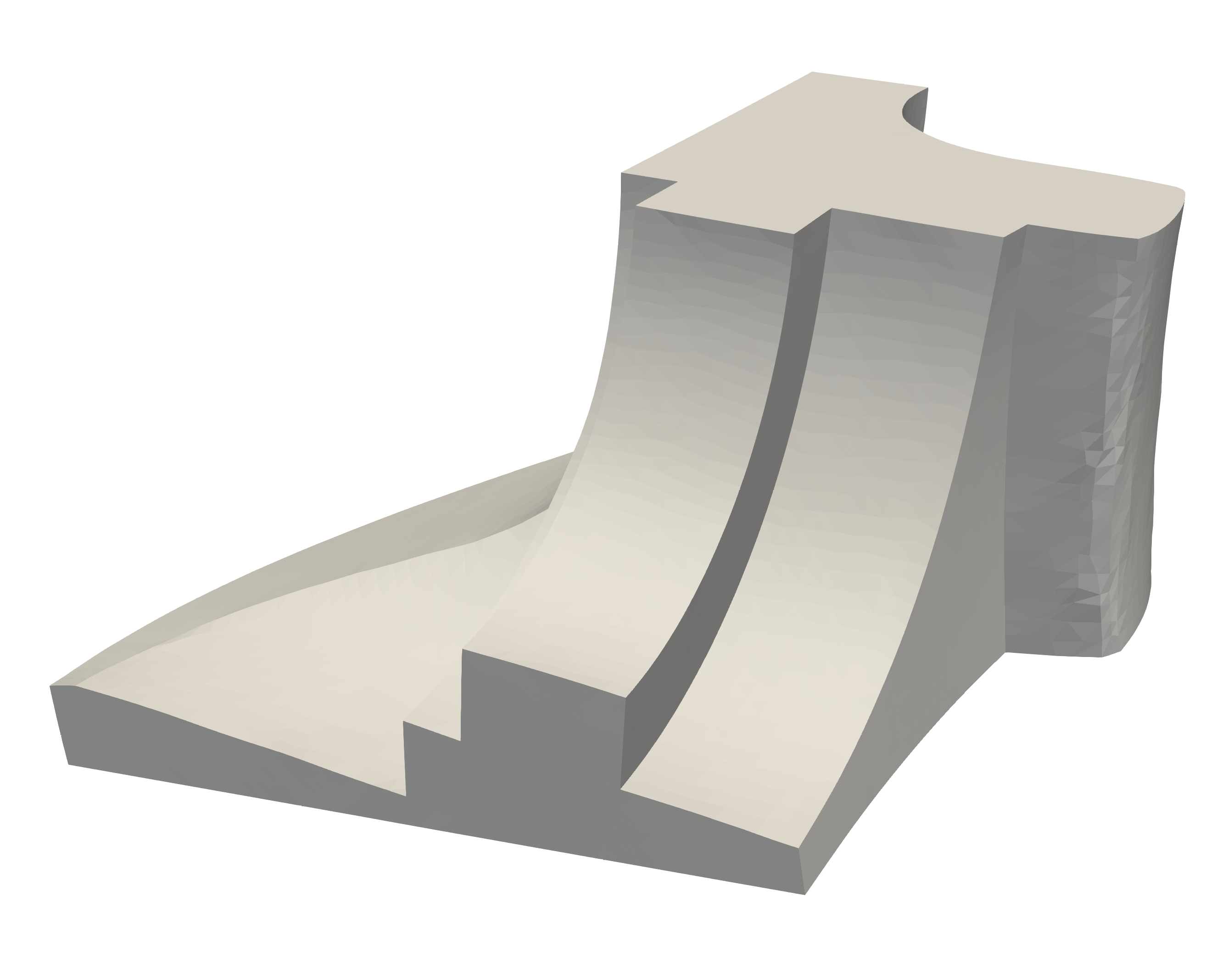}
			};
		\node (img3) at (0.66\linewidth,0) {%
				\includegraphics[width = 0.3\linewidth]{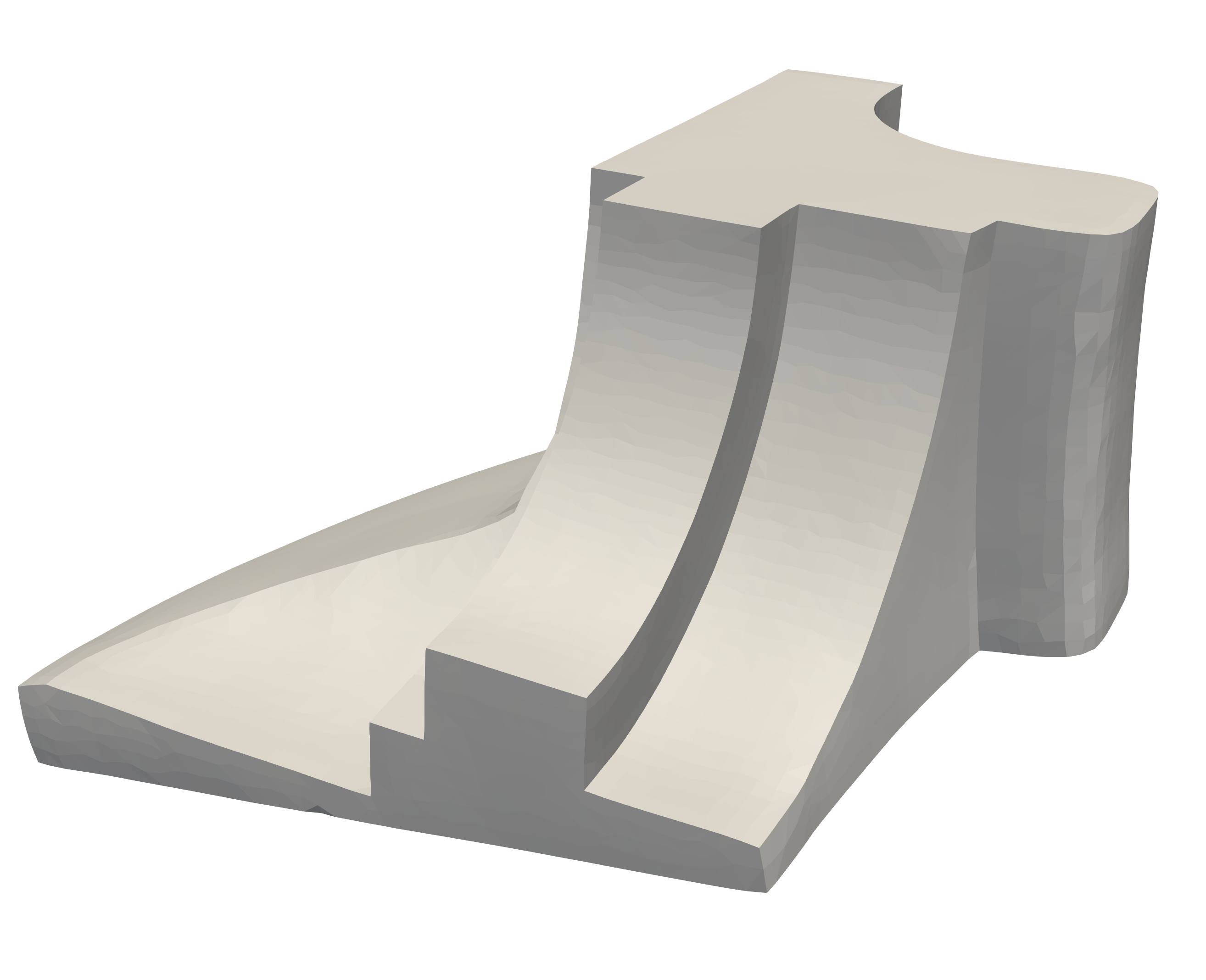}
			};
		\begin{scope}[x = {($ (img1.south east) - (img1.south west) $ )}, y = {( $ (img1.north west) - (img1.south west)$ )}, shift = {(img1.south west)}]
			%
			\node (spy1n) at (0.19\linewidth,0.53) {};
			\coordinate (spy1nto) at (.120\linewidth,1.25);
			\spy [black!100,thick] on (spy1n) in node at (spy1nto);

			\node (spy2n) at (0.33\linewidth + 0.20\linewidth,0.53) {};
			\coordinate (spy2nto) at (.3736\linewidth,1.25);
			\spy [black!100,thick] on (spy2n) in node at (spy2nto);

			\node (spy3n) at (0.66\linewidth + 0.19\linewidth,0.53) {};
			\coordinate (spy3nto) at (.6263\linewidth,1.25);
			\spy [black!100,thick] on (spy3n) in node at (spy3nto);

			\node (spy4n) at (0.66\linewidth + 0.19\linewidth,2.03) {};
			\coordinate (spy4nto) at (.880\linewidth,1.25);
			\spy [black!100,thick] on (spy4n) in node at (spy4nto);%
		\end{scope}
	\end{tikzpicture}
	\caption{%
		Top row: original fandisk geometry (left), noisy geometry (middle) and $\TV$ reconstruction (right) using \eqref{eq:tv:normal} from \cite{BaumgaertnerBergmannHerzogSchmidtVidalNunezWeiss:2025:1} with $\beta = 2 \cdot 10^{-2}$.
		Bottom row: reconstructions using meshTGV~\cite{LiuLiWangLiuChen:2022:1} (left) with $\alpha_0 = 0.2$ and $\alpha_1 = 1.2$, rTGV~\cite{ZhangHeWang:2022:1} (middle), and the proposed $\fetgv^2$ (right) using \eqref{eq:tgv:normal} with $\alpha_0 = 10^{-5}$, $\alpha_1 = 10^{-3}$.
	}
	\label{figure:fandisk:denoising}
\end{figure}

\makeatletter
\begin{table}[h]
	\centering
	\begin{tabular}{lSSSS}
		\toprule
		&
		\ltx@ifclassloaded{siamonline250106}{%
			{TV~\cite{BaumgaertnerBergmannHerzogSchmidtVidalNunezWeiss:2025:1}}
		}{%
			{TV}
		}
		&
		\ltx@ifclassloaded{siamonline250106}{%
			{meshTGV~\cite{LiuLiWangLiuChen:2022:1}}
		}{%
			{meshTGV}
		}
		&
		\ltx@ifclassloaded{siamonline250106}{%
			{rTGV~\cite{ZhangHeWang:2022:1}}
		}{%
			{rTGV}
		}
		&
		{our \cref{algorithm:ADMM}}
		\\
		\midrule
		 $\disttext{vertices}$ \eqref{eq:mesh_distance}
		 &
		 0.000629
		 &
		 0.00163
		 &
		 0.00336
		 &
		 0.000530
		 \\
		 $\disttext{normals}$ \eqref{eq:normal_distance}
		 &
		 0.0261
		 &
		 0.0273
		 &
		 0.0234
		 &
		 0.0210
		 \\
		 \bottomrule
	\end{tabular}
	\caption{%
		Distance measures for the fandisk test case, see \cref{figure:fandisk:denoising}.
		\ltx@ifclassloaded{siamonline250106}{}{%
			TV refers to \cite{BaumgaertnerBergmannHerzogSchmidtVidalNunezWeiss:2025:1}.
			meshTGV refers to \cite{LiuLiWangLiuChen:2022:1}.
			rTGV refers to \cite{ZhangHeWang:2022:1}.
		}
	}
	\label{table:fandisk_distances}
\end{table}
\makeatother

As expected, the first-order total variation regularization method suffers from the staircasing effect, while all three $\TGV$ models manage to reconstruct the fandisk almost perfectly (\cref{figure:fandisk:denoising}).
In particular, no staircasing effect can be seen in the curved areas.
With regards to the distance measures reported in \cref{table:fandisk_distances}, recall that both meshTGV~\cite{LiuLiWangLiuChen:2022:1} and rTGV~\cite{ZhangHeWang:2022:1} are using a normal filtering approach, while our \cref{algorithm:ADMM} as well as the TV regularization from \cite{BaumgaertnerBergmannHerzogSchmidtVidalNunezWeiss:2025:1} are using vertex tracking \eqref{eq:mesh-denoising-with-tgv-of-the-normal-regularization}.
While vertex tracking prevents unnecessary changes in the overall size of the geometry, there is no such mechanism in the normal filtering approaches in meshTGV~\cite{LiuLiWangLiuChen:2022:1} and rTGV~\cite{ZhangHeWang:2022:1}.
Therefore, the geometries generally slightly grow or shrink through these approaches, which is the reason for the less favorable values of $\disttext{vertices}$ in \cref{table:fandisk_distances}.
Our approach is also superior with respect to the $\disttext{normals}$ metric, although the scores are much closer here.

\subsection{Joint Mesh}
\label{subsection:joint-mesh}

The third example concerns the geometry of a joint.
Again, the mesh is provided by the authors of \cite{ZhangHeWang:2022:1}, who added Gaussian noise with standard deviation of $0.3$ times the average edge length.
The numerical results comparing the three $\TGV$ models as well as the first-order $\TV$ model are presented in \cref{figure:joint:denoising}.
The distance measures to the original mesh are summarized in \cref{table:joint_distances}.

\begin{figure}[ht]
	\centering
	\begin{tikzpicture}[spy using outlines = {circle, size = 2.5cm, magnification = 3, connect spies}]
		\node (img01) at (0,6.5) {%
				\includegraphics[width = 0.3\linewidth]{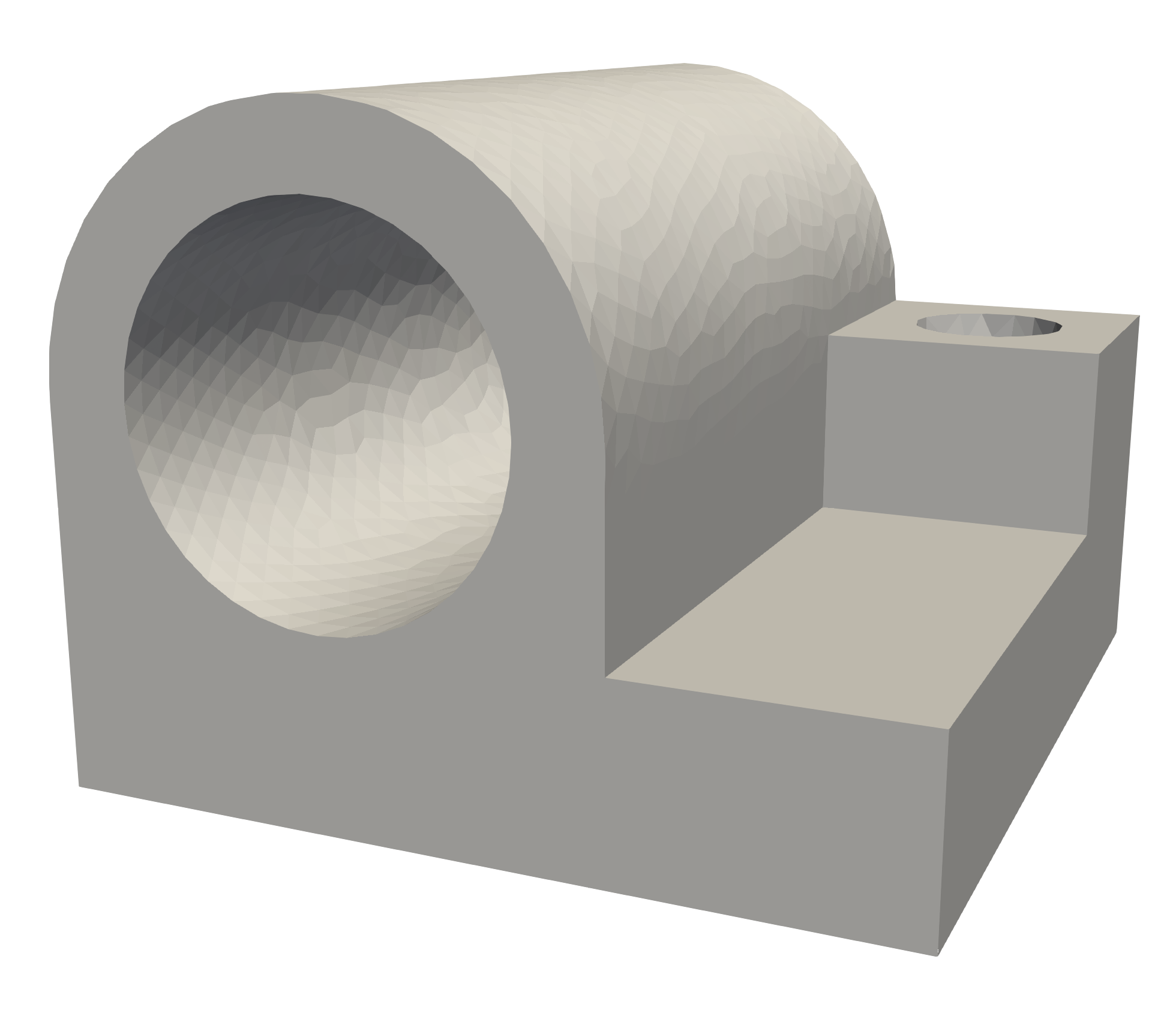}
			};
		\node (img02) at (0.33\linewidth,6.5) {%
				\includegraphics[width = 0.3\linewidth]{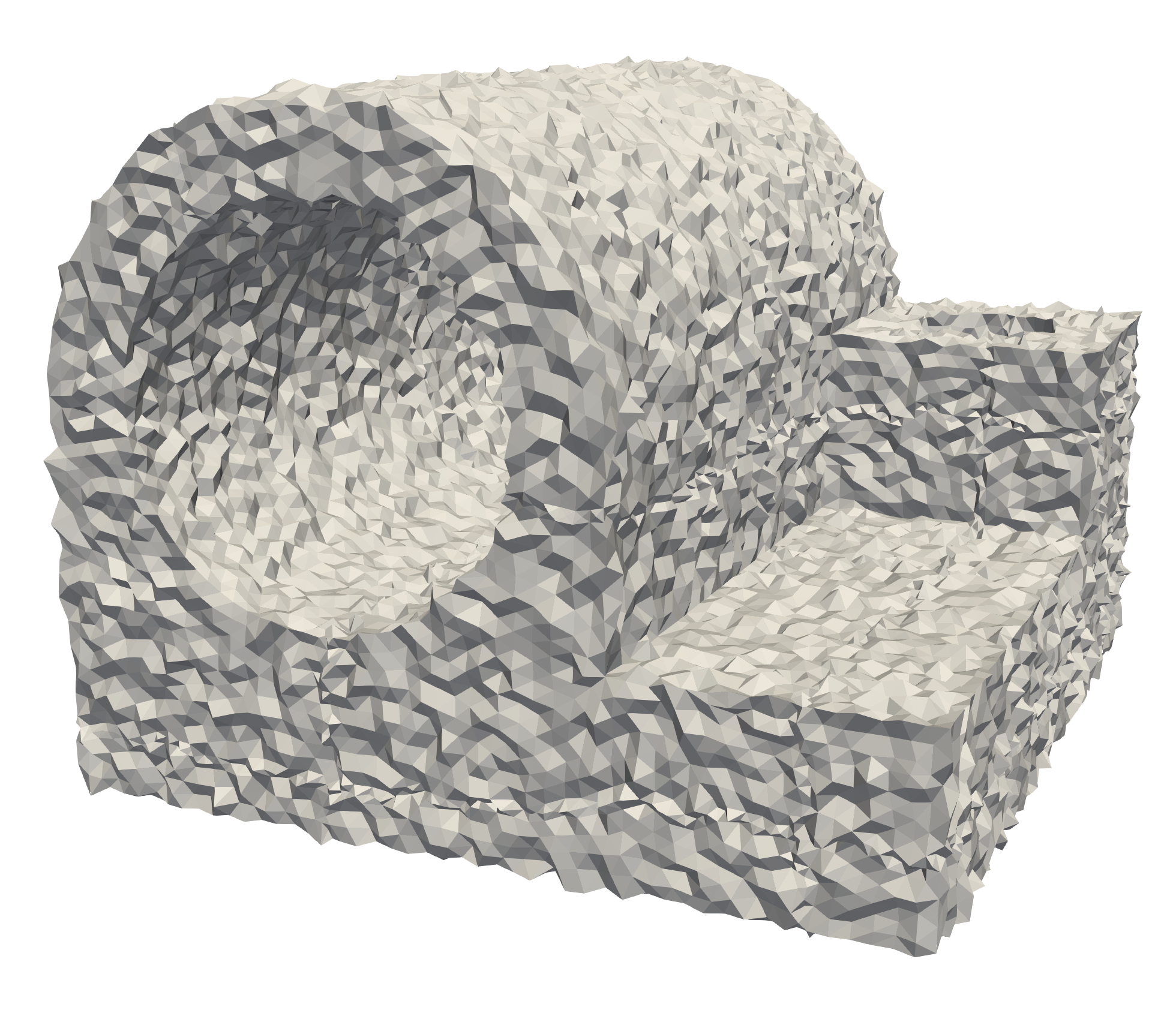}
			};
		\node (img03) at (0.66\linewidth,6.5) {%
				\includegraphics[width = 0.3\linewidth]{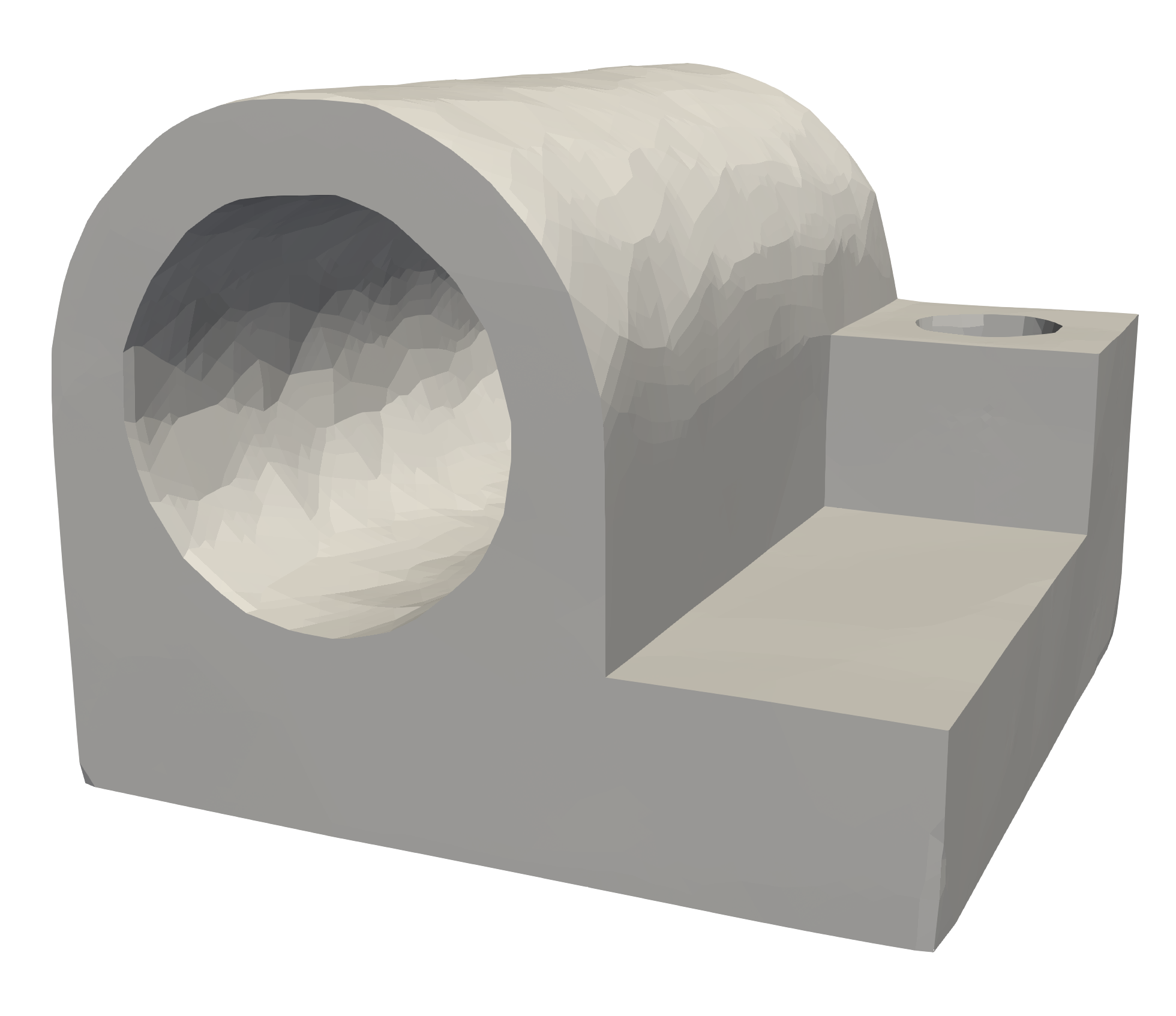}
			};
		\node (img1) at (0,0) {%
				\includegraphics[width = 0.3\linewidth]{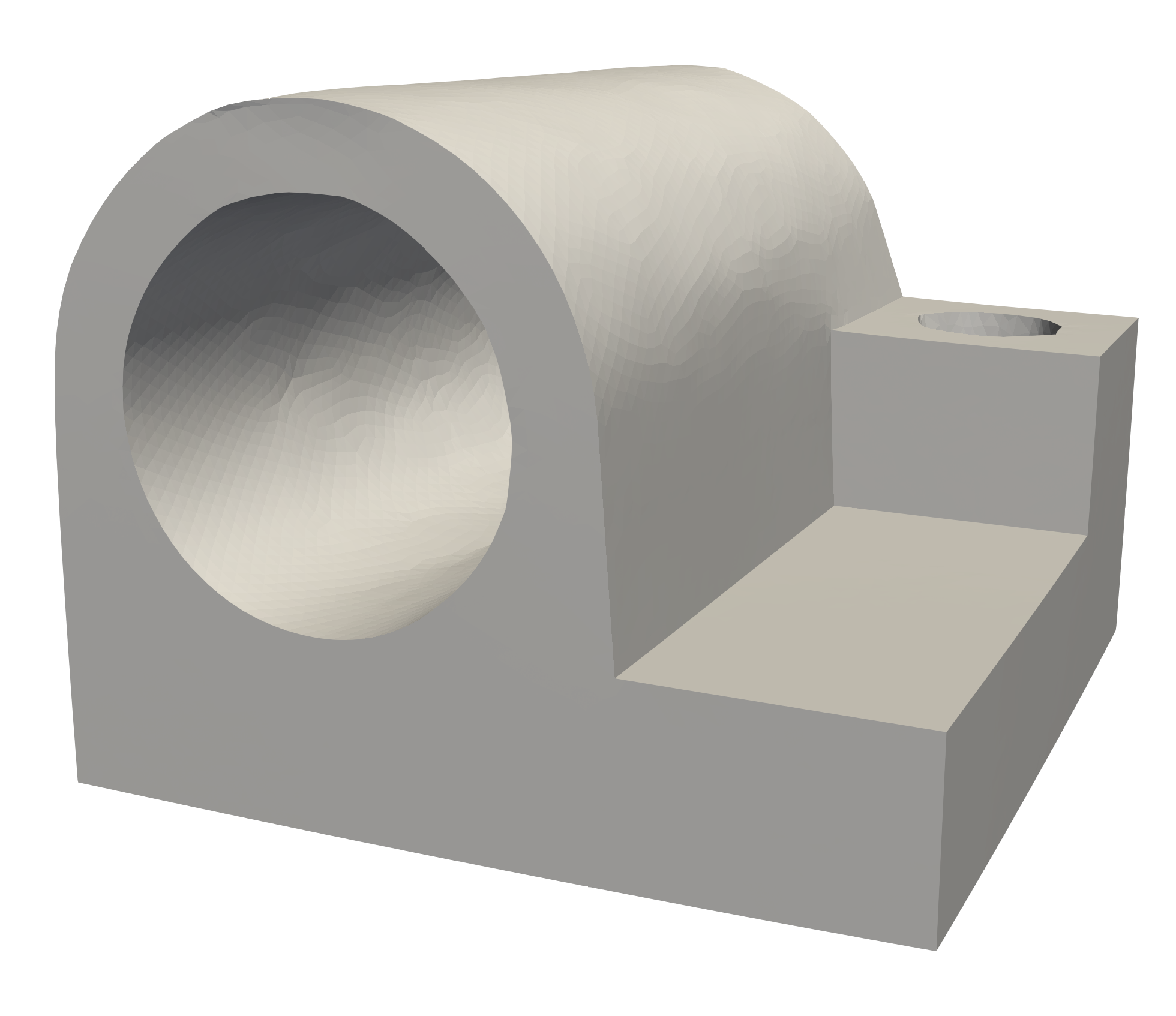}
			};
		\node (img2) at (0.33\linewidth,0) {%
				\includegraphics[width = 0.3\linewidth]{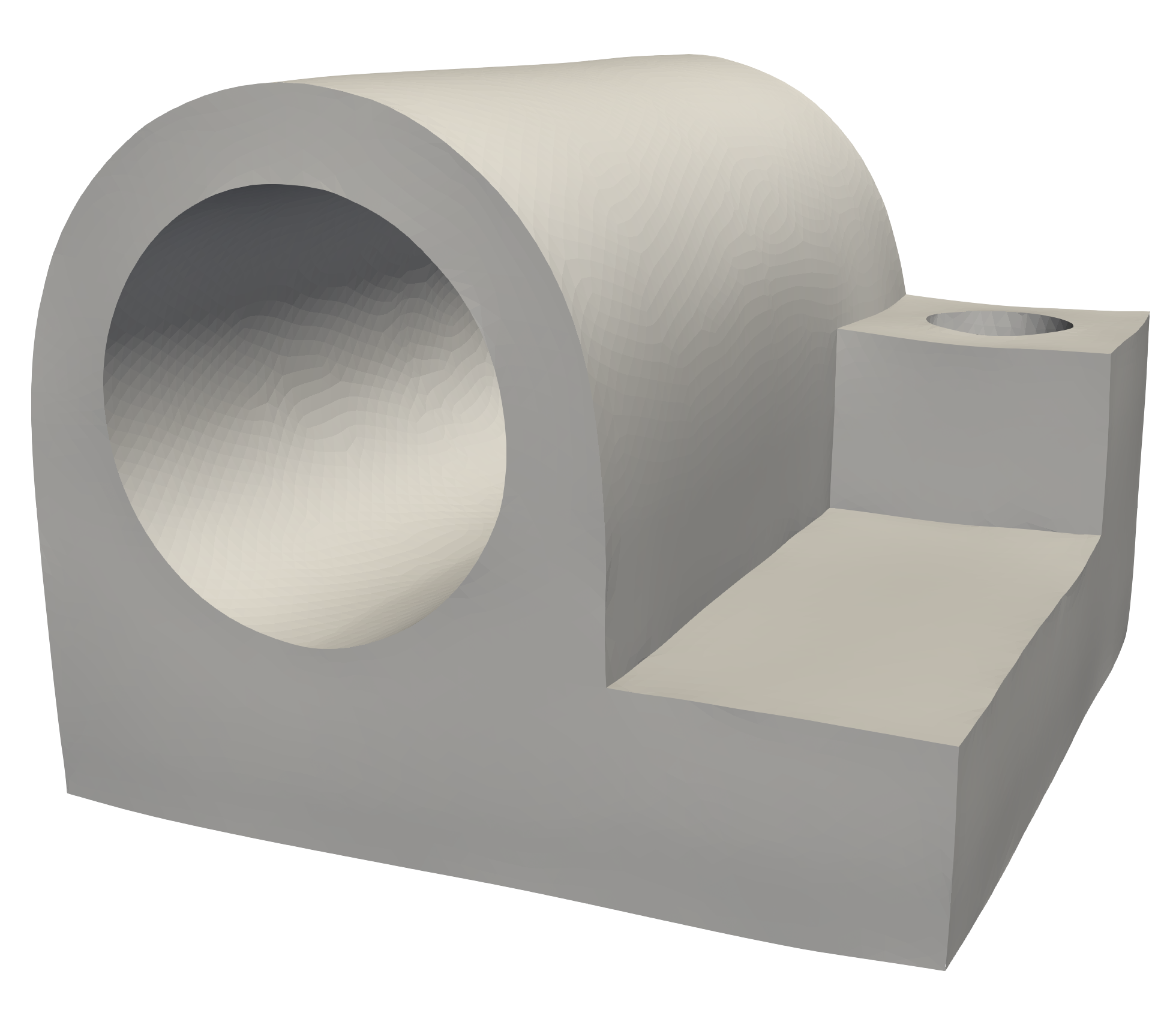}
			};
		\node (img3) at (0.66\linewidth,0) {%
				\includegraphics[width = 0.3\linewidth]{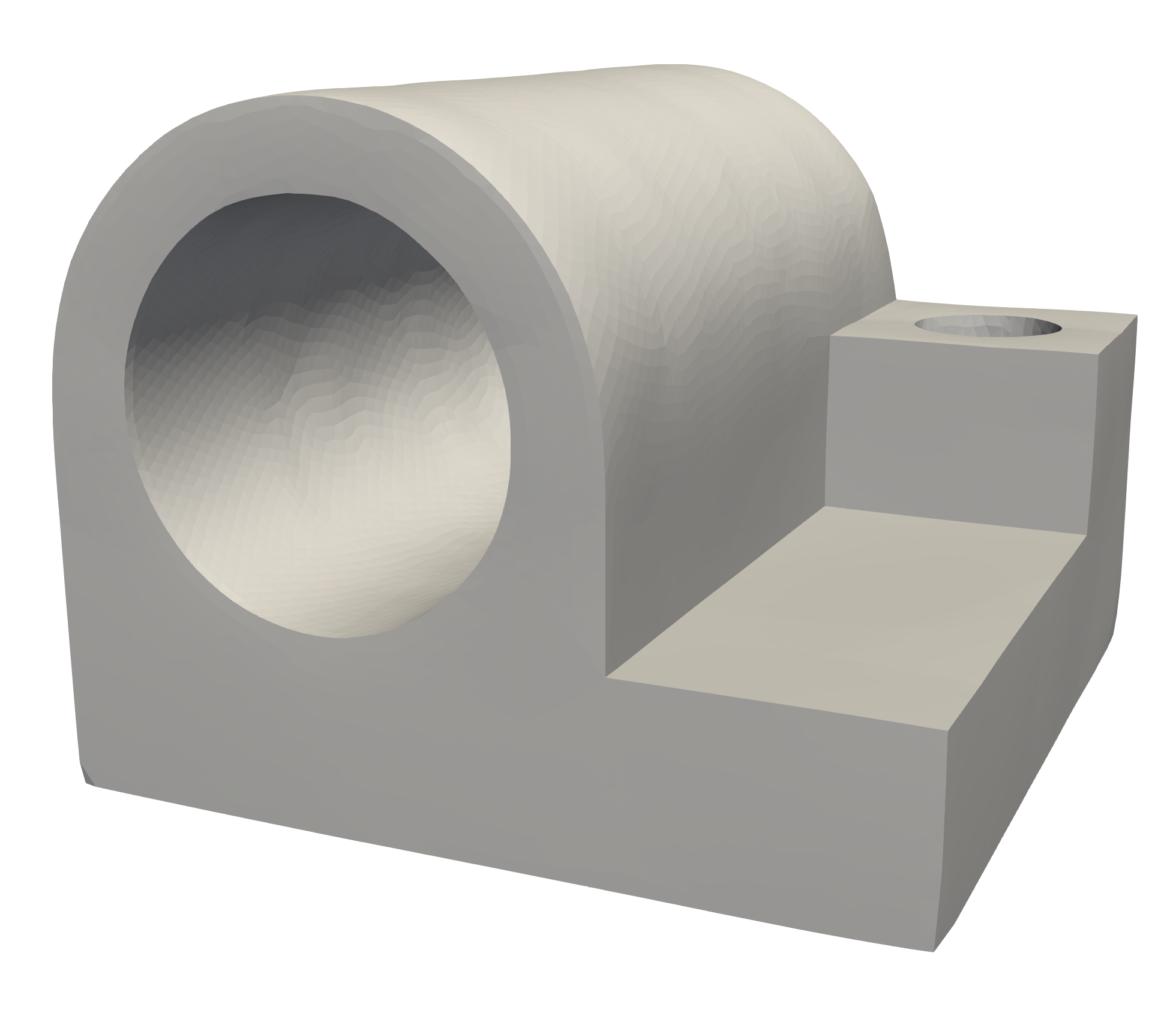}
			};
		\begin{scope}[x = {($ (img1.south east) - (img1.south west) $ )}, y = {( $ (img1.north west) - (img1.south west)$ )}, shift = {(img1.south west)}]
			%
			\node (spy1n) at (0.065\linewidth,0.5) {};
			\coordinate (spy1nto) at (.120\linewidth,1.25);
			\spy [black!100,thick] on (spy1n) in node at (spy1nto);%

			\node (spy2n) at (0.33\linewidth + 0.059\linewidth,0.5) {};
			\coordinate (spy2nto) at (.3736\linewidth,1.25);
			\spy [black!100,thick] on (spy2n) in node at (spy2nto);%

			\node (spy3n) at (0.66\linewidth + 0.065\linewidth,0.5) {};
			\coordinate (spy3nto) at (.6263\linewidth,1.25);
			\spy [black!100,thick] on (spy3n) in node at (spy3nto);%

			\node (spy4n) at (0.66\linewidth + 0.065\linewidth,2.01) {};
			\coordinate (spy4nto) at (.880\linewidth,1.25);
			\spy [black!100,thick] on (spy4n) in node at (spy4nto);%
		\end{scope}
	\end{tikzpicture}
	\caption{%
		Top row: original joint geometry (left), noisy geometry (middle) and $\TV$ reconstruction using \eqref{eq:tv:normal} from \cite{BaumgaertnerBergmannHerzogSchmidtVidalNunezWeiss:2025:1} with $\beta = 2 \cdot 10^{-2}$.
		Bottom row: reconstruction using meshTGV~\cite{LiuLiWangLiuChen:2022:1} (left) with $\alpha_0 = 0.5$ and $\alpha_1 = 2.0$, rTGV~\cite{ZhangHeWang:2022:1} (middle), and the proposed $\fetgv^2$ (right) using \eqref{eq:tgv:normal} with $\alpha_0 = 5 \cdot 10^{-5}$, $\alpha_1 = 4 \cdot 10^{-3}$.
	}
	\label{figure:joint:denoising}
\end{figure}

\makeatletter
\begin{table}[h]
	\centering
	\begin{tabular}{lSSSS}
		\toprule
		&
		\ltx@ifclassloaded{siamonline250106}{%
			{TV~\cite{BaumgaertnerBergmannHerzogSchmidtVidalNunezWeiss:2025:1}}
		}{%
			{TV}
		}
		&
		\ltx@ifclassloaded{siamonline250106}{%
			{meshTGV~\cite{LiuLiWangLiuChen:2022:1}}
		}{%
			{meshTGV}
		}
		&
		\ltx@ifclassloaded{siamonline250106}{%
			{rTGV~\cite{ZhangHeWang:2022:1}}
		}{%
			{rTGV}
		}
		&
		{our \cref{algorithm:ADMM}}
		\\
		\midrule
		 $\disttext{vertices}$ \eqref{eq:mesh_distance}
		&
		0.000817
		&
		0.00174
		&
		0.0129
		&
		0.000616
		\\
		 $\disttext{normals}$ \eqref{eq:normal_distance}
		&
		0.0382
		&
		0.0332
		&
		0.0372
		&
		0.0274
		\\
		\bottomrule
	\end{tabular}
	\caption{%
		Distance measures for the joint test case, see \cref{figure:joint:denoising}.
		\ltx@ifclassloaded{siamonline250106}{}{%
			TV refers to \cite{BaumgaertnerBergmannHerzogSchmidtVidalNunezWeiss:2025:1}.
			meshTGV refers to \cite{LiuLiWangLiuChen:2022:1}.
			rTGV refers to \cite{ZhangHeWang:2022:1}.
		}
	}
	\label{table:joint_distances}
\end{table}
\makeatother

As for the other examples, the results in \cref{figure:joint:denoising} look similar for all three $\TGV$ models, whereas the first-order $\TV$ model produces staircasing.
Similar as before, a significant change of size in the meshes is responsible for the less favorable scores of meshTGV~\cite{LiuLiWangLiuChen:2022:1} and rTGV~\cite{ZhangHeWang:2022:1} in the $\disttext{vertices}$ metric, see \cref{table:joint_distances}.
Our approach does not suffer from this problem and is thus superior in both metrics presented in \cref{table:joint_distances}.

We recognize that the extrinsic $\TGV$ models \cite{LiuLiWangLiuChen:2022:1} and \cite{ZhangHeWang:2022:1} are very capable to reconstruct the areas of (discrete) constant principal curvatures in all examples so that our intrinsic approach can not be considered superior in that regard.
However, it is worth noticing that we specifically derived our regularizer \eqref{eq:tgv:normal} to leave areas of (discrete) constant principal curvatures unpenalized, which makes our method favor piecewise planar, spherical or cylindrical areas.
Such properties have not been investigated for the approaches of \cite{LiuLiWangLiuChen:2022:1} and \cite{ZhangHeWang:2022:1}.

\section{Conclusion}
\label{section:conclusion}

We propose a discrete, intrinsic formulation of the second-order total generalized variation (TGV) of the normal vector field of oriented, triangulated meshes embedded in~$\R^3$.
Particular attention is given to the differentially geometric consequences arising from the fact that the normal vector is an element of the unit sphere.
To capture the derivative information via an auxiliary variable $W$, we introduce a new tangential Raviart--Thomas space.
At every point, a function of this space represents a mapping from the tangent space of the mesh to the tangent space of the sphere and thus matches the push-forward operator of the normal vector.

To solve minimization problems involving the new TGV regularizer, we derive an alternating direction method of multipliers (ADMM) capable of treating the non-smoothness of the problem.
We compare our approach to the extrinsic variants of discrete total generalized variation of the normal vector field from \cite{LiuLiWangLiuChen:2022:1} and \cite{ZhangHeWang:2022:1}, which treat the normal vector as an element of $\R^3$ rather than as an element of the unit sphere.
We also compare to first-order total variation of the normal \cite{BaumgaertnerBergmannHerzogSchmidtVidalNunezWeiss:2025:1} for mesh denoising problems.
While all second-order $\TGV$ models successfully remove most of the starcasing effect in the half-cylinder test case (\cref{figure:cylinders}) that would be produced by the first-order TV regularizer, our method $\fetgv^2$ performs favorably compared to meshTGV~\cite{LiuLiWangLiuChen:2022:1} and rTGV~\cite{ZhangHeWang:2022:1} in the general case where both principal curvature are non-vanishing, as demonstrated by the hemisphere example (\cref{figure:spheres}).
For real-world geometries, the differences between the $\fetgv^2$, meshTGV and rTGV are less pronounced in \cref{figure:fandisk:denoising} and \cref{figure:joint:denoising}.
However, it is worth noticing that our approach achieves a better score in the similarity measures \eqref{eq:mesh_distance} and \eqref{eq:normal_distance}; see \cref{table:fandisk_distances} and \cref{table:joint_distances}.
This is mostly because the geometries slightly grow or shrink through the normal filtering approaches of meshTGV~\cite{LiuLiWangLiuChen:2022:1} and rTGV~\cite{ZhangHeWang:2022:1}.

\section*{Acknowledgments}

We sincerely thank the authors of \cite{LiuLiWangLiuChen:2022:1} for making their method publicly available.
We also thank the authors of \cite{ZhangHeWang:2022:1} for providing us with access to their numerical results.
Parts of this paper were written while the third author was visiting the University of British Columbia, Vancouver.
He would like to thank the Department of Computer Science for their hospitality.

\appendix

\printbibliography

\end{document}